\newcommand{\msP}{\mathsf{P}}
\newcommand{\mE}{\mathbb E}
\newcommand{\mcs}{\mathcal S}
\newcommand{\mca}{\mathcal A}
\newcommand{\mcb}{\mathcal B}
\newcommand{\mcm}{\mathcal M}
\newcommand{\mcd}{\mathcal D}
\newcommand{\mcp}{\mathcal P}
\newcommand{\mf}{\mathcal F}
\newcommand{\ltwo}[1]{\left\|#1\right\|_2}
\newcommand{\lone}[1]{\left|#1\right|}
\newcommand{\lF}[1]{\left\|#1\right\|_F}
\newcommand{\lTV}[1]{\left\|#1\right\|_{TV}}
\newcommand{\mR}{\mathbb{R}}
\newtheorem{theorem}{Theorem}
\newtheorem{lemma}{Lemma}
\newtheorem{proposition}{Proposition}
\newtheorem{assumption}{Assumption}
\DeclareMathOperator*{\argmax}{argmax}
\icmltitlerunning{Doubly Robust Off-Policy Actor-Critic}
\begin{document}

\twocolumn[
\icmltitle{Doubly Robust Off-Policy Actor-Critic: Convergence and Optimality}




\begin{icmlauthorlist}
\icmlauthor{Tengyu Xu}{to}
\icmlauthor{Zhuoran Yang}{go}
\icmlauthor{Zhaoran Wang}{ed}
\icmlauthor{Yingbin Liang}{to}
\end{icmlauthorlist}

\icmlaffiliation{to}{Department of Electrical and Computer Engineering, The Ohio State University}
\icmlaffiliation{go}{ Department of Operations Research and Financial Engineering, Princeton University}
\icmlaffiliation{ed}{Departments of Industrial Engineering \& Management Sciences, Northwestern University}

\icmlcorrespondingauthor{Tengyu Xu}{xu.3260@osu.edu}

\icmlkeywords{Reinforcement Learning, Off-Policy Policy Optimization, Global Convergence}

\vskip 0.3in
]



\printAffiliationsAndNotice{}  

\begin{abstract}
Designing off-policy reinforcement learning algorithms is typically a very challenging task, because a desirable iteration update often involves an expectation over an on-policy distribution. Prior off-policy actor-critic (AC) algorithms have introduced a new critic that uses the density ratio for adjusting the distribution mismatch in order to stabilize the convergence, but at the cost of potentially introducing high biases due to the estimation errors of both the density ratio and value function. In this paper, we develop a doubly robust off-policy AC (DR-Off-PAC) for discounted MDP, which can take advantage of learned nuisance functions to reduce estimation errors. Moreover, DR-Off-PAC adopts a single timescale structure, in which both actor and critics are updated simultaneously with constant stepsize, and is thus more sample efficient than prior algorithms that adopt either two timescale or nested-loop structure. We study the finite-time convergence rate and characterize the sample complexity for DR-Off-PAC to attain an $\epsilon$-accurate optimal policy. We also show that the overall convergence of DR-Off-PAC is doubly robust to the approximation errors that depend only on the expressive power of approximation functions. To the best of our knowledge, our study establishes the first overall sample complexity analysis for a single time-scale off-policy AC algorithm.
\end{abstract}

\section{Introduction}
In reinforcement learning (RL) \cite{sutton2018reinforcement}, policy gradient and its variant actor-critic (AC) algorithms 
have achieved enormous success in various domains such as game playing \cite{mnih2016asynchronous}, Go \cite{silver2016mastering}, robotic \cite{haarnoja2018soft}, etc. However, these successes usually rely on the access to {\em on-policy} samples, i.e., samples collected online from on-policy visitation (or stationary) distribution. However, in many real-world applications, online sampling during a learning process is costly and unsafe \cite{gottesman2019guidelines}. This necessitates the use of {\em off-policy} methods, which use dataset sampled from a {\em behavior} distribution. Since the policy gradient is expressed in the form of the on-policy expectation, it is challenging to estimate the policy gradient with off-policy samples. A common approach to implement actor-critic algorithms in the off-policy setting is to simply ignore the distribution mismatch between on- and off-policy distributions \cite{degris2012off,silver2014deterministic,lillicrap2016continuous,fujimoto2018addressing,wang2016sample,houthooft2018evolved,meuleau2001exploration}
but it has been demonstrated that such distribution mismatch can often result in divergence and poor empirical performance \cite{liu2019off}.

Several attempts have been made to correct the distribution mismatch in off-policy actor-critic's update by introducing a reweighting factor in policy update \cite{imani2018off,zhang2019generalized,liu2019off,zhang2019provably,maei2018convergent}, but so far only COF-PAC \cite{zhang2019provably} and OPPOSD \cite{liu2019off} have been theoretically shown to converge without making strong assumptions about the estimation quality. Specifically, COF-PAC reweights the policy update with emphatic weighting approximated by a linear function, and OPPOSD reweights the policy with a density correction ratio learned by a method proposed in \cite{liu2018breaking}. Although both COF-PAC and OPPOSD show much promise by stabilizing the convergence, the convergence results in \cite{zhang2019provably} and \cite{liu2019off} indicate that both algorithms may suffer from a {\bf large bias error} induced by estimations of both reweighting factor and value function.

The doubly robust method arises as a popular technique to reduce such a {\bf bias error}, in which the bias vanishes as long as some (but not necessarily the full set of) estimations are accurate. Such an approach has been mainly applied to the off-policy {\em evaluation} problem \cite{tang2019doubly,jiang2016doubly,dudik2011doubly,dudik2014doubly}, and the development of such a method for solving the policy {\em optimization} problem is rather limited. \cite{huang2020importance} derives a doubly robust policy gradient for finite-horizon Markov Decision Process (MDP) and only for the on-policy setting. \cite{kallus2020statistically} proposed a doubly robust policy gradient estimator for the off-policy setting, but only for infinite-horizon averaged MDP, which does not extend easily to discounted MDP. Moreover, model-free implementation of such doubly robust policy gradient estimators typically requires the estimation of several nuisance functions via samples, but previous works proposed only methods for critic to estimate those nuisances in the finite-horizon setting, which cannot extend efficiently to the infinite-horizon setting.


{\em Thus, our first goal is to propose a novel doubly robust policy gradient estimator for infinite-horizon discounted MDP, and further design efficient model-free critics to estimate nuisance functions so that such an estimator can be effectively incorporated to yield a doubly robust off-policy actor-critic algorithm.}


On the theory side, previous work has established only the {\bf doubly robust estimation}, i.e., the policy gradient estimator is doubly robust \cite{huang2020importance,kallus2020statistically}. However, it is very unclear that by incorporating such a doubly robust estimator into an actor-critic algorithm, whether the overall convergence of the algorithm remains doubly robust, i.e., enjoys {\bf doubly robust optimality gap}. Several reasons may eliminate such a nice property. For example, the alternating update between actor and critic does not allow critics' each estimation to be sufficiently accurate, so that doubly robust estimation may not hold at each round of iteration. Furthermore, the optimality gap of the overall convergence of an algorithm depends on interaction between critics' estimation error and actor's update error as well as other sampling variance errors, so that the double robust {\em estimation} does not necessarily yield the doubly robust {\em optimality gap}.


{\em Thus, our second goal is to establish a finite-time convergence guarantee for our proposed algorithm, and show that the optimality gap of the overall convergence of our algorithm remains doubly robust.}

\subsection{Main Contributions}
{\bf Doubly Robust Estimator:} We propose a new method to derive a doubly robust policy gradient estimator for an infinite-horizon discounted MDP. Comparing with the previously proposed estimators that adjust only the distribution mismatch \cite{liu2019off,imani2018off,zhang2019generalized,zhang2019provably}, our new estimator significantly reduces the bias error when two of the four nuisances in our estimator are accurate (and is hence doubly robust). We further propose a new recursive method for critics to estimate the nuisances in the infinite-horizon off-policy setting. Based on our proposed new estimator and nuisance estimation methods, we develop a model-free doubly robust off-policy actor-critic (DR-Off-PAC) algorithm.

{\bf Doubly Robust Optimality Gap:} We provide the finite-time convergence analysis for our DR-Off-PAC algorithm with single timescale updates. We show that DR-Off-PAC is guaranteed to converge to the optimal policy, and the optimality gap of the overall convergence is also doubly robust to the approximation errors. This result is somewhat surprising, because the doubly robust policy gradient update suffers from both non-vanishing optimization error and approximation error at each iteration, whereas the double robustness of the optimality gap is independent of the optimization error. This also indicates that we can improve the optimality gap of DR-Off-PAC by adopting a powerful function class to estimate certain nuisance functions.

Our work is the first that characterizes the doubly robust optimality gap for the overall convergence of off-policy actor-critic algorithms, for which we develop new tools for analyzing actor-critic and critic-critic error interactions.

\subsection{Related Work}
The first off-policy actor-critic algorithm is proposed in \cite{degris2012off} as Off-PAC, and has inspired the invention of many other off-policy actor-critic algorithms such as off-policy DPG \cite{silver2014deterministic}, DDPG \cite{lillicrap2016continuous}, TD3 \cite{fujimoto2018addressing}, ACER \cite{wang2016sample}, and off-policy EPG \cite{houthooft2018evolved}, etc, all of which have the distribution mismatch between the sampling distribution and visitation (or stationary) distribution of updated policy, and hence are not provably convergent under function approximation settings.

In one line of studies, off-policy design adopts reward shaping via entropy regularization and optimizes over a different objective function that does not require the knowledge of behaviour sampling \cite{haarnoja2018soft,o2016combining,dai2018sbeed,nachum2017bridging,nachum2018trust,schulman2017equivalence,haarnoja2017reinforcement,tosatto2020nonparametric}. Although the issue of distribution mismatch is avoided for this type of algorithms, they do not have convergence guarantee in general settings.
The distribution mismatch issue is also avoided in a gradient based algorithm AlgaeDICE \cite{nachum2019algaedice}, in which the original problem is reformulated into a minimax problem. However, since nonconvex minimax objective is in general difficult to optimize, the convergence of AlgaeDICE is not clear.

In another line of works, efforts have been made to address the issue of distribution mismatch in Off-PAC. \cite{imani2018off} developed actor-critic with emphatic weighting (ACE), in which the convergence of Off-PAC is ameliorated by using emphatic weighting \cite{sutton2016emphatic}. Inspired by ACE and the density ratio in \cite{gelada2019off}, \cite{zhang2019generalized} proposed Geoff-PAC to optimize a new objective. Based on Geoff-PAC, \cite{lyu2020variance} further applied the variance reduction technique in \cite{cutkosky2019momentum} to develop a new algorithm VOMPS/ACE-STORM. However, since the policy gradient estimator with emphatic weighting is only unbiased in asymptotic sense and emphatic weighting usually suffers from unbounded variance, the convergence of ACE, Geoff-PAC and VOMPS/ACE-STORM are in general not clear. 
So far, only limited off-policy actor-critic algorithms have been shown to have guaranteed convergence. \cite{zhang2019provably} proposed a provably convergent two timescale off-policy actor-critic via learning the emphatic weights with linear features, and \cite{liu2019off} proposed to reweight the off-PAC update via learning the density ratio with the approach in \cite{liu2018breaking}. 
However, both convergence results in \cite{liu2019off} and \cite{zhang2019provably} suffer from bias errors of function approximation, and the two timescale update and the double-loop structure adopted in \cite{zhang2019provably} and \cite{liu2019off}, respectively, can cause significant sample inefficiency.
Recently, \cite{kallus2020statistically} proposed an off-policy gradient method with doubly robust policy gradient estimator. However, they also adopted an inefficient double-loop structure and the overall convergence of the algorithm with such an estimator was not shown to have the doubly robust property.
In contrast to previous works, we develop a new doubly robust off-policy actor-critic that provably converges with the overall convergence also being doubly robust to the function approximation errors. Our algorithm adopts a single-timescale update scheme, and is thus more sample efficient than the previous methods \cite{liu2019off,zhang2019provably,kallus2020statistically}.
\section{Background and Problem Formulation}\label{sc: bkgrd}
In this section, we introduce the background of Markov Decision Process (MDP) and problem formulation.
We consider an infinite-horizon MDP described by $(\mcs, \mca, \msP, r, \mu_0, \gamma)$, where $\mcs$ denotes the set of states, $\mca$ denotes the set of actions, and $\msP(s^\prime|s, a)$ denotes the transition probability from state $s\in\mcs$ to state $s^\prime$ with action $a\in\mca$. Note that $\lone{\mcs}$ and $\lone{\mca}$ can be infinite such that $\msP(s^\prime|s, a)$ is then a Markov kernel. Let $r(s,a,s^\prime)$ be the reward that an agent receives if the agent takes an action $a$ at state $s$ and the system transits to state $s^\prime$. Moreover, we denote $\mu_0$ as the distribution of the initial state $s_0\in\mcs$ and $\gamma\in (0,1)$ as the discount factor. Let $\pi(a|s)$ be the policy which is the probability of taking action $a$ given current state $s$. Then, for a given policy $\pi$, we define the state value function as $V_\pi(s)=\mE[\gamma^t r(s_t, a_t, s_{t+1})|s_0=s,\pi]$ and the state-action value function as $Q_\pi(s,a)=\mE[\gamma^t r(s_t, a_t, s_{t+1})|s_0=s,a_0=a,\pi]$. Note that $V_{\pi}(s) = \mE_{\pi}[Q_{\pi}(s,a)|s]$ and $Q_{\pi}(s,a)$ satisfies the following Bellman equation:
\begin{flalign}\label{eq: bellman_q}
	Q_\pi(s,a) = R(s,a) + \gamma \mcp_\pi Q_\pi(s,a),
\end{flalign}
where $R(s,a) = \mE[r(s,a,s^\prime)|s,a]$ and 
\begin{flalign*}
	\mcp_\pi Q_\pi(s,a)\coloneqq \mE_{s^\prime\sim\msP(\cdot|s,a), a^\prime\sim\pi(\cdot|s^\prime)}[Q_\pi(s^\prime,a^\prime)].
\end{flalign*}
We further define the expected total reward function as $J(\pi)=(1-\gamma)\mE[\gamma^t r(s_t, a_t, s_{t+1})|s_0\sim\mu_0,\pi]=\mE_{\mu_0}[V_\pi(s)]=\mE_{\nu_\pi}[r(s,a,s^\prime)]$, where $\nu_\pi(s,a)=(1-\gamma)\sum_{t=0}^{\infty}\gamma^t\msP(s_t=s, a_t=a|s_0\sim\mu_0,\pi)$ is the visitation distribution. The visitation distribution satisfies the following ``inverse" Bellman equation:
\begin{flalign}\label{eq: bellman_vis}
	\nu_\pi(s^\prime,a^\prime) &= \pi(a^\prime|s^\prime)[ (1-\gamma)\mu_0(s^\prime) \nonumber\\
	&+ \gamma \int_{(s,a)} \msP(s^\prime|s,a)\nu_\pi(s,a)dsda].
\end{flalign}
In policy optimization, the agent's goal is to find an optimal policy $\pi^*$ that maximizes $J(\pi)$, i.e., $\pi^*=\argmax_\pi J(\pi)$. We consider the setting in which policy $\pi$ is parametrized by $w\in\mR^d$. Then, the policy optimization is to solve the problem $\max_wJ(\pi_w)$. In the sequel we write $J(\pi_w) := J(w)$ for notational simplicity. A popular approach to solve such a maximization problem is the policy gradient method, in which we update the policy in the gradient ascent direction as $w_{t+1} = w_t + \alpha \nabla_wJ(w_t)$. A popular form of $\nabla_wJ(w)$ is derived by \cite{sutton2000policy} as
\begin{flalign}\label{pg}
	\nabla_wJ(w) = \mE_{\nu_{\pi_w}}[Q_{\pi_w}(s,a)\nabla_w\log\pi_w(a|s)].
\end{flalign}
In the on-policy setting, many works adopt the policy gradient formulation in \cref{pg} to estimate $\nabla_wJ(w)$, which requires sampling from the visitation distribution $\nu_{\pi_w}$ and Monte Carlo rollout from policy $\pi_w$ to estimate the value function $Q_{\pi_w}(s,a)$ \cite{zhang2019global,xiong2020non}.

In this paper we focus on policy optimization in the behavior-agnostic off-policy setting. Specifically, we are given access to samples from a fixed distribution $ \{(s_i, a_i, r_i, s^\prime_i)\}\sim \mcd_d$, where the state-action pair $(s_i, a_i)$ is sampled from an \textbf{\em unknown} distribution $d(\cdot): \mcs\times\mca\rightarrow[0,1]$, the successor state $s^\prime_i$ is sampled from $\msP(\cdot|s_i,a_i)$ and $r_i$ is the received reward. We also have access to samples generated from the initial distribution, i.e., $s_{0,i}\sim \mu_0$. In the behavior-agnostic off-policy setting, it is difficult to estimate $\nabla_wJ(w)$ directly with the form in \cref{pg}, as neither $\nu_{\pi_w}$ nor Monte Carlo rollout sampling is accessible. Thus, our goal is to develop an {\em efficient} algorithm to estimate $\nabla_wJ(w)$ with off-policy samples from $\mcd_d$, and furthermore, establish the convergence guarantee for our proposed algorithm. 
\section{DR-Off-PAC: Algorithm and Convergence}\label{sc: convegence}
In this section, we first develop a new doubly robust policy gradient estimator and then design a new doubly robust off-policy actor-critic algorithm.

\subsection{Doubly Robust Policy Gradient Estimator}\label{sc: nuisance}
In this subsection, we construct a new doubly robust policy gradient estimator for an infinite-horizon discounted MDP. We first denote the density ratio as $\rho_{\pi_w} = \nu_{\pi_w}(s,a)/d(s,a)$, and denote the derivative of $Q_{\pi_w}$ and $\rho_{\pi_w}$ as $d^q_{\pi_w}$ and $d^\rho_{\pi_w}$, respectively. 

Previous constructions \cite{kallus2020statistically} for such an estimator directly combine the policy gradient with a number of error terms under various filtrations to guarantee the double robustness. Such a method does not appear to extend easily to the discounted MDP. Specifically, the method in \cite{kallus2020statistically} considers finite-horizon MDP with $\gamma=1$, and further extends their result to infinite-horizon average-reward MDP. Their extension relies on the fact that the objective function $J(w)$ in average-reward MDP is independent of the initial distribution $\mu_0$. In contrast, $J(w)$ in discounted-reward MDP depends on $\mu_0$. Any direct extension necessarily results in a bias due to the lack of the initial distribution, which is unknown a priori, and hence loses the doubly robust property. 

To derive a doubly robust gradient estimator in the discounted MDP setting, we first consider a bias reduced estimator of the objective $J(w)$ with off-policy sample $(s,a,r,s^\prime)$ and $s_0$, and then take the derivative of such an estimator to obtain a doubly robust policy gradient estimator. The idea behind this derivation is that as long as the objective estimator has small bias, the gradient of such an estimator can also have small bias. More detailed discussion can be referred to the supplement material. 

Given sample $s_{0}\sim \mu_0(\cdot)$ and $(s,a,r,s^\prime)\sim \mcd_d$ and estimators $\hat{Q}_{\pi_w}$, $\hat{\rho}_{\pi_w}$, $\hat{d}^{q}_{\pi_w}$ and $\hat{d}^{\rho}_{\pi_w}$, our constructed doubly robust policy gradient error is given as follows. 
	{\small \begin{flalign}
	&G_{\text{DR}}(w) \nonumber\\
	&= (1-\gamma)\Big(\hat{Q}_{\pi_w}(s_{0},a_{0})\nabla_w\log\pi_w(a_{0}|s_{0}) + \hat{d}^q_{\pi_w}(s_{0},a_{0})\Big) \nonumber\\
	&+ \hat{d}^\rho_{\pi_w}(s,a)\left(r(s,a,s^\prime) - \hat{Q}_{\pi_w}(s,a) + \gamma \hat{Q}_{\pi_w}(s^\prime,a^\prime)\right) \nonumber\\
	&+ \hat{\rho}_{\pi_w}(s,a)\Big[- \hat{d}^q_{\pi_w}(s,a) \nonumber\\
	&+ \gamma \Big( \hat{Q}_{\pi_w}(s^\prime,a^\prime)\nabla_w\log\pi_w(a^\prime|s^\prime) + \hat{d}^q_{\pi_w}(s^\prime,a^\prime) \Big)\Big],\label{dr_pg}
	\end{flalign}}
where $a_{0}\sim \pi_w(\cdot|s_{0})$ and $a^\prime\sim\pi_w(\cdot|s^\prime)$.
The following theorem establishes that our proposed estimator $G_{\text{DR}}$ satisfies the doubly robust property.
\begin{theorem}\label{thm1}
The bias error of estimator $G_{\text{DR}}(w)$ in \cref{dr_pg} satisfies
	{\small \begin{flalign*}
	&\mE[G_{\text{DR}}(w)] - \nabla_w J(w)\nonumber\\
	&=-\mE[\varepsilon_{\rho}(s,a)\varepsilon_{d^q}(s,a)] - \mE[\varepsilon_{d^\rho}(s,a)\varepsilon_{q}(s,a)] \nonumber\\
	&\quad + \gamma \mE[\varepsilon_{\rho}(s,a)\varepsilon_{q}(s^\prime,a^\prime)\nabla_w\log(a^\prime|s^\prime)] \nonumber\\
	&\quad + \gamma\mE[\varepsilon_{\rho}(s,a)\varepsilon_{d^q}(s^\prime,a^\prime)] + \gamma\mE[\varepsilon_{\rho}(s,a)\varepsilon_{q}(s^\prime, a^\prime)],
	\end{flalign*}}
	where the estimation errors are defined as
	{\begin{flalign*}
	&\varepsilon_{\rho} = \rho_{\pi_w}- \hat{\rho}_{\pi_w}, \quad \varepsilon_{q} = Q_{\pi_w} - \hat{Q}_{\pi_w},\nonumber\\
	&\varepsilon_{d^\rho} = d^\rho_{\pi_w}- \hat{d}^\rho_{\pi_w}, \quad \varepsilon_{d^q} = d^q_{\pi_w} - \hat{d}^q_{\pi_w}.
	\end{flalign*}}
\end{theorem}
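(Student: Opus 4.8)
The plan is to isolate the bias by comparing $G_{\text{DR}}(w)$ with the \emph{oracle} estimator $G^\star(w)$ obtained by replacing each $\hat Q_{\pi_w},\hat\rho_{\pi_w},\hat d^q_{\pi_w},\hat d^\rho_{\pi_w}$ in \eqref{dr_pg} by the true $Q_{\pi_w},\rho_{\pi_w},d^q_{\pi_w},d^\rho_{\pi_w}$. The first step is to verify that $G^\star$ is unbiased, $\mE[G^\star(w)]=\nabla_w J(w)$, so that the bias reduces to $\mE[G_{\text{DR}}(w)-G^\star(w)]$. Using $d^q_{\pi_w}=\nabla_w Q_{\pi_w}$ and the log-derivative identity, the initial-state line of $G^\star$ integrates over $a_0\sim\pi_w$ to $\nabla_w J(w)$; the $d^\rho_{\pi_w}$-line vanishes because the Bellman residual $r-Q_{\pi_w}(s,a)+\gamma Q_{\pi_w}(s',a')$ has zero conditional mean given $(s,a)$ by \eqref{eq: bellman_q}; and the $\rho_{\pi_w}$-line vanishes after changing measure through $\rho_{\pi_w}=\nu_{\pi_w}/d$ and applying both the policy gradient theorem \eqref{pg} and the visitation Bellman equation \eqref{eq: bellman_vis}.

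The second step is the decomposition. Substituting $\hat Q_{\pi_w}=Q_{\pi_w}-\varepsilon_q$, $\hat\rho_{\pi_w}=\rho_{\pi_w}-\varepsilon_\rho$, $\hat d^q_{\pi_w}=d^q_{\pi_w}-\varepsilon_{d^q}$, $\hat d^\rho_{\pi_w}=d^\rho_{\pi_w}-\varepsilon_{d^\rho}$ into $G_{\text{DR}}-G^\star$ and expanding each line, every term falls into one of two classes: \emph{first-order} terms that pair one true nuisance with one error, and \emph{second-order} terms that pair two errors. The second-order terms can be read off directly and are exactly the product terms in the claimed bias: each pairs a density-side error ($\varepsilon_\rho$ from the $\hat\rho_{\pi_w}$-line or $\varepsilon_{d^\rho}$ from the $\hat d^\rho_{\pi_w}$-line) with a value-side error ($\varepsilon_q$ or $\varepsilon_{d^q}$), which is precisely the mechanism that makes the estimator doubly robust. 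Thus the whole content of the theorem is that the first-order terms cancel in expectation.

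Establishing that cancellation is the crux. The $\varepsilon_\rho$- and $\varepsilon_{d^\rho}$-linear terms are straightforward: the $\varepsilon_{d^\rho}$ term again multiplies the zero-mean Bellman residual, while the $\varepsilon_\rho$ term multiplies $C:=-d^q_{\pi_w}(s,a)+\gamma\big(Q_{\pi_w}(s',a')\nabla_w\log\pi_w(a'|s')+d^q_{\pi_w}(s',a')\big)$, whose conditional mean given $(s,a)$ is $-d^q_{\pi_w}(s,a)+\gamma\mE_{s'}[\nabla_w V_{\pi_w}(s')]=0$ by differentiating \eqref{eq: bellman_q}. The delicate part is the $\varepsilon_q$- and $\varepsilon_{d^q}$-linear terms, which are distributed across all three lines and live on three different measures, $\mu_0$, $d$, and $\nu_{\pi_w}$. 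Here I would differentiate \eqref{eq: bellman_vis} to obtain $\nabla_w\nu_{\pi_w}(s,a)=\nu_{\pi_w}(s,a)\nabla_w\log\pi_w(a|s)+\gamma\pi_w(a|s)\int\msP(s|\bar s,\bar a)\nabla_w\nu_{\pi_w}(\bar s,\bar a)d\bar s\,d\bar a$, rewrite $\mE_{\mcd_d}[d^\rho_{\pi_w}\,\varepsilon]=\int\varepsilon(s,a)\,\nabla_w\nu_{\pi_w}(s,a)\,ds\,da$, and substitute; combined with the forward skip identity $\gamma\mE_{\nu_{\pi_w},\msP,\pi_w}[h(s',a')]=\mE_{\nu_{\pi_w}}[h]-(1-\gamma)\mE_{\mu_0,\pi_w}[h]$ (itself a consequence of \eqref{eq: bellman_vis}), applied with $h=\varepsilon_q\nabla_w\log\pi_w$ and with $h=\varepsilon_{d^q}$, the initial-distribution, stationary, and one-step-forward contributions telescope to zero. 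I expect reconciling these three measures to be the main obstacle; it is also the step that fails for a naive port of the average-reward construction of \cite{kallus2020statistically}, because it is exactly the $\mu_0$-dependent line of \eqref{dr_pg} that absorbs the leftover first-order $\varepsilon_q$ mass in the discounted setting.
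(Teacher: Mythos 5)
Your proposal is correct and takes essentially the same route as the paper's proof: the paper likewise compares $G_{\text{DR}}$ against the oracle form of $\nabla_w J(w)$ (whose unbiasedness is the content of its Appendix A derivation), expands the difference into second-order error products plus first-order terms $S_1,S_2,S_3$, and cancels $S_2,S_3$ via the Bellman equation and $S_1$ via exactly your two ingredients --- $d^\rho_{\pi_w}=\nabla_w\rho_{\pi_w}$ together with stationarity of $\nu_{\pi_w}$ under the hybrid kernel $\tilde{\msP}(\cdot|s,a)=\gamma\msP(\cdot|s,a)+(1-\gamma)\mu_0$ (your ``forward skip identity''), which the paper packages as a vanishing total derivative rather than by differentiating the flow equation. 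One caveat on your claim that the second-order terms can be ``read off'' to match the statement verbatim: a careful expansion produces $\gamma\mE[\varepsilon_{d^\rho}(s,a)\varepsilon_{q}(s^\prime,a^\prime)]$ as the final cross term (consistent with the intermediate display in the paper's own proof), not the stated $\gamma\mE[\varepsilon_{\rho}(s,a)\varepsilon_{q}(s^\prime,a^\prime)]$, an apparent typo in the theorem that does not affect the three-way doubly robust conclusion.
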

\Cref{thm1} shows that the estimation error of $G_{\text{DR}}(w)$ takes a {\bf multiplicative} form of pairs of individual estimation errors rather than the summation over all errors. Such a structure thus exhibits a {\bf three-way doubly robust} property. Namely, as long as {\bf one} of the three pairs $(\hat{\rho}_{\pi_w},\hat{d}^\rho_{\pi_w})$, $(\hat{Q}_{\pi_w},\hat{d}^q_{\pi_w})$, $(\hat{\rho}_{\pi_w},\hat{Q}_{\pi_w})$ are accurately estimated, our estimator $G_{\text{DR}}(w)$ is unbiased, i.e., $\mE[G_{\text{DR}}(w)] - \nabla_w J(w)=0$. There is no need for all of the individual errors to be small.

\subsection{Estimation of Nuisance Functions}\label{subsc: nuisances}

In order to incorporate the doubly robust estimator \cref{dr_pg} into an actor-critic algorithm, we develop critics to respectively construct efficient estimators $\hat{Q}_{\pi_w}$, $\hat{\rho}_{\pi_w}$, $\hat{d}^q_{\pi_w}$, $\hat{d}^\rho_{\pi_w}$ in $G_{\text{DR}}(w)$ in the linear function approximation setting.

\textbf{Critic I: Value function $\hat{Q}_{\pi_w}$ and density ratio $\hat{\rho}_{\pi_w}$.} In the off-policy evaluation problem, \cite{yang2020off} shows that the objective function $J(w)$ can be expressed by the following primal linear programming (LP):
{\small \begin{flalign*}
	&\min_{Q_{\pi_w}}\quad (1-\gamma)\mE_{\mu_0\pi_w}[Q_{\pi_w}(s,a)]\nonumber\\
	&\text{s.t.,}\quad Q_{\pi_w}(s,a) = R(s,a) + \gamma \mcp_\pi Q_{\pi_w}(s,a),
\end{flalign*}}
with the corresponding dual LP given by
{\small \begin{flalign*}
	&\max_{\nu_{\pi_w}}\quad \mE_{\nu_{\pi_w}}[R(s,a)]\nonumber\\
	&\text{s.t.,}\quad \nu_\pi(s^\prime,a^\prime) = (1-\gamma)\mu_0(s^\prime)\pi(a^\prime|s^\prime) + \gamma \mcp^*_\pi\nu_\pi(s,a).
\end{flalign*}}
Then, the value function $Q_{\pi_w}(s,a)$ and the distribution correction ratio $\rho_{\pi_w}(s,a)$  can be learned by solving the following regularized Lagrangian:
{\small \begin{flalign}
	&\quad \min_{\hat{\rho}_{\pi_w}\geq 0}\max_{\hat{Q}_{\pi_w}, \eta} L(\hat{\rho}_{\pi_w}, \hat{Q}_{\pi_w}, \eta)\nonumber\\
	&\coloneqq (1-\gamma)\mE_{\mu_0}[\hat{Q}_{\pi_w}(s,a)] + \mE_{\mcd_d}[\rho_{\pi_w}(s,a)(r(s,a,s^\prime)\nonumber\\
	&\quad + \gamma \hat{Q}_{\pi_w}(s^\prime,a^\prime) - \hat{Q}_{\pi_w}(s,a))] - \frac{1}{2}\mE_{\mcd_d}[\hat{Q}_{\pi_w}(s,a)^2]\nonumber\\
	&\quad + \mE_{\mcd_d}[\eta\hat{\rho}_{\pi_w}(s,a)-\eta]-0.5\eta^2.\label{eq: 4}
\end{flalign}}
We construct $\hat{\rho}_{\pi_w}$ and $\hat{Q}_{\pi_w}$ with linearly independent feature $\phi(s,a)\in\mR^{d_1}$: $\hat{\rho}_{\pi_w}(s,a) = \phi(s,a)^\top\theta_{\rho}$ and $\hat{Q}_{\pi_w}(s,a) = \phi(s,a)^\top \theta_{q}$ for all $(s,a)\in\mcs\times\mca$. In such a case, $L(\rho_{\pi_w}, Q_{\pi_w}, \eta)$ is strongly-concave in both $\theta_q$ and $\eta$, and convex in $\theta_\rho$. We denote the global optimum of $L(\theta_\rho, \theta_q, \eta)$ as $\theta^*_{\rho,w}$, $\theta^*_{q,w}$ and $\eta^*_{w}$. The errors of approximating $Q_{\pi_w}$ and $\rho_{\pi_w}$ with estimators $\hat{Q}_{\pi_{w}}(s,a,\theta^*_{q,w})=\phi(s,a)^\top\theta^*_{q,w}$ and $\hat{\rho}_{\pi_{w}}(s,a, \theta^*_{q,w})=\phi(s,a)^\top\theta^*_{\rho,w}$, respectively, are defined as
{\small \begin{flalign*}
	\epsilon_q & = \max\Big\{ \max_w\sqrt{\mE_{\mcd}[(\hat{Q}_{\pi_{w}}(s,a,\theta^*_{q,w}) - {Q}_{\pi_{w}}(s,a))^2]},\nonumber\\ &\max_w\sqrt{\mE_{\mcd_d\cdot\pi_{w}}[(\hat{Q}_{\pi_{w}}(s^\prime,a^\prime,\theta^*_{q,w}) - {Q}_{\pi_{w}}(s^\prime,a^\prime))^2]} \Big\},\nonumber\\
	\epsilon_\rho &= \max_w\sqrt{ \mE_{\mcd}[(\hat{\rho}_{\pi_{w}}(s,a, \theta^*_{\rho,w}) - {\rho}_{\pi_{w}}(s,a))^2]}.
\end{flalign*}}

To solve the minimax optimization problem in \cref{eq: 4}, we adopt stochastic gradient descent-ascent method with mini-batch samples $\mcb_t = \{(s_i, a_i, r_i, s^\prime_i)\}_{i=1\cdots N}\sim\mcd_d$, $a^\prime_i\sim\pi_{w_t}(\cdot|s^\prime_i)$ and $\mcb_{t,0} = \{(s_{0,i})\}_{i=1\cdots N}\sim\mu_0$, $a_{0,i}\sim\pi_{w_t}(\cdot|s^\prime_{0,i})$, which update parameters recursively as follows
{\small \begin{flalign}
	\delta_{t,i} &= (1-\gamma)\phi_{0,i} + \gamma \phi_i^\top\theta_{\rho,t}\phi^\prime_i - \phi_i^\top\theta_{\rho,t}\phi_i\nonumber\\
	\eta_{t+1} &= \theta_{\rho,t} + \beta_1\frac{1}{N}\sum_{i\in\mcb_t} (\phi_i^\top\theta_{\rho,t} - 1 - \eta_t)\nonumber\\
	\theta_{q,t+1} &= \Gamma_{R_q}\Big[\theta_{q,t} + \beta_1\frac{1}{N}\sum_{i\in\mcb_t,\mcb_{t,0}}(\delta_{t,i} - \phi^\top_i\theta_{q,t}\phi_i)\Big]\nonumber\\
	\theta_{\rho,t+1} &=\Gamma_{R_\rho} \Big[\theta_{\rho,t} - \beta_1\frac{1}{N}\sum_{i\in\mcb_t}(r_i\phi_i + \gamma \phi^{\prime\top}_i\theta_{q,t}\phi_i \nonumber\\
	&\qquad\qquad\qquad - \phi_i^\top\theta_{q,t}\phi_i + \eta_t\phi_i )\Big],\label{eq: c1}
\end{flalign}}
where $\Gamma_R$ indicates the projection onto a ball with radius $R$. Such a projection operator stabilizes the algorithm \cite{konda2000actor,bhatnagar2009natural}. Note that the iteration in \cref{eq: c1} is similar to but difference from the GradientDICE update in \cite{zhang2020gradientdice}, as GradientDICE can learn only the density ratio $\rho_{\pi_w}$, while our approach in \cref{eq: c1} can learn both the value function $Q_{\pi_w}$ and the density ratio $\rho_{\pi_w}$.

\textbf{Critic II: Derivative of value function $\hat{d}^q_{\pi_w}$.} Taking derivative on both sides of \cref{eq: bellman_q} yields
{\begin{flalign}
d^q_{\pi_w}(s,a) &= \gamma\mE[d^q_{\pi_w}(s^\prime,a^\prime)|s,a] \nonumber\\
&+ \gamma\mE[Q_{\pi_w}(s^\prime,a^\prime)\nabla_w\log\pi_w(a^\prime|s^\prime)|s,a],\label{eq: dq}
\end{flalign}}
We observe that \cref{eq: dq} takes a form analogous to the Bellman equation in \cref{eq: bellman_q}, and thus suggests a recursive approach to estimate $d^q_{\pi_w}$, similarly to  temporal difference (TD) learning.  
Specifically, suppose we estimate $d^q_{\pi_w}$ with a feature matrix $x(s,a)\in \mR^{d_3\times d}$, i.e., $\hat{d}^q_{\pi_w}(s,a) = x(s,a)^\top \theta_{d_q}$ for all $(s,a)\in\mcs\times\mca$. Replace $Q_{\pi_w}(s,a)$ with its estimator $\hat{Q}_{\pi_w}(s,a) = \phi(s,a)^\top \theta_q$ in \cref{eq: dq}. The temporal difference error is then given as
{\begin{flalign*}
\delta_{d_q}&(s,a,\theta_q) = \gamma x(s^\prime,a^\prime)^\top \theta_{d_q} \nonumber\\
&+ \gamma \phi(s^\prime,a^\prime)^\top \theta_q\nabla_w\log\pi_w(a^\prime|s^\prime) - x(s,a)^\top \theta_{d_q}
\end{flalign*}}
and $\theta_{d^q}$ can be updated with the TD-like semi-gradient
\begin{flalign}\label{eq: 5}
\theta_{d^q,t+1} = \theta_{d^q,t} + \beta_2  x(s,a) \delta_{d^q}(s,a,\theta_{d^q,t}).
\end{flalign}
However, in the off-policy setting, the iteration in \cref{eq: 5} may not converge due to the off-policy sampling. To solve such an issue, we borrow the idea from gradient TD (GTD) and formulate the following strongly convex objective
{\begin{flalign*}
&H(\theta_{d_q},\theta_q) \nonumber\\
&= \mE[x(s,a)\delta_{d_q}(s,a,\theta_q)]^\top\mE[x(s,a)\delta_{d_q}(s,a,\theta_q)].
\end{flalign*}}
We denote the global optimum of $H(\theta_{d_q},\theta^*_{q,w})$ as $\theta^*_{d_q,w}$, i.e., $H(\theta^*_{d_q,w},\theta^*_{q,w})=0$. The approximation error of estimating $d^q_{\pi_w}$ with estimator $\hat{d}^q_{\pi_{w}}(s^\prime,a^\prime,\theta^*_{d_q,w}) = x(s,a)^\top\theta^*_{d_q,w}$ is defined as
{\small\begin{flalign}
	&\epsilon_{d_q} = \max\Big\{ \max_w\sqrt{\mE_{\mcd}\left[ \ltwo{\hat{d}^q_{\pi_{w}}(s,a,\theta^*_{d_q,w}) - {d}^q_{\pi_{w}}(s,a)}^2\right]},\nonumber\\
	&\max_w\sqrt{\mE_{\mcd_d\cdot\pi_{w}}\left[ \ltwo{\hat{d}^q_{\pi_{w}}(s^\prime,a^\prime,\theta^*_{d_q,w}) - {d}^q_{\pi_{w}}(s^\prime,a^\prime)}^2\right]} \Big\}.\nonumber
	\end{flalign}}
Similarly to GTD, we introduce an auxiliary variable $w_{d^q}$ to avoid the issue of double sampling when using gradient based approach to minimize $H(\theta_{d_q},\theta_q)$.
With mini-batch samples $\mcb_t = \{(s_i, a_i, s^\prime_i)\}_{i=1\cdots N}\sim\mcd_d$, we have the following update for $\theta_{d_q}$.
{\small \begin{flalign}
\theta_{d_q,t+1} &= \theta_{d_q,t} + \beta_3 \frac{1}{N}\sum_{i\in {\mcb}_t} (x_i - \gamma x^\prime_i)x^\top_iw_{d_q,t},\nonumber\\
w_{d_q, t+ 1} &= w_{d_q,t} + \beta_3 \frac{1}{N}\sum_{i\in {\mcb}_t} (x_i \delta_{d_q,i}(\theta_{q,t}) - w_{d_q,t}).\label{eq: c3}
\end{flalign}}

\textbf{Critic III: Derivative of density ratio $\hat{d}^\rho_{\pi_w}$.} We denote $\psi_{\pi_w}(s,a): = \nabla_w\log(\nu_{\pi_w}(s,a))$, and construct an estimator for ${d}^\rho_{\pi_w}$ as $\hat{d}^\rho_{\pi_w}(s,a) = \hat{\rho}_{\pi_w}(s,a)\hat{\psi}_{\pi_w}(s,a)$, where $\hat{\rho}_{\pi_w}$ and $\hat{\psi}_{\pi_w}$ are approximation of $\rho_{\pi_w}$ and $\psi_{\pi_w}$, respectively. 
Note that \cref{eq: bellman_vis} can be rewritten in the following alternative form
\begin{flalign}\label{eq: bellman_vis2}
	\nu_{\pi_w}(\tilde{s}^\prime,a^\prime) = \int \pi_w(a^\prime|\tilde{s}^\prime)\tilde{\msP}(\tilde{s}^\prime|s,a)\nu_{\pi_w}(s,a)dsda,
\end{flalign}
where $\tilde{\msP}(\cdot|s,a)=(1-\gamma)\mu_0 + \gamma \msP(\cdot|s,a)$. Taking derivative on both sides of \cref{eq: bellman_vis2} and using $\nabla g(w)=g(w)\nabla \log g(w)$, we obtain
\begin{align*}
&\textstyle \nu_{\pi_w}(\tilde{s}^\prime,a^\prime)\psi_{\pi_w}(\tilde{s}^\prime,a^\prime)\\ 
& \textstyle =\nabla_w\log(\pi_w(a^\prime|\tilde{s}^\prime))\cdot \Big[\pi_w(a^\prime|\tilde{s}^\prime)\nonumber\\
&\quad\int_{s,a}\tilde{\msP}(\tilde{s}^\prime|s,a)\nu_{\pi_w}(s,a)dsda\Big] \\
&\textstyle \quad + \int_{s,a} \Big[\pi_w(a^\prime|\tilde{s}^\prime)\tilde{\msP}(\tilde{s}^\prime|s,a)\nu_{\pi_w}(s,a)\Big] \psi_{\pi_w}(s,a)dsda \\
& \textstyle = \nabla_w\log(\pi_w(a^\prime|\tilde{s}^\prime))\cdot\nu_{\pi_w}(\tilde{s}^\prime,a^\prime)\\
&\textstyle \quad + \int_{s,a}\Big[\pi_w(a^\prime|\tilde{s}^\prime)\tilde{\msP}(\tilde{s}^\prime|s,a)\nu_{\pi_w}(s,a)\Big]\psi_{\pi_w}(s,a)dsda \\
& \textstyle = \nabla_w\log(\pi_w(a^\prime|\tilde{s}^\prime))\cdot \nu_{\pi_w}(\tilde{s}^\prime,a^\prime) \\
&\textstyle \quad + \int_{s,a} \nu_{\pi_w}(\tilde{s}^\prime,a^\prime)P\left(s,a|\tilde{s}^\prime,a^\prime\right) \psi_{\pi_w}(s,a)dsda
\end{align*}
where the second equality follows because ${\pi_w(a^\prime|\tilde{s}^\prime)\int_{s,a}\tilde{\msP}(\tilde{s}^\prime|s,a)\nu_{\pi_w}(s,a)dsda}=\nu_{\pi_w}(\tilde{s}^\prime,a^\prime)$, and the third equality follows because if $(s,a)\sim \nu_{\pi_w}(\cdot)$, then $(\tilde{s}^\prime,a^\prime)\sim\nu_{\pi_w}(\cdot)$, and Bayes' theorem implies that $\frac{\pi_w(a^\prime|\tilde{s}^\prime)\tilde{\msP}(\tilde{s}^\prime|s,a)\nu_{\pi_w}(s,a)}{\nu_{\pi_w}(\tilde{s}^\prime,a^\prime)} = P\left(s,a|\tilde{s}^\prime,a^\prime\right)$. Then, dividing both sides by $\nu_{\pi_w}(\tilde{s}^\prime,a^\prime)$ yields
\begin{flalign}
&\psi_{\pi_w}(\tilde{s}^\prime,a^\prime) \nonumber\\
&=\nabla_w\log(\pi_w(a^\prime|\tilde{s}^\prime)) + \int_{s,a}P(s,a|\tilde{s}^\prime,a^\prime)\psi_{\pi_w}(s,a)dsda.\label{eq: bellman_vis3}
\end{flalign}
With linear function approximation, we estimate $\psi_{\pi_w}(s,a)$ with feature matrix $\varphi(s,a)\in\mR^{d_2\times d}$ i.e., $\hat{\psi}_{\pi_w}(s,a) = \varphi(s,a)^\top \theta_\psi$ for all $(s,a)\in\mcs\times\mca$. The temporal difference error is given as
\begin{flalign}
	&\delta_{\psi}(\tilde{s}^\prime,a^\prime) \nonumber\\
	&= \nabla_w\log\pi_w(a^\prime|\tilde{s}^\prime) +  \varphi(s,a)^\top \theta_\psi - \varphi(\tilde{s}^\prime,a^\prime)^\top \theta_\psi,\label{eq: 69}
\end{flalign}
Note that in \cref{eq: 69}, we require $\tilde{s}^\prime\sim \tilde{\msP}(\cdot|s,a)$. To obtain a sample triple $(s,a,\tilde{s}^\prime)$ from such a ``hybrid" transition kernel, for a given sample $(s, a, s^\prime)$, we take a Bernoulli choice between $s^\prime$ and $s_0\sim\mu_0$ with probability $\gamma$ and $1-\gamma$, respectively, to obtain a state $\tilde{s}^\prime$ that satisfies the requirement. Then, similarly to how we obtain the estimator $\hat{d}^q_{\pi_w}$, we adopt the method in GTD to formulate the following objective
\begin{flalign}\label{eq: 6}
	F(\theta_{\psi})= \mE[\varphi(s^\prime,a^\prime)\delta_{\psi}(\tilde{s}^\prime,a^\prime)]^\top\mE[\varphi(s^\prime,a^\prime)\delta_{\rho}(\tilde{s}^\prime,a^\prime)].
\end{flalign}
We denote the global optimum of $F(\theta_{\psi})$ as $\theta^*_{\psi,w}$, i.e., $F(\theta^*_{\psi,w})=0$, and define the approximation error of estimating $d^\rho_{\pi_w}$ with estimator $\hat{d}^\rho_{\pi_{w}}(s,a,\theta^*_{\rho,w}, \theta^*_{\psi,w}) = \phi(s,a)^\top\theta^*_{\rho,w} \varphi(s,a)^\top\theta^*_{\psi,w}$ as
{\small \begin{flalign}
	\epsilon_{d_\rho} =  \max_w\sqrt{ \mE_{\mcd}\left[ \ltwo{\hat{d}^\rho_{\pi_{w}}(s,a,\theta^*_{\rho,w}, \theta^*_{\psi,w}) - {d}^\rho_{\pi_{w}}(s,a)}^2\right]}.\nonumber
\end{flalign}}
Given mini-batch samples $\mcb_t = \{(s_i, a_i, s^\prime_i)\}_{i=1\cdots N}\sim\mcd_d$, $a^\prime_i\sim\pi_{w_t}(\cdot|s^\prime_i)$ and $\mcb_{t,0} = \{(s_{0,i}, a_{0,i})\}_{i=1\cdots N}\sim\mu_0$, we have the following update for $\theta_\psi$:
\begin{flalign}
	\theta_{\psi,t+1} &= \theta_{\psi,t} + \beta_2 \frac{1}{N}\sum_{i\in\tilde{\mcb}_t} (\varphi^\prime_i - \varphi_i)\varphi^{\prime\top}_iw_{\psi,t},\nonumber\\
	w_{\psi, t+ 1} &= w_{\psi,t} + \beta_2 \frac{1}{N}\sum_{i\in\tilde{\mcb}_t} (\varphi^\prime_i \delta_{\psi,i} - w_{\psi,t}),\label{eq: c2}
\end{flalign}
where $w_{\psi,t}$ is the auxiliary variable that we introduce to avoid the double sampling issue. 

\textbf{DR-Off-PAC Estimator.} Given parameters $\theta_{\rho,t}$, $\theta_{q,t}$, $\theta_{\psi,t}$ and $\theta_{d_q,t}$, the doubly robust policy gradient can be obtained as follows
\begin{flalign}
&\quad G^i_{\text{DR}}(w_t) \nonumber\\
&= (1-\gamma)\left(\phi_{0,i}^\top\theta_{q,t}\nabla_w\log\pi_w(s_{0,i},a_{0,i}) + x_{0,i}^\top\theta_{d_q,t}\right)\nonumber\\
&+ \psi_{i}^\top\theta_{\psi,t}(r(s_i,a_i,s^\prime_i) - \phi_{i}^\top\theta_{q,t} + \gamma \mE_{\pi_{w_t}} [\phi^{\prime\top}_{i}\theta_{q,t}] ) \nonumber\\
& + \phi_{i}^\top\theta_{\rho,t}(- x_{i}^\top\theta_{d_q,t} \nonumber\\
&\qquad\quad + \gamma \phi_{i}^\top\theta_{q,t}\nabla_w\log\pi_w(s_{t,i},a_{t,i}) + x_{i}^\top\theta_{d_q,t}).\label{eq: dr_pg}
\end{flalign}

\begin{algorithm}[tb]
	\caption{DR-Off-PAC}
	\label{algorithm_drpg}
	\begin{algorithmic}
		\STATE {\bfseries Initialize:} Policy parameter $w_0$, and estimator parameters $\theta_{q,0}$, $\theta_{\rho,0}$, $\theta_{d_q,0}$ and $\theta_{\psi,0}$.
		\FOR{$t=0,\cdots,T-1$}
		\STATE Obtain mini-batch samples $\mcb_t\sim \mcd_d$ and $\mcb_{t,0}\sim\mu_0$
		\STATE \textbf{Critic I: }Update density ratio and value function estimation via \cref{eq: c1}: $\theta_{q,t}, \theta_{\rho,t}\rightarrow \theta_{q,t+1}, \theta_{\rho,t+1}$
		\STATE \textbf{Critic II: }Update derivative of value function estimation via \cref{eq: c3}: $\theta_{d_q,t}\rightarrow \theta_{d_q,t+1}$
		\STATE \textbf{Critic III: }Update derivative of density ratio estimation via \cref{eq: c2}: $\theta_{\psi,t}\rightarrow \theta_{\psi,t+1}$
		\STATE \textbf{Actor: }Update policy parameter via \cref{eq: dr_pg}
		\STATE $w_{t+1} = w_t + \alpha \frac{1}{N}\sum_{i}G^i_{\text{DR}}(w_t)$
		\ENDFOR
		\STATE {\bfseries Output:} $w_{\hat{T}}$ with $\hat{T}$ chosen uniformly in $\{0,\cdots,T-1\}$
	\end{algorithmic}
\end{algorithm}

\textbf{DR-Off-PAC Algorithm.} We now propose a doubly robust off-policy actor-critic (DR-Off-PAC) algorithm as detailed in \Cref{algorithm_drpg}. The stepsizes $\beta_1$, $\beta_2$, $\beta_3$, and $\alpha$ are set to be $\Theta(1)$ to yield a single-timescale update, i.e., all parameters are updated equally fast. At each iteration, critics I, II, and III perform one-step update respectively for parameters $\theta_{q}$, $\theta_{\rho}$, $\theta_{\psi}$, and $\theta_{d_q}$, and then actor performs one-step policy update based on all critics' return. Note that \Cref{algorithm_drpg} is inherently a tri-level optimization process, as the update of $w$ depends on $\theta_{\rho}$, $\theta_{q}$, $\theta_{\psi}$, and $\theta_{d_q}$, in which the update of $\theta_{d_q}$ depends on $\theta_{q}$. Thus the interactions between actor and critics and between critic and critic are more complicated than previous actor-critic algorithms that solve bilevel problems \cite{konda2000actor,bhatnagar2010actor,xu2020improving}.
Due to the single timescale scheme that \Cref{algorithm_drpg} adopts, actor's update is based on inexact estimations of critics, which can significantly affect the overall convergence of the algorithm. Interestingly, as we will show in the next section, \Cref{algorithm_drpg} is guaranteed to converge to the optimal policy, and at the same time attains doubly robust optimality gap with respect to approximation errors.

\section{Convergence Analysis of DR-Off-PAC}

In this section, we establish the local and global convergence rate for DR-Off-PAC in the single-timescale update setting.

\subsection{Local Convergence}

We first state a few standard technical assumptions, which have also been adopted in previous studies \cite{xu2020improving,xu2019two,zhang2020gendice,zhang2020gradientdice,wu2020finite}
\begin{assumption}\label{ass1}
	For any $(s,a)\in\mcs\times\mca$ and $w\in\mR^d$, there exists a constant $C_d>0$ such that $\rho_{\pi_{w}}(s,a)> C_d$.
\end{assumption}

\begin{assumption}\label{ass2}
	For any $(s,a)\in\mcs\times\mca$, there exist positive constants $C_\phi$, $C_\varphi$, $C_\psi$, and $C_x$ such that the following hold: (1) $\ltwo{\phi(s,a)}\leq C_\phi$; (2) $\ltwo{\varphi(s,a)}\leq C_\varphi$; (3) $\ltwo{\psi(s,a)}\leq C_\psi$; (4) $\ltwo{x(s,a)}\leq C_x$.
\end{assumption}

\begin{assumption}\label{ass3}
	The matrices $A = \mE_{\mcd_d\cdot\pi_{w}}[(\phi-\gamma\phi^\prime)\phi^\top]$, $B = \mE_{\tilde{\mcd}_d\cdot\pi_{w}}[(\varphi-\varphi^\prime)\varphi^{\prime\top}]$ and $C=\mE_{\mcd_d\cdot\pi_{w}}[(\gamma x^\prime-x)x^{\top}]$ are nonsingular.
\end{assumption}

\begin{assumption}\label{ass4}
	For any $w, w^\prime\in\mR^d$ and any $(s,a)\in\mcs\times\mca$, there exist positive constants $C_{sc}$, $L_{sc}$, and $L_\pi$ such that the following hold: (1) $\ltwo{\nabla_w\log\pi_{w}(a|s)}\leq C_{sc}$; (2) $\ltwo{\nabla_w\log\pi_{w}(a|s) - \nabla_w\log\pi_{w^\prime}(a|s)}\leq L_{sc}\ltwo{w-w^\prime}$; (3) $\lTV{\pi_w(\cdot|s) - \pi_{w^\prime}(\cdot|s)}\leq L_\pi\ltwo{w-w^\prime}$, where $\lTV{\cdot}$ denotes the total-variation norm.
\end{assumption}

The following theorem characterizes the convergence rate of \Cref{algorithm_drpg}, as well as its doubly robust optimality gap.
\begin{theorem}[Local convergence]\label{thm2}
	Consider the DR-Off-PAC in \Cref{algorithm_drpg}. Suppose \Cref{ass1} - \ref{ass4} hold. Let the stepsize $\alpha, \beta_1, \beta_2, \beta_3=\Theta(1)$. We have
	\begin{flalign*}
		&\mE[\ltwo{\nabla_w J(w_{\hat{T}})}]\nonumber\\
		&\leq \Theta\left(\frac{1}{\sqrt{T}}\right) + \Theta\left(\frac{1}{\sqrt{N}}\right) + \Theta(\epsilon_\rho \epsilon_{d_q} + \epsilon_{d_\rho}\epsilon_q + \epsilon_\rho \epsilon_q).
	\end{flalign*}
\end{theorem}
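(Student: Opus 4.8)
The plan is to analyze \Cref{algorithm_drpg} as a \emph{biased} stochastic gradient ascent on the smooth nonconvex objective $J(w)$, where the per-step bias is governed by \Cref{thm1} and the residual error arises from the critics tracking their moving targets inexactly under the single-timescale scheme. Write $\bar G_t:=\frac1N\sum_i G^i_{\text{DR}}(w_t)$ for the actor's mini-batch update and $b_t:=\mE[\bar G_t\mid\mathcal F_t]-\nabla_wJ(w_t)$ for the bias. As preliminaries I would show that $J$ has a Lipschitz gradient (so a descent lemma applies), that all iterates stay bounded (the projections $\Gamma_{R_q},\Gamma_{R_\rho}$ together with \Cref{ass2,ass4} guarantee this), and that each critic's optimal parameter $\theta^*_{\rho,w},\theta^*_{q,w},\theta^*_{\psi,w},\theta^*_{d_q,w}$ is Lipschitz in $w$. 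Applying the descent lemma to $w_{t+1}=w_t+\alpha\bar G_t$, taking conditional expectation, and using $\langle\nabla_wJ(w_t),b_t\rangle\ge-\tfrac12\ltwo{\nabla_wJ(w_t)}^2-\tfrac12\ltwo{b_t}^2$ gives
\[
\tfrac{\alpha}{2}\ltwo{\nabla_wJ(w_t)}^2\le \mE[J(w_{t+1})\mid\mathcal F_t]-J(w_t)+\tfrac{\alpha}{2}\ltwo{b_t}^2+\tfrac{L\alpha^2}{2}\mE[\ltwo{\bar G_t}^2\mid\mathcal F_t].
\]

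The next step is to bound $b_t$ through \Cref{thm1}, evaluated at the current linear estimators $\hat\rho=\phi^\top\theta_{\rho,t}$, $\hat Q=\phi^\top\theta_{q,t}$, and so on. Each nuisance error splits into an approximation part and a tracking part, e.g. $\varepsilon_\rho=(\rho_{\pi_{w_t}}-\phi^\top\theta^*_{\rho,w_t})+\phi^\top(\theta^*_{\rho,w_t}-\theta_{\rho,t})$, whose $L_2(\mcd)$ norms are at most $\epsilon_\rho$ and $C_\phi\ltwo{z_{\rho,t}}$ respectively, with $z_{\rho,t}:=\theta_{\rho,t}-\theta^*_{\rho,w_t}$ (and analogously for $z_{q,t},z_{\psi,t},z_{d_q,t}$). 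Because \Cref{thm1} writes the bias as a sum of products of \emph{pairs} of errors, substituting these decompositions yields exactly the pure approximation–approximation products $\epsilon_\rho\epsilon_{d_q}+\epsilon_{d_\rho}\epsilon_q+\epsilon_\rho\epsilon_q$ together with cross terms each carrying at least one tracking factor. Since the iterates are bounded, every cross term is linear in a tracking norm, so $\ltwo{b_t}\lesssim(\epsilon_\rho\epsilon_{d_q}+\epsilon_{d_\rho}\epsilon_q+\epsilon_\rho\epsilon_q)+C\sum_{i}\ltwo{z_{i,t}}$ and hence $\mE\ltwo{b_t}^2\lesssim(\epsilon_\rho\epsilon_{d_q}+\epsilon_{d_\rho}\epsilon_q+\epsilon_\rho\epsilon_q)^2+C\sum_i\mE\ltwo{z_{i,t}}^2$. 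This is the step where the multiplicative structure is crucial: the doubly robust gap survives, while the tracking contributions remain multiplied only by vanishing factors.

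I would then derive, for each critic, a one-step tracking recursion of the form $\mE[\ltwo{z_{i,t+1}}^2\mid\mathcal F_t]\le(1-c\beta)\ltwo{z_{i,t}}^2+O(\beta^2/N)+O(\alpha^2/\beta)\mE[\ltwo{\bar G_t}^2\mid\mathcal F_t]$. The contraction factor comes from strong monotonicity of each critic's mean update operator, which \Cref{ass3} supplies through nonsingularity of $A,B,C$ and the strong concavity/convexity of the Lagrangian \cref{eq: 4} for Critic I; the $O(\beta^2/N)$ term is mini-batch variance; and the last term is the moving-target drift $\ltwo{\theta^*_{\cdot,w_{t+1}}-\theta^*_{\cdot,w_t}}\le L_\theta\alpha\ltwo{\bar G_t}$ handled via Young's inequality. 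The genuinely new difficulty is the tri-level coupling flagged after \Cref{algorithm_drpg}: the target and update of Critic II depend on $\theta_{q,t}$, so the recursion for $z_{d_q,t}$ carries an extra $O(\ltwo{z_{q,t}}^2)$ term. I would resolve this hierarchically—first closing the self-coupled or independent recursions for $z_{\rho,t},z_{q,t},z_{\psi,t}$, then feeding the averaged bound on $z_{q,t}$ into the recursion for $z_{d_q,t}$.

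Finally I would combine everything, either by summing over $t=0,\dots,T-1$ or through a potential $\Phi_t=-J(w_t)+\sum_i\lambda_i\ltwo{z_{i,t}}^2$. Substituting $\mE\ltwo{\bar G_t}^2\lesssim\ltwo{\nabla_wJ(w_t)}^2+\ltwo{b_t}^2+O(\sigma^2/N)$ and the bias bound, and choosing the $\Theta(1)$ stepsizes small enough (with $\alpha/\beta$ a sufficiently small constant) so that the averaged $\ltwo{\nabla_wJ(w_t)}^2$ and $\ltwo{z_{i,t}}^2$ terms on the right can be absorbed into their left-hand counterparts, the bounded telescoping of $J$ leaves
\[
\tfrac1T\sum_{t=0}^{T-1}\mE\ltwo{\nabla_wJ(w_t)}^2\le\Theta(1/T)+\Theta(1/N)+\Theta\big((\epsilon_\rho\epsilon_{d_q}+\epsilon_{d_\rho}\epsilon_q+\epsilon_\rho\epsilon_q)^2\big).
\]
Taking square roots with $\sqrt{a+b+c}\le\sqrt a+\sqrt b+\sqrt c$ and Jensen's inequality $\mE\ltwo{\nabla_wJ(w_{\hat T})}=\tfrac1T\sum_t\mE\ltwo{\nabla_wJ(w_t)}\le(\tfrac1T\sum_t\mE\ltwo{\nabla_wJ(w_t)}^2)^{1/2}$ for the uniformly random index $\hat T$ then gives the stated bound. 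The main obstacle throughout is the coupled, tri-level tracking analysis: establishing uniform contraction for the GTD/minimax-type critic updates and disentangling the dependence of Critic II on Critic I so that the tracking errors feed only into the $\Theta(1/T)$ and $\Theta(1/N)$ terms, never into the irreducible doubly robust gap.
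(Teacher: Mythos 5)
Your proposal is correct and follows essentially the same route as the paper's proof: a descent lemma for the $L_J$-smooth objective, a decomposition of the actor's gradient error into critic tracking errors, an $O(1/N)$ mini-batch variance term, and the doubly robust bias evaluated at the optimal linear parameters (so only approximation errors survive in the multiplicative gap), combined with contraction-plus-drift recursions for the critics (including the Critic II--Critic I coupling, which the paper absorbs via a stepsize condition and a joint Lyapunov quantity $\Delta_t$ rather than your hierarchical closure), Lipschitz continuity of the fixed points in $w$, and a final absorption, telescoping, and square-root/Jensen step. The remaining differences---applying \Cref{thm1} at the current parameters with an approximation-plus-tracking split of each $\varepsilon$, versus the paper's sequential swap of each $\theta_{\cdot,t}$ for $\theta^*_{\cdot,t}$ before invoking the \Cref{thm1} argument---are organizational rather than substantive.
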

\Cref{thm2} shows that \Cref{algorithm_drpg} is guaranteed to converge to a first-order stationary point (i.e., locally optimal policy). In particular, the optimality gap (i.e., the overall convergence error) scales as $(\epsilon_\rho \epsilon_{d_q} + \epsilon_{d_\rho}\epsilon_q + \epsilon_\rho \epsilon_q)$. 
Thus, the optimality gap of \Cref{algorithm_drpg} is {\bf 3-way doubly robust} with respect to the function approximation errors, i.e., the optimality gap is small as long as one of the three pairs $(\epsilon_\rho,\epsilon_q)$, $(\epsilon_\rho,\epsilon_{d_\rho})$, $(\epsilon_{q},\epsilon_{d_q})$ is small.


There are two key differences between the doubly robust properties characterized in \Cref{thm2} and \Cref{thm1}. (a) At the high level, \Cref{thm1} characterizes the doubly robust property only for the policy gradient estimator, and such a property has been characterized in the previous work for other estimators. In contrast, \Cref{thm2} characterizes the doubly robust property for the optimality gap of the overall convergence of an algorithm, which has not been characterized in any of the previous studies. (b) At the more technical level, the estimation error $\varepsilon$ defined in \Cref{thm1} captures both the optimization error $\epsilon_{opt}$ determined by how well we solve the nuisances estimation problem, and the approximation error $\epsilon_{approx}$ determined by the representation power of approximation function classes. Thus, \Cref{thm1} shows that $G_{\text{DR}}(w)$ is doubly robust to the per-iteration estimation errors that depend on both the optimization process and the approximation function class. As a comparison, \Cref{thm2} indicates that the optimality gap of DR-Off-PAC is doubly robust only to approximation errors determined by the approximation function class, which implies that the doubly robust property of the overall convergence of DR-Off-PAC is not affected by the optimization process.



Now in order to attain an optimization target accuracy $\epsilon$ (besides the doubly robust optimality gap), we let $T=\Theta(1/\epsilon^2)$ and $B=\Theta(1/\epsilon^2)$. Then \Cref{thm2} indicates that \Cref{algorithm_drpg} converges to an $\epsilon$-accurate stationary point
with the total sample complexity $NT=\Theta(1/\epsilon^4)$. This result outperforms the best known sample complexity of on-policy actor-critic algorithm by an factor of $\mathcal{O}(\log(1/\epsilon))$ in \cite{xu2020improving}. Such an improvement is mainly due to the single-loop structure that we adopt in \Cref{algorithm_drpg}, in which critics inherit the most recently output from the last iteration as actor updates in order to be more sample efficient. But critic in the nested-loop algorithm in \cite{xu2020improving} always restarts from an random initialization after each actor's update, which yields more sample cost. 

\subsection{Global Convergence}

In this subsection, we establish the global convergence guarantee for DR-Off-PAC in \Cref{algorithm_drpg}. We first make the following standard assumption on the Fisher information matrix induced by the policy class $\pi_w$.
\begin{assumption}\label{ass: fisher}
	For all $w\in\mR^d$, the Fisher information matrix induced by policy $\pi_w$ and initial state distribution $\mu_0$ satisfies
	{\small\begin{flalign*}
		F(w) = \mE_{\nu_{\pi_w}}[\nabla_w\log\pi_{w}(a|s)\nabla_w\log\pi_{w}(a|s)^\top]\succeq \lambda_F\cdot I_d,
	\end{flalign*}}
	for some constant $\lambda_F>0$.
\end{assumption}
\Cref{ass: fisher} essentially states that $F(w)$ is well-conditioned. This assumption can be satisfied by some commonly used policy classes. More detailed justification of such an assumption can be referred to Appendix B.2 in \cite{liu2020improved}.

We further define the following {\em compatible function approximation} error as
	{\small \begin{flalign}
		&\epsilon_{compat}\nonumber\\
		&=\max_{w\in\mR^d}\sqrt{\mE_{\nu_{\pi^*}}\left[ (A_{\pi_{w}}(s,a) - (1-\gamma) \chi^{*\top}_{\pi_w}\nabla_w\log\pi_{w}(a|s))^2 \right]},\nonumber
	\end{flalign}}
where $A_{\pi_{w}}(s,a) = Q_{\pi_w}(s,a) - V_{\pi_{w}}(s)$ is the advantage function and $\chi^{*\top}_{\pi_w}=F(w)^{-1}\nabla_w J(w)$. Such an error $\epsilon_{compat}$ captures the approximating error of the advantage function by the score function. It measures the capacity of the policy class $\pi_w$, and takes small or zero values if the expressive power of the policy class is large \cite{wang2019neural,agarwal2019optimality}.


The following theorem establishes the global convergence guarantee for \Cref{algorithm_drpg}.

\begin{theorem}[Global convergence]\label{thm3}
	Consider the DR-Off-PAC update in \Cref{algorithm_drpg}. Suppose \Cref{ass1}, \ref{ass2}, \ref{ass3} and \ref{ass: fisher} hold. For the same parameter setting as in \Cref{thm2}, we have
	\begin{flalign*}
		J(\pi^*) - J(w_{\hat{T}})&\leq \frac{\epsilon_{compat}}{1-\gamma} + \Theta\left(\frac{1}{\sqrt{T}}\right) + \Theta\left(\frac{1}{\sqrt{N}}\right) \nonumber\\
		&\quad + \Theta(\epsilon_\rho \epsilon_{d_q} + \epsilon_{d_\rho}\epsilon_q + \epsilon_\rho \epsilon_q)
	\end{flalign*}
\end{theorem}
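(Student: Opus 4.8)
The plan is to bridge the gap between \Cref{thm2}, which controls the gradient norm $\mE[\ltwo{\nabla_w J(w_{\hat{T}})}]$, and \Cref{thm3}, which controls the global suboptimality $J(\pi^*)-J(w_{\hat{T}})$, via a \emph{gradient domination} (weak Polyak--{\L}ojasiewicz-type) inequality adapted to policy optimization. The starting point is the performance difference lemma, which for the discounted visitation distribution $\nu_{\pi^*}$ gives $J(\pi^*)-J(\pi_w)=\frac{1}{1-\gamma}\mE_{\nu_{\pi^*}}[A_{\pi_w}(s,a)]$, where $A_{\pi_w}=Q_{\pi_w}-V_{\pi_w}$. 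The key observation is that the compatible error $\epsilon_{compat}$ is defined with the expectation taken under exactly this same distribution $\nu_{\pi^*}$, so no distribution-mismatch coefficient between $\nu_{\pi_w}$ and $\nu_{\pi^*}$ is needed; this is what makes the argument clean.

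Concretely, I would add and subtract the compatible approximation $(1-\gamma)\chi^{*\top}_{\pi_w}\nabla_w\log\pi_{w}(a|s)$ inside the expectation, where $\chi^*_{\pi_w}=F(w)^{-1}\nabla_w J(w)$, to obtain
\begin{flalign*}
J(\pi^*) - J(\pi_w) &= \frac{1}{1-\gamma}\mE_{\nu_{\pi^*}}\big[A_{\pi_w}(s,a) - (1-\gamma)\chi^{*\top}_{\pi_w}\nabla_w\log\pi_{w}(a|s)\big] \\
&\quad + \chi^{*\top}_{\pi_w}\,\mE_{\nu_{\pi^*}}[\nabla_w\log\pi_{w}(a|s)].
\end{flalign*}
For the first term, Jensen's inequality together with the $\max_w$ in the definition of $\epsilon_{compat}$ bounds it by $\tfrac{\epsilon_{compat}}{1-\gamma}$. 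For the second term, Cauchy--Schwarz gives the bound $\ltwo{\chi^*_{\pi_w}}\cdot\ltwo{\mE_{\nu_{\pi^*}}[\nabla_w\log\pi_{w}(a|s)]}$; here I would use \Cref{ass: fisher} to get $\ltwo{\chi^*_{\pi_w}}=\ltwo{F(w)^{-1}\nabla_w J(w)}\le \tfrac{1}{\lambda_F}\ltwo{\nabla_w J(w)}$, and the score bound (from Assumption 4, carried over with the ``same parameter setting as in \Cref{thm2}'') to get $\ltwo{\mE_{\nu_{\pi^*}}[\nabla_w\log\pi_{w}(a|s)]}\le C_{sc}$. Combining yields the gradient domination inequality $J(\pi^*)-J(\pi_w)\le \tfrac{\epsilon_{compat}}{1-\gamma}+\tfrac{C_{sc}}{\lambda_F}\ltwo{\nabla_w J(w)}$.

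The final step is to instantiate this at the random output $w_{\hat{T}}$ and take expectation over the algorithm's randomness, giving $\mE[J(\pi^*)-J(w_{\hat{T}})]\le \tfrac{\epsilon_{compat}}{1-\gamma}+\tfrac{C_{sc}}{\lambda_F}\mE[\ltwo{\nabla_w J(w_{\hat{T}})}]$, and then substituting the bound on $\mE[\ltwo{\nabla_w J(w_{\hat{T}})}]$ from \Cref{thm2}. Since that bound is $\Theta(1/\sqrt{T})+\Theta(1/\sqrt{N})+\Theta(\epsilon_\rho \epsilon_{d_q} + \epsilon_{d_\rho}\epsilon_q + \epsilon_\rho \epsilon_q)$ and the domination inequality is \emph{linear} (not quadratic) in the gradient norm, these rates transfer verbatim, producing exactly the claimed bound. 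I expect the main obstacle to be the gradient domination step itself rather than the concatenation with \Cref{thm2}: one must verify that the natural-gradient preconditioning $F(w)^{-1}$ is well-defined and uniformly bounded (hence the essential role of \Cref{ass: fisher}), and that the $\epsilon_{compat}$ term genuinely captures the residual of the performance difference lemma under $\nu_{\pi^*}$ without incurring an uncontrolled change-of-measure factor. A secondary subtlety is bookkeeping the score-function bound, since \Cref{thm3} lists only Assumptions 1--3 and \ref{ass: fisher} explicitly; I would make the reliance on the boundedness of $\nabla_w\log\pi_{w}$ explicit through the ``same parameter setting'' clause inherited from \Cref{thm2}.
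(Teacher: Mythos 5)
Your proof is correct, but it follows a genuinely different route from the paper's. The paper goes through an NPG-style mirror-descent bound (its Proposition~2, imported from Corollary~6.10 of \cite{agarwal2019optimality}), which decomposes $J(\pi^*)-J(w_{\hat{T}})$ into the compatible-approximation term, a KL-to-initialization term $\frac{1}{\alpha T}\mE_{\nu_{\pi^*}}[\mathrm{KL}(\pi^*(\cdot|s)\|\pi_{w_0}(\cdot|s))]$, the averaged squared update magnitude, and the averaged distance $\frac{C_{sc}}{T}\sum_{t}\ltwo{G_{\text{DR}}(w_t,\mcm_t)-w^*_t}$ to the exact NPG direction $w^*_t=F^{-1}(w_t)\nabla_w J(w_t)$; controlling these terms forces the paper to re-enter the critic-tracking machinery (\Cref{lemma7} and \cref{eq: 49}, \cref{eq: 54}) to bound $\mE[\ltwo{G_{\text{DR}}-\nabla_w J}]$ and $\mE[\ltwo{G_{\text{DR}}-\nabla_w J}^2]$ along the trajectory, to bound $\ltwo{\nabla_w J(w_t)-w^*_t}\leq (1+1/\lambda_F)\ltwo{\nabla_w J(w_t)}$, and only then to invoke \Cref{thm2}. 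You instead prove the gradient-domination inequality $J(\pi^*)-J(w)\leq \frac{\epsilon_{compat}}{1-\gamma}+\frac{C_{sc}}{\lambda_F}\ltwo{\nabla_w J(w)}$ from the performance difference lemma, Jensen/Cauchy--Schwarz, and \Cref{ass: fisher} (essentially the argument of \cite{liu2020improved}, which the paper itself cites to justify that assumption), and then use \Cref{thm2} as a black box; since the inequality is linear in the gradient norm, the $1/\sqrt{T}$, $1/\sqrt{N}$, and doubly robust rates transfer verbatim. Your route is shorter and more modular --- it needs nothing from the critic analysis beyond the statement of \Cref{thm2} and avoids the KL term altogether --- while the paper's route keeps the mirror-descent structure explicit; but since the paper ultimately bounds the NPG mismatch by the gradient norm anyway, both arguments reduce to the same quantity and yield the same rate. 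Two minor bookkeeping points: your reliance on \Cref{ass4} for the score bound $C_{sc}$ is no worse than the paper's, whose own proof also uses $C_{sc}$ and $L_{sc}$ through Proposition~2 even though the theorem statement omits that assumption; and the $\frac{1}{1-\gamma}$ factor in your performance difference lemma depends on the paper's (internally ambiguous) normalization of $J$, but the $(1-\gamma)$ factor built into the definition of $\epsilon_{compat}$ is calibrated exactly to your decomposition, and either normalization yields the claimed bound.
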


\Cref{thm3} shows that \Cref{algorithm_drpg} is guaranteed to converge to the global optimum at a sublinear rate, and the optimality gap is bounded by $\Theta(\epsilon_{compat}) + \Theta(\epsilon_\rho \epsilon_{d_q} + \epsilon_{d_\rho}\epsilon_q + \epsilon_\rho \epsilon_q)$. Note that the error term $\Theta(\epsilon_{compat})$ is introduced by the parametrization of policy and thus exists even for exact policy gradient algorithm \cite{liu2020improved,wang2019neural}. The global convergence of DR-Off-PAC in \Cref{thm3} also enjoys doubly robust optimality gap as in \Cref{thm2}. By letting $T=\Theta(1/\epsilon^2)$ and $N=\Theta(1/\epsilon^2)$, \Cref{algorithm_drpg} converges to an $\epsilon$-level global optimum (besides the approximation errors) with a total sample complexity $NT=\Theta(1/\epsilon^4)$. This result matches the global convergence rate of single-loop actor-critic in \cite{xu2020non,fu2020single}.


\section{Experiments}\label{sc: exp}
We conduct empirical experiments to answer the following two questions: (a) does the overall convergence of DR-Off-PAC doubly robust to function approximation errors as \Cref{thm2} \& \ref{thm3} indicate? (2) how does DR-Off-PAC compare with other off-policy methods? 
\begin{figure}[ht]
	\vskip 0.2in
	\begin{center}
		\centerline{\includegraphics[width=65mm]{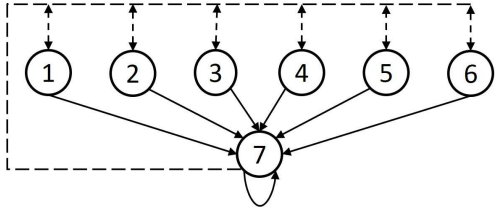}}
		\caption{A variant of Baird's counterexample.}
		\label{fig: baird}
	\end{center}
	\vskip -0.2in
\end{figure}

We consider a variant of Baird's counterexample \cite{baird1995residual,sutton2018reinforcement} as shown in \Cref{fig: baird}. There are two actions represented by solid line and dash line, respectively. The {\em solid} action always leads to state 7 and a reward $0$, and the {\em dash} action leads to states 1-6 with equal probability and a reward $+1$. The initial distribution $\mu_0$ chooses all states $s$ with equal probability $\frac{1}{7}$ and the behavior distribution chooses all state-action pairs $(s,a)$ with equal probability $\frac{1}{14}$. We consider two types of one-hot features for estimating the nuisances: complete feature (CFT) and incomplete feature (INCFT), where CFT for each $(s,a)$ lies in $\mR^{14}$ and INCFT for each $(s,a)$ lies in $\mR^{d}$ with $(d<14)$. Note that CFT has large enough expressive power so that the approximation error is zero, while INCFT does not have enough expressive power, and thus introduces non-vanishing approximation errors. In our experiments, we consider fixed learning rates $0.1$, $0.5$, $0.1$, $0.05$, $0.01$ for updating $w$, $\theta_q$, $\theta_\psi$, $\theta_{dq}$, and $\theta_{d\rho}$, respectively, and we set the mini-batch size as $N = 5$. All curves are averaged over 20 independent runs.

{\bf Doubly Robust Optimality Gap:} We first investigate how the function approximation error affects the optimality gap of the overall convergence of DR-Off-PAC. In this experiment, we set the dimension of INCFTs as $0$, which results in trivial critics that always provide constant estimations.
\begin{figure}[ht]
	\vskip -0.08in
	\begin{center}
		\centerline{\includegraphics[width=60mm]{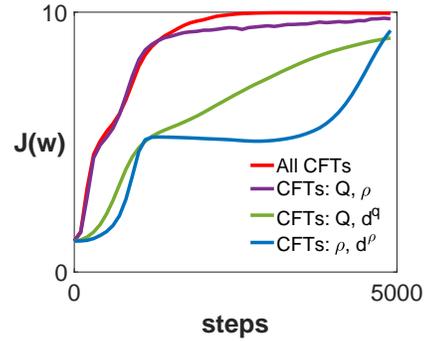}}
		\caption{DR-Off-PAC under difference feature settings.}
		\label{fig: dr-convergence}
	\end{center}
	\vskip -0.2in
\end{figure}
We consider the following four feature settings for critics to estimate the nuisance functions $(Q,\rho, d^q, d^\rho)$: 
{\bf(I)} all nuisances with CFTs.
{\bf(II)} $(Q,\rho)$ with CFTs and $(d^\rho, d^q)$ with INCFTs; 
{\bf(III)} $(Q, d^q)$ with CFTs and $(\rho, d^\rho)$ with INCFTs; 
{\bf(IV)} $(\rho, d^\rho)$ with CFTs and $(Q,d^q)$ with INCFTs.
The results are provided in \Cref{fig: dr-convergence}. We can see that DR-Off-PAC with all nuisances estimated by CFTs (red line) enjoys the fastest convergence speed and smallest optimality gap, and DR-Off-PAC with only two nuisances estimated with CFTs can still converge to the same optimal policy as the red line, validating the doubly robust optimality gap in the overall convergence characterized by \Cref{thm2} and \Cref{thm3}.

{\bf Comparison to AC-DC: }As we have mentioned before, previous provably convergent off-policy actor-critic algorithms introduce an additional critic to correct the distribution mismatch \cite{liu2019off,zhang2019provably}. Such a strategy can be viewed as a special case of DR-Off-PAC when both $\theta_{{d}^q}$ and $\theta_{\psi}$ equal zero. Here we call such a type of algorithms as actor-critic with distribution correction (AC-DC). In this experiment, we set the dimension of INCFTs as 4 and compare the convergence of DR-Off-AC and AC-DC in the settings considered in our previous experiment.
\begin{figure}[ht]
	\vskip -0.08in
	\begin{center}
		\centerline{\includegraphics[width=80mm]{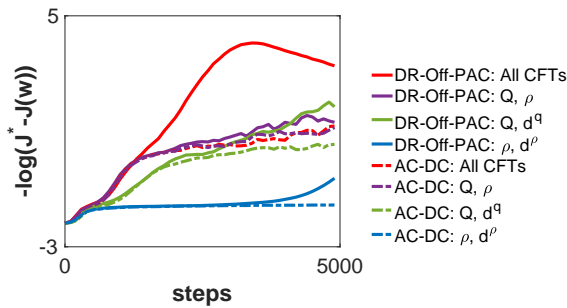}}
		\caption{Comparison between DR-Off-PAC and AC-DC.}
		\label{fig: comparison}
	\end{center}
	\vskip -0.2in
\end{figure}
The learning curves of DR-Off-PAC and AC-DC are reported in \Cref{fig: comparison}. We can see that the overall convergence of DR-Off-PAC (each solid line) outperforms that of AC-DC (dash line with the same color) for all feature settings (where each color corresponds to one feature setting). Specifically, In {\bf(III)} or {\bf(IV)}, when either $Q$ or $\rho$ is estimated with incomplete features, the performance of AC-DC is significantly impeded by the approximation error and thus has lower accuracy, whereas DR-Off-PAC has better convergence performance by mitigating the effect of such approximation errors via the doubly robust property. Interestingly, even in the settings where both $Q$ and $\rho$ are estimated with complete features ($S_1$ and $S_2$) so that AC-DC is expected to achieve zero optimality gap, our DR-Off-PAC still converges faster and more accurately than AC-DC, demonstrating that DR-Off-PAC can improve the convergence of AC-DC even when both $\rho$ and $Q$ are estimated with a complete approximation function class.

\section{Conclusion}
In this paper, we first develop a new doubly robust policy gradient estimator for an infinite-horizon discounted MDP, and propose new methods to estimate the nuisances in the off-policy setting. Based on such an estimator, we propose a doubly robust off-policy algorithm called DR-Off-PAC for solving the policy optimization problem. We further study the finite-time convergence of DR-Off-PAC under the single timescale update setting. We show that DR-Off-PAC provably converges to the optimal policy, with the optimality gap being doubly robust to approximation errors that depend only on the expressive power of function classes. For future work, it is interesting to incorporate variance reduction technique \cite{xu2020reanalysis,cutkosky2019momentum} to DR-Off-PAC to improve its convergence performance.


\section*{Acknowledgements}
The work of T. Xu and Y. Liang was supported in part by the U.S. National Science Foundation under the grants CCF-1761506 and CCF-1900145. Z. Wang acknowledges National Science Foundation (Awards 2048075, 2008827, 2015568, 1934931), Simons Institute (Theory of Reinforcement Learning), Amazon, J.P. Morgan, and Two Sigma for their supports. Z. Yang acknowledges Simons Institute (Theory of Reinforcement Learning).


\bibliography{ref}
\bibliographystyle{icml2021}

\onecolumn
\newpage
\appendix
\noindent {\Large \textbf{Supplementary Materials}}
\section{Derivation of Doubly Robust Policy Gradient Estimator}
In this section, we introduce how to derive the doubly robust policy gradient $G_{\text{DR}}(w)$ in \cref{dr_pg}

Consider the setting of off-policy sampling specified in \Cref{sc: bkgrd}. Note that $J(w)$ has the following alternative form:
\begin{flalign}\label{eq: 70}
J(w) = & (1-\gamma)\mE_{\mu_0}\big[V_{\pi_w}(s_0)\big] + \mE_d\big[\rho_{\pi_w}(s,a)(r(s,a,s^\prime) - Q_{\pi_w}(s,a) + \gamma \mE\big[V_{\pi_w}(s^\prime)|s,a\big])\big],
\end{flalign}
where $\rho_{\pi_w}(s,a)=\nu_{\pi_w}(s,a)/d(s,a)$ denotes the {\em distribution correction ratio}. With a sample $(s, a, r, s^\prime,a^\prime)\sim \mcd_d\cdot\pi_w(\cdot)$ and a sample $(s_{0}, a_{0})\sim \mu_0\cdot\pi_{w}(\cdot)$, we can formulate the following stochastic estimator of $J(w)$:
\begin{flalign}\label{eq: 71}
\hat{J}(w) =  \underbrace{(1-\gamma)V_{\pi_w}(s_{0})}_{\text{unbiased estimator}} + \underbrace{\rho_{\pi_w}(s,a)(r(s,a,s^\prime) - Q_{\pi_w}(s,a) + \gamma V_{\pi_w}(s^\prime))}_{\text{baseline}}.
\end{flalign}
Note that the first term in \cref{eq: 71} is an unbiased estimator of $J(w)$ and the second term in \cref{eq: 71} is the baseline that can help to reduce the variance \cite{jiang2016doubly,huang2020importance}. Note that if we replace the value functions $V_{\pi_w}$, $Q_{\pi_w}$ and the density ratio $\rho_{\pi_{w}}$ with their estimators $\hat{V}_{\pi_w}$ $\hat{Q}_{\pi_w}$, and $\hat{\rho}_{\pi_{w}}$, respectively, we can obtain a doubly robust bias reduced value function estimator \cite{tang2019doubly}. Next, we take the derivative of $\hat{J}(w)$ to obtain an unbiased estimator of $\nabla {J}(w)$ which takes the following form:
\begin{flalign}\label{eq: 72}
\nabla_w\hat{J}(w) &= (1-\gamma)d^v_{\pi_w}(s_{0}) + d^\rho_{\pi_w}(s,a)(r(s,a,s^\prime) - Q_{\pi_w}(s,a) + \gamma V_{\pi_w}(s^\prime)) \nonumber\\
&\quad + \rho_{\pi_w}(s,a)(- d^q_{\pi_w}(s,a) + \gamma d^v_{\pi_w}(s^\prime))\nonumber\\
&= (1-\gamma)\mE_{\pi_w}[{Q}_{\pi_w}(s_{0},a_{0})\nabla_w\log\pi_w(s_{0},a_{0}) + {d}^q_{\pi_w}(s_{0},a_{0})] \nonumber\\
&\quad + d^\rho_{\pi_w}(s,a)(r(s,a,s^\prime) - Q_{\pi_w}(s,a) + \gamma \mE_{\pi_w}[Q_{\pi_w}(s^\prime,a^\prime)]) \nonumber\\
&\quad + \rho_{\pi_w}(s,a)(- d^q_{\pi_w}(s,a) + \gamma \mE_{\pi_w}[{Q}_{\pi_w}(s^\prime_{i},a^\prime_{i})\nabla_w\log\pi_w(s^\prime_{i},a^\prime_{i}) + {d}^q_{\pi_w}(s^\prime_{i},a^\prime_{i})] ),
\end{flalign}
where $d^v_{\pi_w}$, $d^q_{\pi_w}$, $d^\rho_{\pi_w}$ denote $\nabla_w V_{\pi_w}$, $\nabla_w Q_{\pi_w}$, $\nabla_w \rho_{\pi_w}$, respectively. Given samples $s_{0}\sim \mu_0(\cdot)$, $a_{0}\sim \pi_w(\cdot|s_{0})$ and $(s,a,r,s^\prime)\sim \mcd_d$, $a^\prime\sim\pi_w(\cdot|s^\prime)$, and replace ${Q}_{\pi_w}$, ${\rho}_{\pi_w}$, ${d}^\rho_{\pi_w}$ and ${d}^q_{\pi_w}$ with estimators $\hat{Q}_{\pi_w}$, $\hat{\rho}_{\pi_w}$, $\hat{d}^\rho_{\pi_w}$ and $\hat{d}^q_{\pi_w}$, respectively, we can obtain the following doubly robust estimator $G_{\text{DR}}(w)$:
\begin{flalign}\label{eq: 73}
	G_{\text{DR}}(w) &= (1-\gamma)\Big(\hat{Q}_{\pi_w}(s_{0},a_{0})\nabla_w\log\pi_w(a_{0}|s_{0}) + \hat{d}^q_{\pi_w}(s_{0},a_{0})\Big) + \hat{d}^\rho_{\pi_w}(s,a)\left(r(s,a,s^\prime) - \hat{Q}_{\pi_w}(s,a) + \gamma \hat{Q}_{\pi_w}(s^\prime,a^\prime)\right) \nonumber\\
	&\quad + \hat{\rho}_{\pi_w}(s,a)\Big[- \hat{d}^q_{\pi_w}(s,a) + \gamma \Big( \hat{Q}_{\pi_w}(s^\prime,a^\prime)\nabla_w\log\pi_w(a^\prime|s^\prime) + \hat{d}^q_{\pi_w}(s^\prime,a^\prime) \Big)\Big].
\end{flalign}

{\bf Connection with other off-policy gradient estimators: }Our doubly robust estimator $G_{\text{DR}}$ can recover a number of existing off-policy policy gradient estimators as special cases by deactivating certain estimators, i.e., letting those estimators be zero.

(1) Deactivating $\hat{d}^q_{\pi_w}$ and $\hat{d}^\rho_{\pi_w}$: In this case, $G_{\text{DR}}(w)$ takes the following form
\begin{flalign}\label{eq: 74}
	{G}^I_{\text{DR}}(w) &= (1-\gamma)\hat{Q}_{\pi_w}(s_{0},a_{0})\nabla_w\log\pi_w(a_{0}|s_{0}) + \gamma \hat{\rho}_{\pi_w}(s,a) \Big( \hat{Q}_{\pi_w}(s^\prime,a^\prime)\nabla_w\log\pi_w(a^\prime|s^\prime) + \hat{d}^q_{\pi_w}(s^\prime,a^\prime) \Big)\nonumber\\
	&=\hat{\rho}_{\pi_w}(s,a)\mE_{s^\prime\sim\tilde{\msP}(\cdot|s,a),\tilde{a}^\prime\sim\pi_{w}(\cdot|\tilde{s}^\prime)}\left[\hat{Q}_{\pi_w}(\tilde{s}^\prime,\tilde{a}^\prime)\nabla_w\log\pi_w(\tilde{a}^\prime|\tilde{s}^\prime)\right],
\end{flalign}
where $(\tilde{s}^\prime,\tilde{a}^\prime)$ is generated using the method in the discussion of Critic III in \Cref{subsc: nuisances}. Note that the policy gradient $\nabla_wJ(w)$ has the following equivalent form
\begin{flalign*}
	\nabla_wJ(w) = \mE_{(s,a)\sim\mcd}\left[ {\rho}_{\pi_w}(s,a)\mE_{s^\prime\sim\tilde{\msP}(\cdot|s,a),\tilde{a}^\prime\sim\pi_{w}(\cdot|\tilde{s}^\prime)}\left[{Q}_{\pi_w}(\tilde{s}^\prime,\tilde{a}^\prime)\nabla_w\log\pi_w(\tilde{a}^\prime|\tilde{s}^\prime)\big|(s,a)\right] \right].
\end{flalign*}
Thus, ${G}^I_{\text{DR}}(w)$ can be viewed as an off-policy policy gradient estimator with only approximations of $\rho_{\pi_{w}}$ and $Q_{\pi_w}$. Such an estimator has been adopted in the previous studies of provably convergent off-policy actor-critic \cite{zhang2019provably,liu2019off}, which is referred as AC-DC in our experiment in \Cref{sc: exp}.
Such an estimator has also been adopted by many off-policy actor-critic algorithms such as ACE \cite{imani2018off}, Geoff-PAC \cite{zhang2019generalized}, OPPOSD \cite{liu2019off} and COF-PAC \cite{zhang2019provably}.

(2) Deactivating $\rho_{\pi_w}$ and $\hat{d}^\rho_{\pi_w}$: In this case, $G_{\text{DR}}(w)$ has the following form
\begin{flalign}\label{eq: 75}
	 {G}^{II}_{\text{DR}}(w) = (1-\gamma)\Big(\hat{Q}_{\pi_w}(s_{0},a_{0})\nabla_w\log\pi_w(a_{0}|s_{0}) + \hat{d}^q_{\pi_w}(s_{0},a_{0})\Big).
\end{flalign}
Such an estimator ${G}^{II}_{\text{DR}}(w)$ can be viewed as the one adopted by off-policy DPG/DDPG \cite{silver2014deterministic,lillicrap2016continuous} when the policy $\pi_w$ converges to a deterministic policy.

(3) Deactivting $\hat{Q}_{\pi_w}$ and $\hat{d}^q_{\pi_w}$: In this case, $G_{\text{DR}}(w)$ has the following form
\begin{flalign}\label{eq: 76}
	{G}^{III}_{\text{DR}}(w) = \hat{d}^\rho_{\pi_w}(s,a)r(s,a,s^\prime).
\end{flalign}
This off-policy policy gradient estimator has been adopted in \cite{morimura2010derivatives} for averaged MDP setting.

\section{Proof of \Cref{thm1}}

	Without specification, the expectation is taken with respect to the randomness of samples $(s, a, r(s,a,s^\prime), s^\prime)$ and $s_0$, in which $(s,a)\sim d(\cdot)$, $s^\prime\sim\msP(\cdot|s,a)$ and $s_0\sim \mu_0(\cdot)$. If $a^\prime$ or $a_0$ appears, then the expectation is taken with respect to the the policy i.e., $a^\prime\sim\pi_w(\cdot|s^\prime)$ and $a_0\sim\pi_w(\cdot|s_0)$. We compute the bias of $G_{\text{DR}}(w)$ as follows.
	\begin{flalign}
		&\mE[G_{\text{DR}}(w)] - \nabla_w J(w)\nonumber\\
		&=(1-\gamma)\mE[\hat{d}^v_{\pi_w}(s_0)] + \mE[\hat{d}^\rho_{\pi_w}(s,a)(r(s,a,s^\prime) - \hat{Q}_{\pi_w}(s,a) + \gamma \hat{V}_{\pi_w}(s^\prime))] + \mE[\hat{\rho}_{\pi_w}(s,a)(- \hat{d}^q_{\pi_w}(s,a) + \gamma \hat{d}^v_{\pi_w}(s^\prime))]\nonumber\\
		&\quad -(1-\gamma)\mE[{d}^v_{\pi_w}(s_0)] - \mE[{d}^\rho_{\pi_w}(s,a)(r(s,a,s^\prime) - {Q}_{\pi_w}(s,a) + \gamma {V}_{\pi_w}(s^\prime))] - \mE[{\rho}_{\pi_w}(s,a)(- {d}^q_{\pi_w}(s,a) + \gamma {d}^v_{\pi_w}(s^\prime))]\nonumber\\
		&= \mE[(\hat{\rho}_{\pi_w}(s,a) - {\rho}_{\pi_w}(s,a))(-\hat{d}^q_{\pi_w}(s,a) + {d}^q_{\pi_w}(s,a))] + \mE[(-\hat{d}^\rho_{\pi_w}(s,a) + {d}^\rho_{\pi_w}(s,a))(-\hat{Q}_{\pi_w}(s,a) + {Q}_{\pi_w}(s,a))]\nonumber\\
		&\quad + \gamma\mE[(\hat{\rho}_{\pi_w}(s,a) - {\rho}_{\pi_w}(s,a))(\hat{d}^v_{\pi_w}(s) - {d}^v_{\pi_w}(s))] + \gamma\mE[(\hat{d}^\rho_{\pi_w}(s,a) - {d}^\rho_{\pi_w}(s,a))(\hat{V}_{\pi_w}(s^\prime) - {V}_{\pi_w}(s^\prime))]\nonumber\\
		&\quad + \mE[{d}^\rho_{\pi_w}(s,a)(-\hat{Q}_{\pi_w}(s,a) + Q_{\pi_w}(s,a))] + \mE[\rho_{\pi_w}(s,a)(-\hat{d}^q_{\pi_w}(s,a) + {d}^q_{\pi_w}(s,a))] \nonumber\\
		&\quad + \gamma \mE[{d}^\rho_{\pi_w}(s,a)(\hat{V}_{\pi_w}(s^\prime) - V_{\pi_w}(s^\prime))] + \gamma \mE[\rho_{\pi_w}(s,a)(\hat{d}^v_{\pi_w}(s^\prime) - {d}^v_{\pi_w}(s^\prime))] + (1-\gamma)\mE[\hat{d}^v_{\pi_w}(s_0) - {d}^v_{\pi_w}(s_0)]\nonumber\\
		&\quad + \mE[(\hat{\rho}_{\pi_w}(s,a) - {\rho}_{\pi_w}(s,a))(-d^q_{\pi_w}(s,a) + \gamma d^v_{\pi_w}(s^\prime))] \nonumber\\
		&\quad + \mE[(\hat{d}^\rho_{\pi_w}(s,a) - {d}^\rho_{\pi_w}(s,a))(r(s,a,s^\prime) - Q_{\pi_w}(s,a)+\gamma V_{\pi_w}(s^\prime))]\nonumber\\
		&= -\mE[\varepsilon_{\rho}(s,a)\varepsilon_{d^q}(s,a)] - \mE[\varepsilon_{d^\rho}(s,a)\varepsilon_{q}(s,a)] + \gamma\mE[\varepsilon_{\rho}(s,a)\varepsilon_{d^v}(s^\prime)] + \gamma\mE[\varepsilon_{\rho}(s,a)\varepsilon_{v}(s^\prime)]\nonumber\\
		&\quad + S_1 + S_2 + S_3,\label{eq: 3}
	\end{flalign}
	where
	\begin{flalign*}
		S_1 &= \mE[{d}^\rho_{\pi_w}(s,a)(-\hat{Q}_{\pi_w}(s,a) + Q_{\pi_w}(s,a))] + \mE[\rho_{\pi_w}(s,a)(-\hat{d}^q_{\pi_w}(s,a) + {d}^q_{\pi_w}(s,a))] \nonumber\\
		&\quad + \gamma \mE[{d}^\rho_{\pi_w}(s,a)(\hat{V}_{\pi_w}(s^\prime) - V_{\pi_w}(s^\prime))] + \gamma \mE[\rho_{\pi_w}(s,a)(\hat{d}^v_{\pi_w}(s^\prime) - {d}^v_{\pi_w}(s^\prime))] + (1-\gamma)\mE[\hat{d}^v_{\pi_w}(s_0) - {d}^v_{\pi_w}(s_0)],\nonumber\\
		S_2 &= \mE[(\hat{\rho}_{\pi_w}(s,a) - {\rho}_{\pi_w}(s,a))(-d^q_{\pi_w}(s,a) + \gamma d^v_{\pi_w}(s^\prime))],\nonumber\\
		S_3 & = \mE[(\hat{d}^\rho_{\pi_w}(s,a) - {d}^\rho_{\pi_w}(s,a))(r(s,a,s^\prime) - Q_{\pi_w}(s,a)+\gamma V_{\pi_w}(s^\prime))].
	\end{flalign*}
	We then proceed to show that $S_1=S_2=S_3=0$. First consider $S_1$. Following from the definitions of $\hat{d}^v_{\pi_w}$ and ${d}^v_{\pi_w}$, we have
	\begin{flalign}
		S_1 &= \mE[{d}^\rho_{\pi_w}(s,a)(-\hat{Q}_{\pi_w}(s,a) + Q_{\pi_w}(s,a))] + \mE[\rho_{\pi_w}(s,a)(-\hat{d}^q_{\pi_w}(s,a) + {d}^q_{\pi_w}(s,a))] + \gamma \mE[{d}^\rho_{\pi_w}(s,a)(\hat{V}_{\pi_w}(s^\prime) - V_{\pi_w}(s^\prime))] \nonumber\\
		&\quad + \gamma \mE[\rho_{\pi_w}(s,a)( \mE[\hat{Q}_{\pi_w}(s^\prime,a^\prime)\nabla_w\log\pi_w(s^\prime,a^\prime) + \hat{d}^q_{\pi_w}(s^\prime)|s^\prime] - \mE[{Q}_{\pi_w}(s^\prime,a^\prime)\nabla_w\log\pi_w(s^\prime,a^\prime) + {d}^q_{\pi_w}(s^\prime)|s^\prime] )] \nonumber\\
		&\quad + (1-\gamma)\mE[ \mE[\hat{Q}_{\pi_w}(s_0,a_0)\nabla_w\log\pi_w(s_0,a_0) + \hat{d}^q_{\pi_w}(s_0)|s_0] - \mE[{Q}_{\pi_w}(s_0,a_0)\nabla_w\log\pi_w(s_0,a_0) + {d}^q_{\pi_w}(s_0)|s_0]]\nonumber\\
		&=\mE[\rho_{\pi_w}(s,a)(d^q_{\pi_w}(s,a)-\hat{d}^q_{\pi_w}(s,a))] - \gamma\mE[\mE[d^q_{\pi_w}(s^\prime,a^\prime) - \hat{d}^q_{\pi_w}(s^\prime,a^\prime)|(s,a)\sim \nu_{\pi_w}(s,a)]] \nonumber\\
		&\quad - (1-\gamma)\mE[d^q_{\pi_w}(s_0,a_0) - \hat{d}^q_{\pi_w}(s_0,a_0)] \nonumber\\
		&\quad + \mE[d^\rho_{\pi_w}(s,a)(Q_{\pi_w}(s,a) - \hat{Q}_{\pi_w}(s,a))] - \gamma\mE[d^\rho_{\pi_w}(s,a)(V_{\pi_w}(s^\prime) - \hat{V}_{\pi_w}(s^\prime))]\nonumber\\
		&\quad -\gamma\mE[\rho_{\pi_w}(s,a)\mE[(Q_{\pi_w}(s^\prime,a^\prime) - \hat{Q}_{\pi_w}(s^\prime,a^\prime))\nabla_w\log\pi_w(s^\prime,a^\prime)|s,a]] \nonumber\\
		&\quad - (1-\gamma)\mE[(Q_{\pi_w}(s_0,a_0) - \hat{Q}_{\pi_w}(s_0,a_0))\nabla_w\log\pi_w(s_0,a_0)]\label{eq: 1}.
	\end{flalign}
	For the first three terms in \cref{eq: 1}, we have
	\begin{flalign}
	    \mE&[\rho_{\pi_w}(s,a)(d^q_{\pi_w}(s,a)-\hat{d}^q_{\pi_w}(s,a))] - \gamma\mE[\mE[d^q_{\pi_w}(s^\prime,a^\prime) - \hat{d}^q_{\pi_w}(s^\prime,a^\prime)|(s,a)\sim \nu_{\pi_w}(s,a)]] \nonumber\\
	    &\qquad\qquad - (1-\gamma)\mE[d^q_{\pi_w}(s_0,a_0) - \hat{d}^q_{\pi_w}(s_0,a_0)]\nonumber\\
		&= \mE_{(s,a)\sim\nu_{\pi_w}}[d^q_{\pi_w}(s,a)-\hat{d}^q_{\pi_w}(s,a)] - \gamma\mE_{(s,a)\sim\nu_{\pi_w}}[\mE[d^q_{\pi_w}(s^\prime,a^\prime) - \hat{d}^q_{\pi_w}(s^\prime,a^\prime)|s,a]] \nonumber\\
		&\quad - (1-\gamma)\mE[d^q_{\pi_w}(s_0,a_0) - \hat{d}^q_{\pi_w}(s_0,a_0)]\nonumber\\
		&\overset{(i)}{=}\mE_{(s,a)\sim\nu_{\pi_w}}[d^q_{\pi_w}(s,a)-\hat{d}^q_{\pi_w}(s,a)] - \mE_{(s,a)\sim\nu_{\pi_w}}[\mE[d^q_{\pi_w}(s^\prime,a^\prime)-\hat{d}^q_{\pi_w}(s^\prime,a^\prime)|s^\prime\sim \tilde{\msP}(\cdot|s,a), a^\prime\sim\pi_w(\cdot|s^\prime)]]\nonumber\\
		&\overset{(ii)}{=}\mE_{(s,a)\sim\nu_{\pi_w}}[d^q_{\pi_w}(s,a)-\hat{d}^q_{\pi_w}(s,a)] - \mE_{(s^\prime,a^\prime)\sim\nu_{\pi_w}}[d^q_{\pi_w}(s^\prime,a^\prime)-\hat{d}^q_{\pi_w}(s^\prime,a^\prime)]\nonumber\\
		&=0,\nonumber
	\end{flalign}
	where $(i)$ follows from the definition $\tilde{\msP}(\cdot|s,a) = \gamma \msP(\cdot|s,a) + (1-\gamma)\mu_0(\cdot)$, and $(ii)$ follows from the fact that $\nu_{\pi_w}$ is the stationary distribution of MDP with the transition kernel $\tilde{\msP}(\cdot|s,a)$ and policy $\pi_w$, i.e., $\pi_w(a^\prime|s^\prime)\sum_{(s,a)}\nu_{\pi_w}(s,a)\tilde{\msP}(s^\prime|s,a)=\nu_{\pi_w}(s^\prime,a^\prime)$. For the last four terms in \cref{eq: 1}, note that for any function $f(s,a)$, we have the following holds
	\begin{flalign}
		\mE&[d^\rho_{\pi_w}(s,a)f(s,a)] - \gamma\mE[d^\rho_{\pi_w}(s,a)\mE[f(s^\prime,a^\prime)|s^\prime]] -\gamma\mE[\rho_{\pi_w}(s,a)\mE[ f(s^\prime,a^\prime) \nabla_w\log\pi_w(s^\prime,a^\prime)|s,a]] \nonumber\\
		&\quad\quad - (1-\gamma)\mE[f(s_0,a_0)\nabla_w\log\pi_w(s_0,a_0)]\nonumber\\
		&=\nabla_w \mE[\rho_{\pi_w}(s,a)f(s,a)] - \gamma \nabla_w\mE[\rho_{\pi_w}(s,a)\mE[f(s^\prime,a^\prime)|s^\prime]] - (1-\gamma)\nabla_w\mE[\mE[f(s_0,a_0)|s_0]]\nonumber\\
		&=\nabla_w(\mE[\rho_{\pi_w}(s,a)f(s,a)] - \gamma \mE[\rho_{\pi_w}(s,a)\mE[f(s^\prime,a^\prime)|s^\prime]] - (1-\gamma)\nabla_w\mE[\mE[f(s_0,a_0)|s_0]] )\nonumber\\
		&\overset{(i)}{=}\nabla_w (\mE_{(s,a)\sim\nu_{\pi_w}}[f(s,a)] - \mE_{(s^\prime,a^\prime)\sim\nu_{\pi_w}}[f(s^\prime,a^\prime)])\nonumber\\
		&=0,\nonumber
	\end{flalign}
	where $(i)$ follows from the reasons similar to how we proceed \cref{eq: 1}. Letting $f(s,a) = Q_{\pi_w}(s,a) - \hat{Q}_{\pi_w}(s,a)$, we can then conclude that the summation of the last four terms in \cref{eq: 1} is $0$, which implies $S_1=0$.
	
	We then consider the term $S_2$. Note that for any function $f(s,a)$, we have
	\begin{flalign}
		\mE&[f(s,a)(-d^q_{\pi_w}(s,a) + \gamma d^v_{\pi_w}(s^\prime))]\nonumber\\
		&=\nabla_w\mE[f(s,a)(r(s,a,s^\prime) + \gamma V_{\pi_w}(s^\prime) - Q_{\pi_w}(s,a))]\nonumber\\
		&=\nabla_w\mE[f(s,a)(\mE[r(s,a,s^\prime)|s] + \gamma \mE[V_{\pi_w}(s^\prime)|s,a] -Q_{\pi_w}(s,a) )]\nonumber\\
		&=0.
	\end{flalign}
	Letting $f(s,a) = \hat{\rho}_{\pi_w}(s,a) - {\rho}_{\pi_w}(s,a)$, we can then conclude that $S_2=0$. To consider $S_3$, we proceed as follows:
	\begin{flalign*}
		S_3 & = \mE[(\hat{d}^\rho_{\pi_w}(s,a) - {d}^\rho_{\pi_w}(s,a))(r(s,a,s^\prime) - Q_{\pi_w}(s,a)+\gamma V_{\pi_w}(s^\prime))]\nonumber\\
		&= \mE[(\hat{d}^\rho_{\pi_w}(s,a) - {d}^\rho_{\pi_w}(s,a))(\mE[r(s,a,s^\prime)|s,a] - Q_{\pi_w}(s,a)+\gamma \mE[V_{\pi_w}(s^\prime)|s,a])]\nonumber\\
		&=0.
	\end{flalign*}
	Since we have shown that $S_1=S_2=S_3=0$, \cref{eq: 3} becomes
	\begin{flalign}
		\mE&[G_{\text{DR}}(w)] - \nabla_w J(w)\nonumber\\
		&= -\mE[\varepsilon_{\rho}(s,a)\varepsilon_{d^q}(s,a)] - \mE[\varepsilon_{d^\rho}(s,a)\varepsilon_{q}(s,a)] + \gamma\mE[\varepsilon_{\rho}(s,a)\varepsilon_{d^v}(s^\prime)] + \gamma\mE[\varepsilon_{\rho}(s,a)\varepsilon_{v}(s^\prime)],\nonumber
	\end{flalign}
	which completes the proof.

\section{Supporting Lemmas for \Cref{thm2}}\label{sc: lemma_pf_thm2}
In order to develop the property for Critic I's update in \cref{eq: c1}, we first introduce the following definitions.

Given a sample from mini-batch $(s_i, a_i, r_i, s^\prime_i)$, $a^\prime_i\sim\pi_{w_t}(\cdot|s^\prime_i)$ and a sample $s_{0,i}\sim \mu_0$, we define the following matrix $M_{i,t}\in\mR^{(2d_1+1)\times (2d_1+1)}$ and vector $m_{i,t}\in \mR^{2d_1+1\times 1}$
\begin{flalign}
M_{i,t} &= \left[\begin{array}{ccc}
-\phi_i^\top\phi_i & -(\phi_i-\gamma\phi^\prime_i)\phi^\top_i & 0 \\
\phi_i(\phi^\top_i-\gamma\phi^{\prime\top}_i) & 0 & -\phi_i \\
0 & \phi^\top_i & -1
\end{array}
\right],\qquad
m_{i,t} = \left[\begin{array}{c}
(1-\gamma)\phi_{0,i}\\
-r_i\phi_i\\
-1
\end{array}
\right].\label{eq: 66}
\end{flalign}
Moreover, consider the matrix $M_{i,t}$. We have the following holds
\begin{flalign}\label{eq: 77}
\lF{M_{i,t}}^2 &= \lF{\phi_i^\top\phi_i}^2 + 2\lF{(\phi_i-\gamma\phi^\prime_i)\phi^\top_i}^2 + 2\ltwo{\phi_i}^2 + 1\nonumber\\
&\leq C^4_\phi + 2 (1+\gamma)^2C^4_\phi + 2C^2_\phi + 1,
\end{flalign}
which implies $\lF{M_{i,t}}\leq C_M$, where $C_M = \sqrt{9C^4_\phi + 2C^2_\phi + 1}$. For the vector $m_{i,t}$, we have
\begin{flalign}\label{eq: 78}
\ltwo{m_{i,t}}^2 \leq (1-\gamma)^2\ltwo{\phi_i}^2 + r^2_{\max}\ltwo{\phi_i}^2 + 1  \leq [(1-\gamma)^2 + r^2_{\max}]C^2_\phi + 1,
\end{flalign}
which implies $\ltwo{m_{i,t}}\leq C_m$, where $C_m = \sqrt{(1+r^2_{\max})C^2_\phi + 1}$.

We define the semi-stochastic-gradient as $g_{i,t}(\kappa) = M_{i,t}\kappa + m_{i,t}$. Then the iteration in \cref{eq: c1} can be rewritten as
\begin{flalign}
\kappa_{t+1} = \kappa_t + \beta_1\hat{g}_t(\kappa_t),\label{eq: 7}
\end{flalign}
where $\hat{g}_t(\kappa_t) = \frac{1}{N}\sum_ig_{i,t}(\kappa_t)$. 
We also define $M_t = \mE_i[M_{i,t}]$ and $m_t = \mE_i[m_{i,t}]$, i.e., 
\begin{flalign*}
M_t &= \left[\begin{array}{ccc}
-\mE_{\mcd_d\cdot\pi_{w_t}}[\phi^\top\phi] & -\mE_{\mcd_d\cdot\pi_{w_t}}[(\phi-\gamma\phi^\prime)\phi^\top] & 0 \\
\mE_{\mcd_d\cdot\pi_{w_t}}[\phi(\phi^\top-\gamma\phi^{\prime\top})] & 0 & -\mE_{\mcd_d\cdot\pi_{w_t}}[\phi] \\
0 & \mE_{\mcd_d\cdot\pi_{w_t}}[\phi^\top] & -1
\end{array}
\right],\qquad
m_t = \left[\begin{array}{c}
(1-\gamma)\mE_{\mu_0\cdot\pi_{w_t}}[\phi]\\
-\mE_{\mcd}[r\phi]\\
-1
\end{array}
\right],
\end{flalign*}
and semi-gradient $g_t(\kappa) = M_t\kappa + m_t$. We further define the fixed point of the iteration \cref{eq: 7} as
\begin{flalign}
\kappa^*_t=M^{-1}_tm_t = [\theta^{*\top}_{q,t}, \theta^{*\top}_{\rho,t}, \eta^*_t]^\top.\label{eq: 14}
\end{flalign}

\begin{lemma}\label{lemma1}
	Consider one step update in \cref{eq: c1}. Define $\kappa_t = [\theta^\top_{q,t}, \theta^\top_{\rho,t}, \eta_t]^\top$ and $\kappa^*_t$ in \cref{eq: 14}. Let $\beta_1\leq \min\{ 1/\lambda_M, \lambda_m/52C^2_M\}$, we have
	\begin{flalign*}
		\mE\left[\ltwo{\kappa_{t+1} - \kappa^*_t}^2|\mf_t\right] \leq \left(1-\frac{1}{2}\beta_1\lambda_M \right)\ltwo{\kappa_t - \kappa^*_t}^2 + \frac{C_1}{N},
	\end{flalign*}
	where $C_1=24\beta^2_1(C^2_MR^2_\kappa + C^2_m)$.
\end{lemma}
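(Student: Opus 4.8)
The plan is to treat \cref{eq: 7} as a projected linear stochastic approximation and prove a one-step noisy contraction toward the fixed point $\kappa^*_t=M^{-1}_tm_t$ (equivalently characterized by $g_t(\kappa^*_t)=0$, i.e.\ $M_t\kappa^*_t+m_t=0$). First I would use that the projection $\Gamma_{R}$ is non-expansive and that $\kappa^*_t$ lies inside the feasible balls (guaranteed by taking the radii $R_q,R_\rho$ large enough), so that $\ltwo{\kappa_{t+1}-\kappa^*_t}\le \ltwo{\kappa_t+\beta_1\hat g_t(\kappa_t)-\kappa^*_t}$ and the projection may be dropped. Next I would split the minibatch direction into its conditional mean and a zero-mean noise term, $\hat g_t(\kappa_t)=g_t(\kappa_t)+\xi_t$ with $g_t(\kappa)=M_t\kappa+m_t$ and $\mE[\xi_t\mid\mf_t]=0$. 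Squaring and taking conditional expectation kills the cross term, leaving $\mE[\ltwo{\kappa_{t+1}-\kappa^*_t}^2\mid\mf_t]\le \ltwo{(I+\beta_1 M_t)(\kappa_t-\kappa^*_t)}^2+\beta_1^2\,\mE[\ltwo{\xi_t}^2\mid\mf_t]$, where I used the fixed-point identity $g_t(\kappa_t)=M_t(\kappa_t-\kappa^*_t)$.

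For the noise term I would bound the minibatch variance by the per-sample second moment: since $\lF{M_{i,t}}\le C_M$, $\ltwo{m_{i,t}}\le C_m$, and $\ltwo{\kappa_t}\le R_\kappa$ by the projection, each $\ltwo{g_{i,t}(\kappa_t)}\le C_MR_\kappa+C_m$, so by independence of the $N$ samples $\mE[\ltwo{\xi_t}^2\mid\mf_t]\le \frac1N(C_MR_\kappa+C_m)^2\le \frac{2}{N}(C_M^2R_\kappa^2+C_m^2)$. Multiplying by $\beta_1^2$ and absorbing absolute constants yields the $C_1/N$ term.

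The crux, and the step I expect to be the main obstacle, is the deterministic factor $\ltwo{(I+\beta_1 M_t)(\kappa_t-\kappa^*_t)}^2$. The naive route expands this as $\ltwo{x}^2+2\beta_1 x^\top M_t x+\beta_1^2\ltwo{M_tx}^2$ and hopes that $x^\top M_tx\le-\lambda_M\ltwo{x}^2$; however, the symmetric part of $M_t$ is only negative \emph{semidefinite}. Writing $M_t$ in its $(\theta_q,\theta_\rho,\eta)$ block form, the bilinear coupling blocks $-A$ and $A^\top$ cancel in $M_t+M_t^\top$, which equals $\mathrm{diag}(-2\mE[\phi\phi^\top],0,-2)$ and is degenerate in the $\theta_\rho$ directions. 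Hence a pure $\theta_\rho$ perturbation does not decay at first order, so $I+\beta_1M_t$ is not a Euclidean contraction coordinate-by-coordinate. I would therefore establish, as a separate matrix lemma, that $M_t$ is Hurwitz uniformly in $w$ --- this is exactly where \Cref{ass3} (nonsingularity of $A=\mE[(\phi-\gamma\phi')\phi^\top]$) together with $\mE[\phi\phi^\top]\succ 0$ enters --- and work in the induced Lyapunov/weighted inner product in which $-M_t$ is strongly monotone with modulus $\lambda_M$. In that norm the contraction $\ltwo{(I+\beta_1M_t)x}^2\le(1-\beta_1\lambda_M+\beta_1^2C_M^2)\ltwo{x}^2$ holds, and the quadratic overshoot $\beta_1^2C_M^2$ is dominated once $\beta_1\le \lambda_m/(52C_M^2)$, producing the factor $1-\tfrac12\beta_1\lambda_M$; the condition $\beta_1\le1/\lambda_M$ keeps this factor inside $[\tfrac12,1)$.

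Two technical points deserve care. First, the contraction modulus and the norm-equivalence constants must be \emph{uniform over} $w$, since $M_t$ changes with the policy parameter at each iteration; this uniformity follows from \Cref{ass2} and \Cref{ass3} (bounded features and a uniformly nonsingular $A$), which is what allows the same $\lambda_M,\lambda_m$ to be used at every $t$. Second, $\Gamma_R$ projects only the $\theta_q,\theta_\rho$ blocks and not $\eta$, so I would verify non-expansiveness on the projected coordinates while controlling the unprojected $\eta$-coordinate directly through the strong concavity reflected in the $(\eta,\eta)$ entry of $M_t$. Collecting the deterministic contraction with the $O(\beta_1^2/N)$ noise bound then gives the claimed recursion.
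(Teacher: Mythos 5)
Your proposal follows the same skeleton as the paper's proof of \Cref{lemma1}: drop the projection, write $\kappa_{t+1}-\kappa^*_t=(I+\beta_1 M_t)(\kappa_t-\kappa^*_t)+\beta_1\xi_t$ with conditionally zero-mean noise $\xi_t=\hat g_t(\kappa_t)-g_t(\kappa_t)$, let the conditional expectation kill the cross term, bound the minibatch fluctuation by $O(1/N)$, and absorb the $\beta_1^2$ overshoot via the stepsize condition $\beta_1\lesssim \lambda_M/C_M^2$. Your variance bound (per-sample boundedness plus independence, giving $\frac{2}{N}(C_M^2R_\kappa^2+C_m^2)$) is simpler than the paper's, which splits $\xi_t$ into $(\hat M_t-M_t)\kappa_t+(\hat m_t-m_t)$ and bounds each piece; both land within the claimed $C_1/N$. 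Your handling of the projection (non-expansiveness, $\kappa^*_t$ inside the feasible balls, the unprojected $\eta$ coordinate treated separately) is also more careful than the paper's displayed proof, which silently analyzes the unprojected update.

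The genuine divergence is at the crux, and there your diagnosis identifies a real weakness in the paper's own argument: the paper asserts, citing \cite{zhang2020gradientdice}, that $(\kappa_t-\kappa^*_t)^\top M_t(\kappa_t-\kappa^*_t)\le-\lambda_M\ltwo{\kappa_t-\kappa^*_t}^2$ in the Euclidean inner product and runs the entire expansion in the Euclidean norm. As you observe, this inequality cannot hold as stated: the skew coupling blocks cancel in $M_t+M_t^\top$, which equals $\mathrm{diag}(-2\mE[\phi\phi^\top],\,0,\,-2)$, so the quadratic form vanishes on pure $\theta_\rho$ perturbations. The Hurwitz property of $M_t$ only yields strong monotonicity in a Lyapunov-weighted inner product ($P_t M_t+M_t^\top P_t\prec 0$), which is exactly your repair. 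One caveat you should make explicit: with this repair the per-step contraction holds in the $P$-weighted norm, and it does \emph{not} transfer to a per-step Euclidean contraction, since the norm-equivalence ratio $\lambda_{\max}(P)/\lambda_{\min}(P)\ge 1$ would multiply the contraction factor. So \Cref{lemma1} as literally stated, and the recursion in \Cref{lemma7} that consumes it, should be restated in the weighted norm, invoking norm equivalence only at the endpoints; the uniformity of $P_t$ over $w$ that you flag (available from \Cref{ass2} and \Cref{ass3}) is what makes this legitimate, and it leaves all $\Theta(\cdot)$ rates in \Cref{thm2} and \Cref{thm3} unchanged up to constants. With that bookkeeping your argument is sound, and it repairs a gap in the paper's proof rather than deviating from its strategy.
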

\begin{proof}
	It has been shown in \cite{zhang2020gradientdice} that $M_t$ is a Hurwitz matrix which satisfies $(\kappa_t-\kappa^*_t)^\top M_t(\kappa_t-\kappa^*_t)\leq -{\lambda_M}\ltwo{\kappa_t-\kappa^*_t}^2$, where $\lambda_M>0$ is a constant.
	We then proceed as follows:
	\begin{flalign}
		\ltwo{\kappa_{t+1}-\kappa^*_t}^2&=\ltwo{\kappa_t + \beta_1\hat{g}_t(\kappa_t)-\kappa^*_t}^2\nonumber\\
		&=\ltwo{\kappa_t - \kappa^*_t}^2 + \beta_1 \hat{g}_t(\kappa_t)^\top (\kappa_t-\kappa^*_t) + \beta^2_1\ltwo{\hat{g}_t(\kappa_t)}^2\nonumber\\
		&=\ltwo{\kappa_t - \kappa^*_t}^2 + \beta_1 {g}_t(\kappa_t)^\top (\kappa_t-\kappa^*_t) + \beta_1 (\hat{g}_t(\kappa_t)-{g}_t(\kappa_t))^\top (\kappa_t-\kappa^*_t) + \beta^2_1\ltwo{\hat{g}_t(\kappa_t) - g_t(\kappa_t) + g_t(\kappa_t)}^2\nonumber\\
		&\overset{(i)}{\leq} (1-\beta_1\lambda_M)\ltwo{\kappa_t - \kappa^*_t}^2 + \beta_1 (\hat{g}_t(\kappa_t)-{g}_t(\kappa_t))^\top (\kappa_t-\kappa^*_t) + 2\beta^2_1\ltwo{\hat{g}_t(\kappa_t) - g_t(\kappa_t)}^2 + 2\beta^2_1\ltwo{g_t(\kappa_t)}^2\nonumber\\
		&\overset{(ii)}{\leq} (1-\beta_1\lambda_M + 2\beta^2_1C^2_M)\ltwo{\kappa_t - \kappa^*_t}^2 + \beta_1 (\hat{g}_t(\kappa_t)-{g}_t(\kappa_t))^\top (\kappa_t-\kappa^*_t) + 2\beta^2_1\ltwo{\hat{g}_t(\kappa_t) - g_t(\kappa_t)}^2,\label{eq: 8}
	\end{flalign}
	where $(i)$ follows because $g_t(\kappa_t) = M_t(\kappa_t-\kappa^*_t)$ and $(ii)$ follows because$\lF{M}\leq C_M$. Taking the expectation on both sides of \cref{eq: 8} conditional on $\mf_t$ yields
	\begin{flalign}
		\mE\left[\ltwo{\kappa_{t+1}-\kappa^*_t}^2|\mf_t\right]\leq (1-\beta_1\lambda_M + \beta^2_1C^2_M)\ltwo{\kappa_t - \kappa^*_t}^2 + 2\beta^2_1\mE\left[\ltwo{\hat{g}_t(\kappa_t) - g_t(\kappa_t)}^2|\mf_t\right].\label{eq: 9}
	\end{flalign}
	Next we bound the term $\mE\left[\ltwo{\hat{g}_t(\kappa_t) - g_t(\kappa_t)}^2|\mf_t\right]$ as follows
	\begin{flalign}
		&\mE\left[\ltwo{\hat{g}_t(\kappa_t) - g_t(\kappa_t)}^2|\mf_t\right]\nonumber\\
		&=\mE\left[\ltwo{ (\hat{M}_{t}-M_t)\kappa_t + (\hat{m}_{t}-m_t) }^2|\mf_t\right]\nonumber\\
		&=\mE\left[\ltwo{ (\hat{M}_{t}-M_t)(\kappa_t-\kappa^*_t) + (\hat{M}_{t}-M_t)\kappa^*_t + (\hat{m}_{t}-m_t) }^2|\mf_t\right]\nonumber\\
		&\leq 3\mE\left[\ltwo{\hat{M}_{t}-M_t }^2|\mf_t\right] \ltwo{\kappa_t-\kappa^*_t}^2 + 3R^2_\kappa\mE\left[\ltwo{\hat{M}_{t}-M_t}^2|\mf_t\right] + 3\mE\left[\ltwo{\hat{m}_{t}-m_t}^2|\mf_t\right].\label{eq: 10}
	\end{flalign}
	Recall that $\lF{M_{i,t}}\leq C_M$ and $\ltwo{m_{i,t}}\leq C_m$. We then have
	\begin{flalign}
		\mE\left[\ltwo{\hat{M}_{t}-M_t }^2|\mf_t\right] &\leq \mE\left[\lF{\hat{M}_{t}-M_t }^2|\mf_t\right] = \mE\left[\lF{\frac{1}{N}\sum_{i}M_{i,t}-M_t }^2|\mf_t\right] \nonumber\\
		&\leq \frac{1}{N^2}\sum_{i}\sum_{j}\mE\left[ \langle M_{i,t}-M_t, M_{j,t}-M_t \rangle  |\mf_t \right]\nonumber\\
		&= \frac{1}{N^2}\sum_{i}\mE\left[ \ltwo{M_{i,t}-M_t}^2   |\mf_t \right] \leq \frac{4C^2_M}{N}.\label{eq: 11}
	\end{flalign}
	Similarly, we can obtain
	\begin{flalign}
		\mE\left[\ltwo{\hat{m}_{t}-m_t }^2|\mf_t\right]\leq \frac{4C^2_m}{N}.\label{eq: 12}
	\end{flalign}
	Substituting \cref{eq: 11} and \cref{eq: 12} into \cref{eq: 10} yields
	\begin{flalign}
		\mE\left[\ltwo{\hat{g}_t(\kappa_t) - g_t(\kappa_t)}^2|\mf_t\right]\leq \frac{12C^2_M}{N}\ltwo{\kappa_t - \kappa^*_t}^2 + \frac{12(C^2_MR^2_\kappa + C^2_m)}{N}.\label{eq: 13}
	\end{flalign}
	Substituting \cref{eq: 13} into \cref{eq: 9} yields
	\begin{flalign}
		\mE\left[\ltwo{\kappa_{t+1}-\kappa^*_t}^2|\mf_t\right]\leq (1-\beta_1\lambda_M + 26\beta^2_1C^2_M )\ltwo{\kappa_t - \kappa^*_t}^2 + \frac{24\beta^2_1(C^2_MR^2_\kappa + C^2_m)}{N}.
	\end{flalign}
	Letting $\beta_1 \leq \frac{\lambda_m}{52C^2_M}$, we have
	\begin{flalign*}
		\mE\left[\ltwo{\kappa_{t+1}-\kappa^*_t}^2|\mf_t\right]\leq \left(1-\frac{1}{2}\beta_1\lambda_M \right)\ltwo{\kappa_t - \kappa^*_t}^2 + \frac{24\beta^2_1(C^2_MR^2_\kappa + C^2_m)}{N},
	\end{flalign*}
	which completes the proof.
\end{proof}

We next develop the property for Critic II's update in \cref{eq: c3}. We first introduce the following definitions.

Given a sample from mini-batch $(s_i, a_i, r_i, s^\prime_i)$, $a^\prime_i\sim\pi_{w_t}(\cdot|s^\prime_i)$, we define the following matrix $U_{i,t}\in\mR^{2d_3\times 2d_3}$ and vector $u_{i,t}\in \mR^{2d_3\times 1}$ as
\begin{flalign}
U_{i,t} = \left[\begin{array}{cc}
0 & (\gamma x^\prime_i-x_i)x^{\top}_i  \\
x_i(\gamma x^\prime_i-x_i)^\top & -I 
\end{array}
\right],\qquad
u_{i,t}(\theta_{q,t}) = \left[\begin{array}{c}
0\\
\phi^{\prime\top}_i\theta_{q,t}x_i\nabla_w\log\pi_{w_t}(a^\prime_i|s^\prime_i)
\end{array}
\right].\label{eq: 68}
\end{flalign}
Moreover, consider the matrix $U_t$ and the vector $u_t$. Following the steps similar to those in \cref{eq: 77} and \cref{eq: 78}, we obtain $\lF{U_{i,t}}\leq C_U$ and $\ltwo{u_{i,t}}\leq C_u$, where $C_U = \sqrt{8C^4_x+d_3}$ and $C_u=C_\phi R_q C_{sc}$.

We also define the semi-stochastic-gradient as $\ell_{i,t}(\zeta,\theta_q)=U_{i,t}\xi + u_{i,t}(\theta_{q})$. Then the iteration in \cref{eq: c2} can be rewritten as
\begin{flalign}
\zeta_{t+1} = \zeta_t + \beta_3 \hat{\ell}_t(\zeta_t,\theta_{q,t}),\label{eq: 17}
\end{flalign}
where $\hat{\ell}_t(\zeta_t,\theta_{q,t})=\frac{1}{N}\sum_{i}\ell_{i,t}(\zeta_t,\theta_{q_t})$.
We define $U_t=\mE_i[U_{i,t}]$ and $u_t(\theta^*_{q,t})=\mE_i[u_{i,t}(\theta^*_{q,t})]$, i.e.,
\begin{flalign*}
U_{t} = \left[\begin{array}{cc}
0 & \mE_{\mcd_d\cdot\pi_{w_t}}[(\gamma x^\prime-x)x^{\top}]  \\
\mE_{\mcd_d\cdot\pi_{w_t}}[x(\gamma x^\prime-x)^\top] & -I 
\end{array}
\right],\qquad
u_{t}(\theta^*_{q,t}) = \left[\begin{array}{c}
0\\
\mE_{\mcd_d\cdot\pi_{w_t}}[\phi^{\prime\top}\theta^*_{q,t}x\nabla_w\log\pi_{w_t}(a^\prime|s^\prime)]
\end{array}
\right].
\end{flalign*}
We define the semi-gradient as $\ell_t(\zeta_t, \theta^*_{q,t}) = U_t\zeta_t + u_{t}(\theta^*_{q,t})$, and the fixed point of the iteration \cref{eq: 16} as
\begin{flalign}
\zeta^*_t=U^{-1}_tu_t(\theta^*_{q,t}) = [\theta^{*\top}_{d_q,t}, 0^\top]^\top.\label{eq: 18}
\end{flalign}
where $\theta^{*\top}_{d_q,w_t}=A^{d_q-1}_{w_t}b^{d_q}_{w_t}$, with $A^{d_q}_{w}=\mE_{{\mcd}_d\cdot\pi_{w}}[(\gamma x^\prime-x)x^{\prime\top}]$ and $b^{d_q}_{w}=\mE_{\mcd_d\cdot\pi_{w}}[\phi^{\prime\top}\theta^*_{q,w}x\nabla_w\log\pi_{w}(a^\prime|s^\prime)]$. 

\begin{lemma}\label{lemma3}
	Consider one step update in \cref{eq: c3}. Define $\zeta_t=[\theta^\top_{d_q,t},w^\top_{d_q.t}]^\top$ and $\zeta^*_t$ in \cref{eq: 18}. Let $\beta_3\leq \min\{1/\lambda_U, \lambda_U/16C^2_U\}$ and $N\geq \frac{192C^2_U}{\lambda_U}\left(\frac{2}{\lambda_U}+2\beta_3\right)$, we have
	\begin{flalign*}
	\mE\left[\ltwo{\zeta_{t+1} - \zeta^*_t}^2|\mf_t\right] \leq \left(1 - \frac{1}{4}\beta_3\lambda_U\right)\ltwo{\zeta_t-\zeta^*_t}^2 + C_3\beta_3\ltwo{\theta_{q,t} - \theta^*_{q,t}}^2 + \frac{C_4}{N},
	\end{flalign*}
	where $C_3=\left( \frac{4}{\lambda_U} + 4\beta_3 \right)C^2_\phi C^2_x C^2_{\pi}$ and $C_4=\left( \frac{48\beta_3}{\lambda_U} + 48\beta^2_3 \right)(C^2_UR^2_\zeta + C^2_\phi R^2_{\theta_q}C^2_x C^2_{\pi})$.
\end{lemma}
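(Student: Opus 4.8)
The plan is to treat \cref{eq: c3} exactly as the affine stochastic-approximation recursion $\zeta_{t+1}=\zeta_t+\beta_3\hat{\ell}_t(\zeta_t,\theta_{q,t})$ set up just before the lemma, and to reproduce the one-step drift analysis of \Cref{lemma1}; the only new feature is that the mean drift $\ell_t(\zeta_t,\theta_{q,t})=U_t\zeta_t+u_t(\theta_{q,t})$ is evaluated at the \emph{running} critic parameter $\theta_{q,t}$ rather than at the fixed point $\theta^*_{q,t}$ that defines $\zeta^*_t$ in \cref{eq: 18}. Two structural facts drive everything. First, since $\ell_t(\zeta^*_t,\theta^*_{q,t})=0$ and $\ell_t$ is affine in $\zeta$ with linear part $U_t$, one has $\ell_t(\zeta_t,\theta^*_{q,t})=U_t(\zeta_t-\zeta^*_t)$. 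Second, I would invoke the Hurwitz property of the GTD-type matrix $U_t$, namely that under \Cref{ass3} there is $\lambda_U>0$ with $(\zeta_t-\zeta^*_t)^\top U_t(\zeta_t-\zeta^*_t)\le-\lambda_U\ltwo{\zeta_t-\zeta^*_t}^2$, which plays the same role here as the $M_t$-Hurwitz property invoked from \cite{zhang2020gradientdice} in the proof of \Cref{lemma1}.

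With these in hand, I would expand
\begin{flalign*}
\ltwo{\zeta_{t+1}-\zeta^*_t}^2 = \ltwo{\zeta_t-\zeta^*_t}^2 + 2\beta_3(\zeta_t-\zeta^*_t)^\top\hat{\ell}_t(\zeta_t,\theta_{q,t}) + \beta^2_3\ltwo{\hat{\ell}_t(\zeta_t,\theta_{q,t})}^2,
\end{flalign*}
and take $\mE[\cdot\,|\mf_t]$. Because $\zeta_t$ and $\theta_{q,t}$ are $\mf_t$-measurable, the sampling noise $\hat{\ell}_t-\ell_t$ has zero conditional mean, so the cross term reduces to $2\beta_3(\zeta_t-\zeta^*_t)^\top\ell_t(\zeta_t,\theta_{q,t})$. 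Splitting $\ell_t(\zeta_t,\theta_{q,t})=U_t(\zeta_t-\zeta^*_t)+[u_t(\theta_{q,t})-u_t(\theta^*_{q,t})]$, the first piece yields the contraction $-2\beta_3\lambda_U\ltwo{\zeta_t-\zeta^*_t}^2$. The second piece is the genuinely new term; here I would use that $u_t$ is \emph{linear} in $\theta_q$ (its only $\theta_q$-dependence is the factor $\phi^{\prime\top}\theta_q$ in \cref{eq: 68}), so by \Cref{ass2} and \Cref{ass4}, $\ltwo{u_t(\theta_{q,t})-u_t(\theta^*_{q,t})}\le C_\phi C_x C_\pi\ltwo{\theta_{q,t}-\theta^*_{q,t}}$, whence Young's inequality turns the cross term into a part $\tfrac{\lambda_U}{2}\beta_3\ltwo{\zeta_t-\zeta^*_t}^2$ that the contraction absorbs, plus a part proportional to $\tfrac{\beta_3}{\lambda_U}C^2_\phi C^2_x C^2_\pi\ltwo{\theta_{q,t}-\theta^*_{q,t}}^2$, which is the source of the $C_3\beta_3\ltwo{\theta_{q,t}-\theta^*_{q,t}}^2$ term.

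It remains to bound the second-order term. I would write $\ltwo{\hat{\ell}_t}^2\le 2\ltwo{\ell_t(\zeta_t,\theta_{q,t})}^2+2\ltwo{\hat{\ell}_t-\ell_t}^2$; the deterministic part contributes $O(\beta^2_3 C^2_U)\ltwo{\zeta_t-\zeta^*_t}^2$ together with another $O(\beta^2_3)C^2_\phi C^2_x C^2_\pi\ltwo{\theta_{q,t}-\theta^*_{q,t}}^2$, and the fluctuation part is controlled by the mini-batch variance bound exactly as in \cref{eq: 11}--\cref{eq: 13}, giving $\mE[\ltwo{\hat{\ell}_t-\ell_t}^2|\mf_t]\le \tfrac{O(C^2_U)}{N}\ltwo{\zeta_t-\zeta^*_t}^2+\tfrac{O(C^2_UR^2_\zeta+C^2_\phi R^2_{\theta_q}C^2_x C^2_\pi)}{N}$, using $\lF{U_{i,t}}\le C_U$, the projection radii $R_\zeta,R_{\theta_q}$, and the feature bounds (the $\theta_q$-dependence of $u$ is what produces the $C^2_\phi R^2_{\theta_q}C^2_x C^2_\pi$ contribution to $C_4$). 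Collecting terms, the coefficient of $\ltwo{\zeta_t-\zeta^*_t}^2$ takes the form $1-c\beta_3\lambda_U+O(\beta^2_3 C^2_U)+\tfrac{O(\beta^2_3 C^2_U)}{N}$ with $c>\tfrac14$; choosing $\beta_3\le\min\{1/\lambda_U,\lambda_U/16C^2_U\}$ absorbs the deterministic $O(\beta^2_3 C^2_U)$ term and the stated lower bound on $N$ absorbs the $O(\beta^2_3 C^2_U/N)$ term, reducing the coefficient to $1-\tfrac14\beta_3\lambda_U$ and collapsing the residual $1/N$ pieces into $C_4/N$. The main obstacle is the contraction property of $U_t$: unlike the symmetric negative-definite $M_t$ of Critic~I, the GTD matrix in \cref{eq: 68} is not symmetric (its symmetric part is only negative \emph{semi}definite), so establishing $(\zeta-\zeta^*)^\top U_t(\zeta-\zeta^*)\le-\lambda_U\ltwo{\zeta-\zeta^*}^2$ genuinely requires the GTD stability analysis under \Cref{ass3}, possibly through a weighted norm whose conditioning is absorbed into $\lambda_U$, and it is precisely this weaker contraction that forces the large mini-batch requirement on $N$.
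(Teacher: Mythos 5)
Your proposal follows essentially the same route as the paper's proof: expand the one-step recursion, invoke the asserted strong-monotonicity (Hurwitz) property of $U_t$ from the GTD stability analysis, exploit the linearity of $u_t$ in $\theta_q$ to extract the $C_3\beta_3\ltwo{\theta_{q,t}-\theta^*_{q,t}}^2$ term via Young's inequality, and control the sampling fluctuation with the same mini-batch variance argument as in \Cref{lemma1}, absorbing the residual terms with the stated conditions on $\beta_3$ and $N$. The only cosmetic difference is that you center the noise at $\ell_t(\zeta_t,\theta_{q,t})$ so that the cross term with the zero-mean fluctuation vanishes, whereas the paper lumps the fluctuation together with the drift shift $\ell_t(\zeta_t,\theta_{q,t})-\ell_t(\zeta_t,\theta^*_{q,t})$ and applies Young's inequality to the whole difference; both give the same bound structure, and the obstacle you flag (the symmetric part of $U_t$ being only negative semidefinite, so the pointwise quadratic-form bound needs a genuine GTD argument or a weighted norm) applies equally to the paper, which asserts that property by citing \cite{maei2011gradient} rather than proving it.
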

\begin{proof}
	Following the steps similar to those in the proof of Theorem 1 in Chapter 5 of \cite{maei2011gradient}, we can show that $U_t$ is a Hurwitz matrix which $(\zeta_t-\zeta^*_t)^\top U_t(\zeta_t-\zeta^*_t)\leq -{\lambda_U}\ltwo{\zeta_t-\zeta^*_t}^2$, where $\lambda_U>0$ is a constant. Following steps similar to those in the proof of Theorem 4 in \cite{xu2020improving}, we can obtain
	\begin{flalign}
	\mE\left[\ltwo{\zeta_{t+1}-\zeta^*_t}^2|\mf_t\right]&=\ltwo{\zeta_t + \beta_3\hat{\ell}_t(\zeta_t, \theta_{q,t})-\zeta^*_t}^2\nonumber\\
	&\leq \left(1 - \frac{1}{2}\beta_3\lambda_U + 2C^2_U\beta^2_3\right)\ltwo{\zeta_t-\zeta^*_t}^2 + \left( \frac{2\beta_3}{\lambda_U} + 2\beta^2_3 \right)\mE\left[\ltwo{\hat{\ell}_t(\zeta_t, \theta_{q,t}) - \ell_t(\zeta_t, \theta^*_{q,t})}^2\Big|\mf_t\right].\label{eq: 19}
	\end{flalign}
	Next we bound the term $\mE\left[\ltwo{\hat{\ell}_t(\zeta_t, \theta_{q,t}) - \ell_t(\zeta_t, \theta^*_{q,t})}^2\Big|\mf_t\right]$ as follows:
	\begin{flalign}
	\mE&\left[\ltwo{\hat{\ell}_t(\zeta_t, \theta_{q,t}) - \ell_t(\zeta_t, \theta^*_{q,t})}^2\Big|\mf_t\right]\nonumber\\
	&=\mE\left[\ltwo{\hat{\ell}_t(\zeta_t, \theta_{q,t}) - \hat{\ell}_t(\zeta_t, \theta^*_{q,t}) + \hat{\ell}_t(\zeta_t, \theta^*_{q,t}) - \ell_t(\zeta_t, \theta^*_{q,t})}^2\Big|\mf_t\right]\nonumber\\
	&=2\mE\left[\ltwo{\hat{\ell}_t(\zeta_t, \theta_{q,t}) - \hat{\ell}_t(\zeta_t, \theta^*_{q,t}) }^2\Big|\mf_t\right] + 2\mE\left[\ltwo{ \hat{\ell}_t(\zeta_t, \theta^*_{q,t}) - \ell_t(\zeta_t, \theta^*_{q,t})}^2\Big|\mf_t\right]\nonumber\\
	&=2\mE\left[\ltwo{ \frac{1}{N}\sum_{i} (\ell_{i,t}(\zeta_t,\theta_{q_t}) - \ell_{i,t}(\zeta_t,\theta^*_{q_t}) ) }^2\Big|\mf_t\right] + 2\mE\left[\ltwo{ \hat{\ell}_t(\zeta_t, \theta^*_{q,t}) - \ell_t(\zeta_t, \theta^*_{q,t})}^2\Big|\mf_t\right]\nonumber\\
	&\leq \frac{2}{N} \sum_{i}\mE\left[\ltwo{ (\ell_{i,t}(\zeta_t,\theta_{q_t}) - \ell_{i,t}(\zeta_t,\theta^*_{q_t}) ) }^2\Big|\mf_t\right] + 2\mE\left[\ltwo{ \hat{\ell}_t(\zeta_t, \theta^*_{q,t}) - \ell_t(\zeta_t, \theta^*_{q,t})}^2\Big|\mf_t\right]\nonumber\\
	&= \frac{2}{N} \sum_{i}\mE\left[\ltwo{ \phi^{\prime\top}_i(\theta_{q,t} - \theta^*_{q,t})x_i\nabla_w\log\pi_{w_t}(a^\prime_i|s^\prime_i) }^2\Big|\mf_t\right] + 2\mE\left[\ltwo{ \hat{\ell}_t(\zeta_t, \theta^*_{q,t}) - \ell_t(\zeta_t, \theta^*_{q,t})}^2\Big|\mf_t\right]\nonumber\\
	&= 2C^2_\phi C^2_x C^2_{\pi} \ltwo{\theta_{q,t} - \theta^*_{q,t}}^2 + 2\mE\left[\ltwo{ \hat{\ell}_t(\zeta_t, \theta^*_{q,t}) - \ell_t(\zeta_t, \theta^*_{q,t})}^2\Big|\mf_t\right].\label{eq: 20}
	\end{flalign}
	To bound the term $\mE\left[\ltwo{ \hat{\ell}_t(\zeta_t, \theta^*_{q,t}) - \ell_t(\zeta_t, \theta^*_{q,t})}^2\Big|\mf_t\right]$, we follow the steps similar to those in the proof of bounding the term $\mE\left[\ltwo{\hat{g}_t(\kappa_t) - g_t(\kappa_t)}^2|\mf_t\right]$ in \Cref{lemma1} to obtain
	\begin{flalign}
	\mE\left[\ltwo{ \hat{\ell}_t(\zeta_t, \theta^*_{q,t}) - \ell_t(\zeta_t, \theta^*_{q,t})}^2|\mf_t\right]\leq \frac{12C^2_U}{N}\ltwo{\zeta_t - \zeta^*_t}^2 + \frac{12(C^2_UR^2_\zeta + C^2_\phi R^2_{\theta_q}C^2_x C^2_{\pi})}{N},\label{eq: 21}
	\end{flalign}
	Substituting \cref{eq: 21} and \cref{eq: 20} into \cref{eq: 19} yields
	\begin{flalign}
	\mE&\left[\ltwo{\zeta_{t+1}-\zeta^*_t}^2|\mf_t\right]\nonumber\\
	&\leq \left(1 - \frac{1}{2}\beta_3\lambda_U + 2C^2_U\beta^2_3\right)\ltwo{\zeta_t-\zeta^*_t}^2 \nonumber\\
	&\quad + \left( \frac{2\beta_3}{\lambda_U} + 2\beta^2_3 \right)\left( 2C^2_\phi C^2_x C^2_{\pi} \ltwo{\theta_{q,t} - \theta^*_{q,t}}^2 + \frac{24C^2_U}{N}\ltwo{\zeta_t - \zeta^*_t}^2 + \frac{24(C^2_UR^2_\zeta + C^2_\phi R^2_{\theta_q}C^2_x C^2_{\pi})}{N} \right)\nonumber\\
	&= \left[1 - \frac{1}{2}\beta_3\lambda_U + 2C^2_U\beta^2_3 +  \left( \frac{2\beta_3}{\lambda_U} + 2\beta^2_3 \right) \frac{24C^2_U}{N} \right]\ltwo{\zeta_t-\zeta^*_t}^2\nonumber\\
	&\quad + \left( \frac{4\beta_3}{\lambda_U} + 4\beta^2_3 \right)C^2_\phi C^2_x C^2_{\pi}\ltwo{\theta_{q,t} - \theta^*_{q,t}}^2 + \left( \frac{2\beta_3}{\lambda_U} + 2\beta^2_3 \right)\frac{24(C^2_UR^2_\zeta + C^2_\phi R^2_{\theta_q}C^2_x C^2_{\pi})}{N}.\nonumber
	\end{flalign}
	Letting $\beta_3\leq \min\{1/\lambda_U, \lambda_U/16C^2_U\}$ and $N\geq \frac{192C^2_U}{\lambda_U}\left(\frac{2}{\lambda_U}+2\beta_3\right)$, we have
	\begin{flalign*}
	\mE\left[\ltwo{\zeta_{t+1}-\zeta^*_t}^2|\mf_t\right]\leq \left(1 - \frac{1}{4}\beta_3\lambda_U\right)\ltwo{\zeta_t-\zeta^*_t}^2 + C_3\beta_3\ltwo{\theta_{q,t} - \theta^*_{q,t}}^2 + \frac{C_4}{N},
	\end{flalign*}
	where $C_3=\left( \frac{4}{\lambda_U} + 4\beta_3 \right)C^2_\phi C^2_x C^2_{\pi}$ and $C_4=\left( \frac{48\beta_3}{\lambda_U} + 48\beta^2_3 \right)(C^2_UR^2_\zeta + C^2_\phi R^2_{\theta_q}C^2_x C^2_{\pi})$.
\end{proof}

We next develop the property for Critic III's update. We first introduce the following definitions.

Given a sample $(s_i, a_i, r_i, \tilde{s}^\prime_i)$ generated as we discuss in \Cref{sc: convegence} and $a^\prime_i\sim\pi_{w_t}(\cdot|s^\prime_i)$. We define the following matrix $P_{i,t}\in\mR^{2d_1\times 2d_2}$ and vector $p_{i,t}\in \mR^{2d_2\times 1}$ as
\begin{flalign}
P_{i,t} = \left[\begin{array}{cc}
0 & (\varphi_i-\tilde{\varphi}^\prime_i)\tilde{\varphi}^{\prime\top}_i  \\
\tilde{\varphi}^{\prime}_i(\varphi_i-\tilde{\varphi}^\prime_i)^\top & -I 
\end{array}
\right],\qquad
p_{i,t} = \left[\begin{array}{c}
0\\
\tilde{\varphi}^\prime_i\nabla_w\log\pi_{w_t}(a^\prime_i|\tilde{s}^\prime_i)
\end{array}
\right].\label{eq: 67}
\end{flalign}
Consider the matrix $P_{i,t}$ and the vector $p_{i,t}$. Following the steps similar to those in \cref{eq: 77} and \cref{eq: 78}, we obtain $\lF{P_{i,t}}\leq C_P$ and $\ltwo{p_{i,t}}\leq C_p$, where $C_P=\sqrt{8C^4_\varphi + d_2}$ and $C_p = {C_\varphi C_{sc}}$.

We also define the semi-stochastic-gradient as $h_{i,t}(\xi)=P_{i,t}\xi + p_{i,t}$. Then the iteration in \cref{eq: c2} can be rewritten as
\begin{flalign}
\xi_{t+1} = \xi_t + \beta_2 \hat{h}_t(\xi_t),\label{eq: 16}
\end{flalign}
where $\hat{h}_t(\xi_t)=\frac{1}{N}\sum_{i}h_{i,t}(\xi_t)$. 
We define $P_t=\mE_{i}[P_{i,t}]$ and $p_t=\mE_{i}[p_{i,t}]$, i.e.,
\begin{flalign*}
P_{t} = \left[\begin{array}{cc}
0 & \mE_{\tilde{\mcd}_d\cdot\pi_{w_t}}[(\varphi-\varphi^\prime)\varphi^{\prime\top}]  \\
\mE_{\tilde{\mcd}_d\cdot\pi_{w_t}}[\varphi^{\prime}(\varphi-\varphi^\prime)^\top] & -I 
\end{array}
\right],\qquad
p_{t} = \left[\begin{array}{c}
0\\
\mE_{\tilde{\mcd}_d\cdot\pi_{w_t}}[\varphi^\prime\nabla_w\log\pi_{w_t}(a^\prime|{s}^\prime)]
\end{array}
\right].
\end{flalign*}
We further define the fixed point of the iteration \cref{eq: 16} as
\begin{flalign}
\xi^*_{t}=P^{-1}_tp_t = [\theta^{*\top}_{\psi,w_t}, 0^\top]^\top,\label{eq: 15}
\end{flalign}
where $\theta^{*\top}_{\psi,w_t}=A^{\xi-1}_{w_t}b^{\xi}_{w_t}$, with $A^{\xi}_{w}=\mE_{\tilde{\mcd}_d\cdot\pi_{w}}[(\varphi-\varphi^\prime)\varphi^{\prime\top}]$ and $b^{\xi}_{w}=\mE_{\tilde{\mcd}_d\cdot\pi_{w}}[\varphi^\prime\nabla_w\log\pi_{w}(a^\prime|\tilde{s}^\prime)]$. 

\begin{lemma}\label{lemma2}
	Consider one step update in \cref{eq: c2}. Define $\xi_t=[\theta^\top_{\psi,t},w^\top_{\psi,t}]^\top$ and $\xi^*_t$ in \cref{eq: 15}. Let $\beta_2\leq \min\{ 1/\lambda_P, \lambda_P/52C^2_P\}$, we have
	\begin{flalign*}
		\mE\left[\ltwo{\xi_{t+1} - \xi^*_t}^2|\mf_t\right] \leq \left(1-\frac{1}{2}\beta_2\lambda_P \right)\ltwo{\xi_t - \xi^*_t}^2 + \frac{C_2}{N},
	\end{flalign*}
	where $C_2=24\beta^2_2(C^2_PR^2_\xi + C^2_p)$.
\end{lemma}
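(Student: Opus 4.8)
The plan is to mirror the linear stochastic-approximation argument already carried out for Critic~I in \Cref{lemma1} and for Critic~II in \Cref{lemma3}, since the update \cref{eq: c2} has exactly the same form: with the per-sample quantities $P_{i,t},p_{i,t}$ in \cref{eq: 67} and the semi-gradient $h_{i,t}(\xi)=P_{i,t}\xi+p_{i,t}$, the iteration rewrites as $\xi_{t+1}=\xi_t+\beta_2\hat h_t(\xi_t)$ with $\hat h_t(\xi_t)=\frac1N\sum_i h_{i,t}(\xi_t)$ as in \cref{eq: 16}. The first and only structurally nontrivial step is to establish the dissipativity bound $(\xi_t-\xi^*_t)^\top P_t(\xi_t-\xi^*_t)\le -\lambda_P\ltwo{\xi_t-\xi^*_t}^2$ for some $\lambda_P>0$. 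This is the analogue of the Hurwitz property invoked for $M_t$ (via \cite{zhang2020gradientdice}) and for $U_t$ (via \cite{maei2011gradient}); here it follows from the GTD saddle structure of $P_t$, using \Cref{ass3} (nonsingularity of $B=\mE_{\tilde{\mcd}_d\cdot\pi_{w}}[(\varphi-\varphi^\prime)\varphi^{\prime\top}]$) to guarantee that $P_t$ is nonsingular with bounded inverse, which in turn yields a uniform bound $\ltwo{\xi^*_t}\le R_\xi$ on the fixed point $\xi^*_t=P_t^{-1}p_t$ in \cref{eq: 15}. Since $P_t\xi^*_t+p_t=0$, the mean semi-gradient satisfies $h_t(\xi_t)=P_t(\xi_t-\xi^*_t)$.

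With the dissipativity bound in hand, I would expand $\ltwo{\xi_{t+1}-\xi^*_t}^2=\ltwo{\xi_t-\xi^*_t}^2+2\beta_2\hat h_t(\xi_t)^\top(\xi_t-\xi^*_t)+\beta_2^2\ltwo{\hat h_t(\xi_t)}^2$, write $\hat h_t=h_t+(\hat h_t-h_t)$, apply the dissipativity bound to the $h_t^\top(\xi_t-\xi^*_t)$ term, and use $\lF{P_t}\le C_P$ to control $\ltwo{h_t(\xi_t)}\le C_P\ltwo{\xi_t-\xi^*_t}$ after splitting $\beta_2^2\ltwo{\hat h_t}^2\le 2\beta_2^2\ltwo{h_t}^2+2\beta_2^2\ltwo{\hat h_t-h_t}^2$. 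Taking the conditional expectation $\mE[\,\cdot\mid\mf_t]$, the cross term drops out because the mini-batch average is unbiased, i.e.\ $\mE[\hat h_t(\xi_t)\mid\mf_t]=h_t(\xi_t)$. This leaves a contraction factor $1-\beta_2\lambda_P+O(\beta_2^2 C_P^2)$ on $\ltwo{\xi_t-\xi^*_t}^2$ together with a variance term $2\beta_2^2\,\mE[\ltwo{\hat h_t(\xi_t)-h_t(\xi_t)}^2\mid\mf_t]$.

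The remaining step bounds that variance term exactly as in \Cref{lemma1}: decompose $\hat h_t-h_t=(\hat P_t-P_t)(\xi_t-\xi^*_t)+(\hat P_t-P_t)\xi^*_t+(\hat p_t-p_t)$, apply the elementary inequality $\ltwo{a+b+c}^2\le 3(\ltwo{a}^2+\ltwo{b}^2+\ltwo{c}^2)$, and use that the mini-batch samples are i.i.d.\ so that the cross terms vanish and $\mE[\lF{\hat P_t-P_t}^2\mid\mf_t]\le 4C_P^2/N$ and $\mE[\ltwo{\hat p_t-p_t}^2\mid\mf_t]\le 4C_p^2/N$; these in turn use the uniform bounds $\lF{P_{i,t}}\le C_P$ and $\ltwo{p_{i,t}}\le C_p$ established after \cref{eq: 67}. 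Together with $\ltwo{\xi^*_t}\le R_\xi$ this gives $\mE[\ltwo{\hat h_t-h_t}^2\mid\mf_t]\le \frac{12C_P^2}{N}\ltwo{\xi_t-\xi^*_t}^2+\frac{12(C_P^2R_\xi^2+C_p^2)}{N}$.

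Finally I would collect terms. The coefficient of $\ltwo{\xi_t-\xi^*_t}^2$ becomes $1-\beta_2\lambda_P+26\beta_2^2C_P^2$ (absorbing the $O(1/N)$ contribution of the variance bound using $N\ge1$), and the additive term is $\frac{24\beta_2^2(C_P^2R_\xi^2+C_p^2)}{N}=\frac{C_2}{N}$. Imposing $\beta_2\le\lambda_P/(52C_P^2)$ forces $26\beta_2^2C_P^2\le\frac12\beta_2\lambda_P$, yielding the stated factor $1-\frac12\beta_2\lambda_P$, while $\beta_2\le1/\lambda_P$ keeps it in $(0,1)$. I expect the dissipativity bound for $P_t$ to be the main obstacle: unlike the textbook GTD matrix $U_t$ of \Cref{lemma3}, $P_t$ is built from the hybrid kernel $\tilde{\msP}=(1-\gamma)\mu_0+\gamma\msP$ and matrix-valued features $\varphi\in\mR^{d_2\times d}$, so some care is needed to reduce it to a form where \Cref{ass3} delivers strict negative definiteness; once that is secured, the rest is routine and identical to \Cref{lemma1}.
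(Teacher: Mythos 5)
Your proposal is correct and follows essentially the same route as the paper: the paper's proof simply invokes the Hurwitz/dissipativity property of $P_t$ via Theorem 1 in Chapter 5 of \cite{maei2011gradient} (the GTD structure you describe, which rests on the nonsingularity of $B$ in \Cref{ass3}), and then repeats verbatim the expansion-plus-variance argument of \Cref{lemma1}, yielding the same constants ($12C_P^2/N$, $26\beta_2^2C_P^2$, $C_2=24\beta_2^2(C_P^2R_\xi^2+C_p^2)$) you derive. The one step you flag as the main obstacle is exactly the step the paper disposes of by citation rather than by a new argument, so there is no gap between your sketch and the paper's proof.
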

\begin{proof}
	Following the steps similar to those in the proof of Theorem 1 in Chapter 5 of \cite{maei2011gradient}, we can show that $P_t$ is a Hurwitz matrix that satisfies $(\xi_t-\xi^*_t)^\top P_t(\xi_t-\xi^*_t)\leq -{\lambda_P}\ltwo{\xi_t-\xi^*_t}^2$, where $\lambda_P>0$ is a constant. Then letting $\beta_2 \leq \min\{\frac{\lambda_p}{52C^2_P}, 1/\lambda_P\}$, following the steps similar to those in the proof of bounding the term $\mE\left[\ltwo{\hat{g}_t(\kappa_t) - g_t(\kappa_t)}^2|\mf_t\right]$ in \Cref{lemma1}, we can obtain
	\begin{flalign*}
	\mE\left[\ltwo{\xi_{t+1}-\xi^*_t}^2|\mf_t\right]\leq \left(1-\frac{1}{2}\beta_2\lambda_P \right)\ltwo{\xi_t - \xi^*_t}^2 + \frac{24\beta^2_2(C^2_PR^2_\xi + C^2_p)}{N}.
	\end{flalign*}
\end{proof}

\begin{lemma}\label{lemma6}
	Consider policy $\pi_{w_1}$ and $\pi_{w_2}$, respectively, with the fixed points $\kappa^*_1=[\theta^{*\top}_{q,w_1}, \theta^{*\top}_{\rho,w_1}, \eta^*_{w_1}]^\top$ and $\kappa^*_2=[\theta^{*\top}_{q,w_1}, \theta^{*\top}_{\rho,w_1}, \eta^*_{w_1}]^\top$ as defined in \cref{eq: 15}. We have
	\begin{flalign*}
	\ltwo{\theta^{*\top}_{q,w_1}-\theta^{*\top}_{q,w_2}}&\leq L_q\ltwo{w_1-w_2},\nonumber\\
	\ltwo{\theta^{*\top}_{\rho,w_1}-\theta^{*\top}_{\rho,w_2}}&\leq L_\rho\ltwo{w_1-w_2},\nonumber\\
	\ltwo{\eta^*_{w_1}-\eta^*_{w_2}}&\leq L_\eta\ltwo{w_1-w_2},\nonumber\\
	\end{flalign*}
	where $L_q=\frac{L^\kappa_C(C^\kappa_E+C^\kappa_AR_\rho) + \lambda^{\kappa}_C(L^\kappa_E+ C^\kappa_AL\rho + L^\kappa_A R_\rho)}{(\lambda^{\kappa}_C)^2}$, $L_\rho=\frac{C^\kappa_GL^\kappa_F + \lambda^{\kappa}_FL^\kappa_G}{(\lambda^{\kappa}_F)^2}$, and $L_\eta=C^\kappa_DL_\rho + L^\kappa_DR_\rho$, which further implies that 
	\begin{flalign*}
		\ltwo{\kappa^*_1-\kappa^*_2}\leq L_\kappa\ltwo{w_1-w_2},
	\end{flalign*}
	where $L_\kappa = \sqrt{L^2_q + L^2_p + L^2_\eta}$.
\end{lemma}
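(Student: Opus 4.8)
The plan is to use the fact that the fixed point $\kappa^*_w$ is the solution of a linear system whose coefficients are smooth in $w$, and then to propagate Lipschitz continuity through the (block) elimination of that system. From \cref{eq: 14}, $\kappa^*_w=[\theta^{*\top}_{q,w},\theta^{*\top}_{\rho,w},\eta^*_w]^\top$ solves $M_w\kappa^*_w=m_w$, and the crucial observation is that $M_w$ and $m_w$ depend on $w$ \emph{only} through the expectations taken over a sampled next action $a'\sim\pi_w(\cdot|s')$ (the blocks carrying $\phi'$) and the initial action $a_0\sim\pi_w(\cdot|s_0)$ (the term $(1-\gamma)\mE_{\mu_0\cdot\pi_w}[\phi]$); all remaining blocks are expectations under the fixed behavior distribution $\mcd_d$ and are constant in $w$.

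First I would establish Lipschitz continuity of the $w$-dependent coefficients. The workhorse estimate is that, for any bounded feature function $f$ with $\ltwo{f}\le C_\phi$,
\begin{flalign*}
\left|\mE_{a\sim\pi_{w_1}}[f] - \mE_{a\sim\pi_{w_2}}[f]\right| &\le 2C_\phi\,\lTV{\pi_{w_1}(\cdot|s)-\pi_{w_2}(\cdot|s)} \le 2C_\phi L_\pi\ltwo{w_1-w_2},
\end{flalign*}
where the last inequality is \Cref{ass4}(3). Combined with the uniform feature bounds in \Cref{ass2}, this shows that every $w$-dependent block of $M_w$ and $m_w$ is simultaneously uniformly bounded and Lipschitz in $w$; in particular $A_w=\mE_{\mcd_d\cdot\pi_w}[(\phi-\gamma\phi')\phi^\top]$ obeys $\ltwo{A_w}\le C^\kappa_A$ and $\ltwo{A_{w_1}-A_{w_2}}\le L^\kappa_A\ltwo{w_1-w_2}$, and similarly for the initial-state vector $E_w=(1-\gamma)\mE_{\mu_0\cdot\pi_w}[\phi]$ with bound $C^\kappa_E$ and Lipschitz constant $L^\kappa_E$.

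Next I would exploit the block structure of $M_w$ to solve for the three components in sequence, which is exactly the order dictated by the dependency of the constants ($L_\rho$ is self-contained, $L_q$ and $L_\eta$ both invoke $L_\rho$). Eliminating $\theta_q$ and $\eta$ yields a reduced Schur-complement equation $F_w\theta^*_{\rho,w}=G_w$; by \Cref{ass3} together with the linear independence of $\phi$ (which makes $\mE_{\mcd_d}[\phi\phi^\top]$ positive definite), $F_w$ is nonsingular with $\ltwo{F_w^{-1}}\le 1/\lambda^\kappa_F$. The solution map $\theta^*_{\rho,w}=F_w^{-1}G_w$ is then controlled by the resolvent identity
\begin{flalign*}
\ltwo{F_{w_1}^{-1}G_{w_1}-F_{w_2}^{-1}G_{w_2}} &\le \ltwo{F_{w_1}^{-1}}\,\ltwo{G_{w_1}-G_{w_2}} + \ltwo{F_{w_1}^{-1}(F_{w_2}-F_{w_1})F_{w_2}^{-1}}\,\ltwo{G_{w_2}}\\
&\le \frac{\lambda^\kappa_F L^\kappa_G+C^\kappa_G L^\kappa_F}{(\lambda^\kappa_F)^2}\ltwo{w_1-w_2},
\end{flalign*}
which gives exactly $L_\rho$. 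Feeding $\theta^*_{\rho,w}$ (now $L_\rho$-Lipschitz and bounded by the projection radius $R_\rho$) back into the $\theta_q$-block yields $\theta^*_{q,w}=(C^\kappa_w)^{-1}(E_w-A_w\theta^*_{\rho,w})$; the same resolvent bound, combined with the product rule $\ltwo{A_{w_1}\theta^*_{\rho,w_1}-A_{w_2}\theta^*_{\rho,w_2}}\le (C^\kappa_A L_\rho+L^\kappa_A R_\rho)\ltwo{w_1-w_2}$ and $\ltwo{E_w-A_w\theta^*_{\rho,w}}\le C^\kappa_E+C^\kappa_A R_\rho$, produces exactly $L_q$. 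Finally $\eta^*_w$ is affine in $\theta^*_{\rho,w}$, so $L_\eta=C^\kappa_D L_\rho+L^\kappa_D R_\rho$. Stacking the three blocks gives $\ltwo{\kappa^*_1-\kappa^*_2}^2\le (L^2_q+L^2_\rho+L^2_\eta)\ltwo{w_1-w_2}^2$, hence $L_\kappa=\sqrt{L^2_q+L^2_\rho+L^2_\eta}$.

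The hard part will be the \emph{uniform-in-$w$} lower bound on the smallest singular values of $F_w$ and $C^\kappa_w$, which is what keeps $\ltwo{F_w^{-1}}$ and $\ltwo{(C^\kappa_w)^{-1}}$ bounded as $w$ ranges over $\mR^d$. \Cref{ass3} only asserts nonsingularity for each fixed $w$, so upgrading this to uniform constants $\lambda^\kappa_F,\lambda^\kappa_C>0$ (via a continuity/compactness argument, or by reading \Cref{ass3} as a uniform condition) is the genuine obstacle; once these are secured, the remainder is routine bookkeeping of the resolvent and product Lipschitz estimates.
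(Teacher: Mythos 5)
Your proposal follows essentially the same route as the paper's proof: write the fixed point in closed form (the paper cites \cite{zhang2020gradientdice,zhang2020gendice} for $\theta^*_{\rho,w}=-F^{\kappa\,-1}_wG^\kappa_w$, $\theta^*_{q,w}=C^{\kappa\,-1}_w(E^\kappa_w-A^\kappa_w\theta^*_{\rho,w})$, $\eta^*_w=D^\kappa_w\theta^*_{\rho,w}-1$, which is what your block elimination produces), establish Lipschitz continuity of $A^\kappa_w, C^\kappa_w, D^\kappa_w, E^\kappa_w$ via the total-variation bound of \Cref{ass4} and the feature bounds of \Cref{ass2}, and then propagate through the inverses with the same resolvent identity, yielding identical constants $L_\rho$, $L_q$, $L_\eta$, $L_\kappa$. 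The uniform-in-$w$ invertibility you flag as the genuine obstacle is not actually resolved in the paper either; it simply uses $\ltwo{F^{\kappa\,-1}_w}\leq 1/\lambda^\kappa_F$ and $\ltwo{C^{\kappa\,-1}_w}\leq 1/\lambda^\kappa_C$ as uniform bounds without justification, so your reading of \Cref{ass3} as a uniform condition is exactly what the paper implicitly assumes.
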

\begin{proof}
	Define $A^\kappa_w = \mE_{\mcd_d\cdot\pi_{w}}[(\phi-\gamma\phi^\prime)\phi^\top]$, $C^\kappa_w=\mE_{\mcd_d\cdot\pi_{w_t}}[\phi^\top\phi]$, $D^\kappa_w=\mE_{\mcd_d\cdot\pi_{w_t}}[\phi]$ and $E^\kappa_w=(1-\gamma)\mE_{\mu_0\cdot\pi_{w_t}}[\phi]$. \cite{zhang2020gradientdice,zhang2020gendice} showed that
	\begin{flalign*}
		\theta^*_{\rho,w} &= -F^{\kappa\,-1}_wG^\kappa_w,\nonumber\\
		\theta^*_{q,w} &= C^{\kappa-1}_w(E^\kappa_w - A^\kappa_w\theta^*_{\rho,w} ),\nonumber\\
		\eta^*_w&=D^\kappa_w\theta^*_{\rho,w}-1,\nonumber
	\end{flalign*}
	where
	\begin{flalign*}
		F^\kappa_w &= A^{\kappa\top}_wC^{\kappa\,-1}_wA^{\kappa}_w + D^{\kappa}_wD^{\kappa\top}_w,\nonumber\\
		G^\kappa_w &= A^{\kappa\top}_wC^{\kappa\,-1}_wE^\kappa_w + D^{\kappa}_w.
	\end{flalign*}
	We first develop the Lipschitz property for the matrices $A^\kappa_w$, $C^\kappa_w$, $D^\kappa_w$, and $E^\kappa_w$. For $A^\kappa_w$, we obtain the following
	\begin{flalign}
	&\ltwo{A^{\kappa}_{w_1} - A^{\kappa}_{w_2} }\nonumber\\
	&= \ltwo{-\mE_{\tilde{\mcd}_d\cdot\pi_{w_1}}[(\gamma \phi^\prime-\phi)\phi^{\top}] + \mE_{\tilde{\mcd}_d\cdot\pi_{w_2}}[(\gamma \phi^\prime-\phi)\phi^{\top}]}\nonumber\\
	&= \ltwo{\int \gamma \phi(s^\prime,a^\prime)\phi(s,a)^\top(\pi_{w_2}(da^\prime|s^\prime) - \pi_{w_1}(da^\prime|s^\prime))\tilde{\msP}(ds^\prime|s,a)\mcd(ds,da) }\nonumber\\
	&\quad + \ltwo{\int \phi(s,a)\phi(s,a)^\top(\pi_{w_1}(da^\prime|s^\prime) - \pi_{w_2}(da^\prime|s^\prime))\tilde{\msP}(ds^\prime|s,a)\mcd(ds,da) }\nonumber\\
	&\leq \int \ltwo{\gamma \phi(s^\prime,a^\prime)\phi(s,a)^\top} \lone{(\pi_{w_1}(da^\prime|s^\prime) - \pi_{w_2}(da^\prime|s^\prime))}\tilde{\msP}(ds^\prime|s,a)\mcd(ds,da) \nonumber\\
	&\quad + \int \ltwo{\phi(s,a)\phi(s,a)^\top} \lone{(\pi_{w_2}(da^\prime|s^\prime) - \pi_{w_1}(da^\prime|s^\prime))}\tilde{\msP}(ds^\prime|s,a)\mcd(ds,da) \nonumber\\
	&\leq 2C^2_\phi \int \lone{\pi_{w_1}(da^\prime|s^\prime) - \pi_{w_2}(da^\prime|s^\prime)}\tilde{\msP}(ds^\prime|s,a)\mcd(ds,da)\nonumber\\
	&\leq 2C^2_\phi \lTV{\pi_{w_1}(\cdot) - \pi_{w_2}(\cdot)} \nonumber\\
	&\leq 2C^2_\phi L_\pi \ltwo{w_1-w_2}.\label{eq: 26}
	\end{flalign}
	For $C^\kappa_w$, $D^\kappa_w$, and $E^\kappa_w$, following the steps similar to those in \cref{eq: 26}, we obtain
	\begin{flalign}
		\ltwo{C^{\kappa}_{w_1} - C^{\kappa}_{w_2} }&\leq C^2_\phi L_\pi \ltwo{w_1-w_2} = L^\kappa_C\ltwo{w_1-w_2},\nonumber\\
		\ltwo{D^{\kappa}_{w_1} - D^{\kappa}_{w_2} }&\leq C_\phi L_\pi \ltwo{w_1-w_2}= L^\kappa_D\ltwo{w_1-w_2},\nonumber\\
		\ltwo{E^{\kappa}_{w_1} - E^{\kappa}_{w_2} }& \leq (1-\gamma)C_\phi L_\pi \ltwo{w_1-w_2}= L^\kappa_E\ltwo{w_1-w_2},\label{eq: 33}
	\end{flalign}
	where $L^\kappa_C = C^2_\phi L_\pi$, $L^\kappa_D=C_\phi L_\pi$, and $L^\kappa_E=(1-\gamma)C_\phi L_\pi$.
	We then proceed to bound the two terms $\ltwo{F^\kappa_{w_1}-F^\kappa_{w_2}}$ and $\ltwo{G^\kappa_{w_1}-G^\kappa_{w_2}}$. For $\ltwo{F^\kappa_{w_1}-F^\kappa_{w_2}}$, we obtain the following bound:
	\begin{flalign}
		&\ltwo{F^\kappa_{w_1}-F^\kappa_{w_2}}\nonumber\\
		&= \ltwo{A^{\kappa\top}_{w_1}C^{\kappa\,-1}_{w_1}A^{\kappa}_{w_1} - A^{\kappa\top}_{w_2}C^{\kappa\,-1}_{w_2}A^{\kappa}_{w_2} + D^{\kappa}_{w_1}D^{\kappa\top}_{w_1} - D^{\kappa}_{w_2}D^{\kappa\top}_{w_2}}\nonumber\\
		&\leq \ltwo{A^{\kappa\top}_{w_1}C^{\kappa\,-1}_{w_1}A^{\kappa}_{w_1} - A^{\kappa\top}_{w_2}C^{\kappa\,-1}_{w_2}A^{\kappa}_{w_2}} + \ltwo{D^{\kappa}_{w_1}D^{\kappa\top}_{w_1} - D^{\kappa}_{w_2}D^{\kappa\top}_{w_2}}\nonumber\\
		&= \ltwo{A^{\kappa\top}_{w_1}C^{\kappa\,-1}_{w_1} (C^{\kappa}_{w_2} - C^{\kappa}_{w_1})C^{\kappa\,-1}_{w_2} A^{\kappa}_{w_1} + (A^{\kappa}_{w_1}-A^{\kappa}_{w_2})^\top C^{\kappa\,-1}_{w_2} A^{\kappa}_{w_1} + A^{\kappa}_{w_2}C^{\kappa\,-1}_{w_2}(A^{\kappa}_{w_1}-A^{\kappa}_{w_2}) } \nonumber\\
		&\quad + \ltwo{D^{\kappa}_{w_1} (D^{\kappa}_{w_1} - D^{\kappa}_{w_2})^\top + (D^{\kappa}_{w_1} - D^{\kappa}_{w_2} ) D^{\kappa\top}_{w_2} }\nonumber\\
		&\leq \ltwo{A^{\kappa\top}_{w_1}C^{\kappa\,-1}_{w_1} (C^{\kappa}_{w_2} - C^{\kappa}_{w_1})C^{\kappa\,-1}_{w_2} A^{\kappa}_{w_1}} + \ltwo{(A^{\kappa}_{w_1}-A^{\kappa}_{w_2})^\top C^{\kappa\,-1}_{w_2} A^{\kappa}_{w_1}} + \ltwo{A^{\kappa}_{w_2}C^{\kappa\,-1}_{w_2}(A^{\kappa}_{w_1}-A^{\kappa}_{w_2})}\nonumber\\
		&\quad + \ltwo{D^{\kappa}_{w_1} (D^{\kappa}_{w_1} - D^{\kappa}_{w_2})^\top} + \ltwo{(D^{\kappa}_{w_1} - D^{\kappa}_{w_2} ) D^{\kappa\top}_{w_2}}\nonumber\\
		&\leq \left(\frac{L^\kappa_C(C^{\kappa}_A)^2 + 2C^{\kappa}_A\lambda^{\kappa}_CL^\kappa_C}{(\lambda^{\kappa}_C)^2} + C^\kappa_DL^\kappa_D\right)\ltwo{w_1-w_2}\nonumber\\
		&=L^\kappa_F\ltwo{w_1-w_2},\label{eq: 30}
	\end{flalign}
	where
	\begin{flalign*}
		L^\kappa_F = \frac{L^\kappa_C(C^{\kappa}_A)^2 + 2C^{\kappa}_A\lambda^{\kappa}_CL^\kappa_C}{(\lambda^{\kappa}_C)^2} + C^\kappa_DL^\kappa_D.
	\end{flalign*}
	For $\ltwo{G^\kappa_{w_1}-G^\kappa_{w_2}}$, we have
	\begin{flalign}
		&\ltwo{G^\kappa_{w_1}-G^\kappa_{w_2}}\nonumber\\
		&= \ltwo{A^{\kappa\top}_{w_1}C^{\kappa\,-1}_{w_1}E^\kappa_{w_1} - A^{\kappa\top}_{w_2}C^{\kappa\,-1}_{w_2}E^\kappa_{w_2} + D^{\kappa}_{w_1} - D^{\kappa}_{w_2}}\nonumber\\
		&\leq \ltwo{A^{\kappa\top}_{w_1}C^{\kappa\,-1}_{w_1}E^\kappa_{w_1} - A^{\kappa\top}_{w_2}C^{\kappa\,-1}_{w_2}E^\kappa_{w_2}} + \ltwo{D^{\kappa}_{w_1} - D^{\kappa}_{w_2}}\nonumber\\
		&= \ltwo{A^{\kappa\top}_{w_1}C^{\kappa\,-1}_{w_1} (C^{\kappa}_{w_2} - C^{\kappa}_{w_1})C^{\kappa\,-1}_{w_2} E^{\kappa}_{w_1} + (A^{\kappa}_{w_1}-A^{\kappa}_{w_2})^\top C^{\kappa\,-1}_{w_2} E^{\kappa}_{w_1} + A^{\kappa}_{w_2}C^{\kappa\,-1}_{w_2}(E^{\kappa}_{w_1}-E^{\kappa}_{w_2}) } + \ltwo{D^{\kappa}_{w_1} - D^{\kappa}_{w_2}} \nonumber\\
		&\leq \left(\frac{L^\kappa_C C^{\kappa}_AC^{\kappa}_E + C^{\kappa}_E\lambda^{\kappa}_CL^\kappa_A + C^{\kappa}_A\lambda^{\kappa}_CL^\kappa_E}{(\lambda^{\kappa}_C)^2} + L^\kappa_D\right)\ltwo{w_1-w_2}\nonumber\\
		&=L^\kappa_G\ltwo{w_1-w_2},\label{eq: 31}
	\end{flalign}
	where
	\begin{flalign*}
		L^\kappa_G = \frac{L^\kappa_C C^{\kappa}_AC^{\kappa}_E + C^{\kappa}_E\lambda^{\kappa}_CL^\kappa_A + C^{\kappa}_A\lambda^{\kappa}_CL^\kappa_E}{(\lambda^{\kappa}_C)^2} + L^\kappa_D.
	\end{flalign*}
	We next prove the Lipschitz property for $\theta^{*}_{\rho}$. To bound $\ltwo{\theta^{*\top}_{\rho,w_1} - \theta^{*\top}_{\rho,w_2}}$, we proceed as follows.
	\begin{flalign}
	&\theta^{*\top}_{\rho,w_1} - \theta^{*\top}_{\rho,w_2}=F^{\kappa-1}_{w_1}G^{\kappa}_{w_1} - F^{\kappa-1}_{w_2}G^{\kappa}_{w_2} = (F^{\kappa-1}_{w_1} - F^{\kappa-1}_{w_2})G^{\kappa}_{w_1} + F^{\kappa-1}_{w_2}(G^{\kappa}_{w_1} - G^{\kappa}_{w_2})\nonumber\\
	&=(F^{\kappa-1}_{w_1}F^{\kappa}_{w_2}F^{\kappa-1}_{w_2} - F^{\kappa-1}_{w_1}F^{\kappa}_{w_1}F^{\kappa-1}_{w_2})G^{\kappa}_{w_1} + F^{\kappa-1}_{w_2}(G^{\kappa}_{w_1} - G^{\kappa}_{w_2})\nonumber\\
	&= F^{\kappa-1}_{w_1}(F^{\kappa}_{w_2} - F^{\kappa}_{w_1})F^{\kappa-1}_{w_2} G^{\kappa}_{w_1} + F^{\kappa-1}_{w_2}(G^{\kappa}_{w_1} - G^{\kappa}_{w_2}),\label{eq: 25}
	\end{flalign}
	which implies
	\begin{flalign}
	\ltwo{\theta^{*\top}_{\rho,w_1} - \theta^{*\top}_{\rho,w_2}}&\leq \ltwo{F^{\kappa-1}_{w_1}}\ltwo{F^{\kappa}_{w_2} - F^{\kappa}_{w_1}}\ltwo{F^{\kappa-1}_{w_2}} \ltwo{G^{\kappa}_{w_1}} + \ltwo{F^{\kappa-1}_{w_2}}\ltwo{G^{\kappa}_{w_1} - G^{\kappa}_{w_2}}\nonumber\\
	&\leq \frac{C^\kappa_G}{(\lambda^{\kappa}_F)^2}\ltwo{F^{\kappa}_{w_1} - F^{\kappa}_{w_2}} + \frac{1}{\lambda^{\kappa}_F}\ltwo{G^{\kappa}_{w_1} - G^{\kappa}_{w_2}}\nonumber\\
	&\leq \frac{C^\kappa_GL^\kappa_F + \lambda^{\kappa}_FL^\kappa_G}{(\lambda^{\kappa}_F)^2}\ltwo{w_1-w_2} = L_\rho\ltwo{w_1-w_2},\label{eq: 22}
	\end{flalign}
	where $L_\rho = \frac{C^\kappa_GL^\kappa_F + \lambda^{\kappa}_FL^\kappa_G}{(\lambda^{\kappa}_F)^2}$.
	
	We then consider the Lipschitz property of $\theta^{*}_{q}$. To bound $\ltwo{\theta^{*}_{q,w_1} - \theta^{*\top}_{q,w_2}}$, we proceed as follows.
	\begin{flalign}
		&\theta^{*}_{q,w_1} - \theta^{*}_{q,w_2} \nonumber\\
		&= C^{\kappa-1}_{w_1}(E^\kappa_{w_1} - A^\kappa_{w_1}\theta^*_{\rho,w_1} ) - C^{\kappa-1}_{w_2}(E^\kappa_{w_2} - A^\kappa_{w_2}\theta^*_{\rho,w_2} )\nonumber\\
		&= (C^{\kappa-1}_{w_1} - C^{\kappa-1}_{w_2})(E^\kappa_{w_1} - A^\kappa_{w_1}\theta^*_{\rho,w_1} ) + C^{\kappa-1}_{w_2}(E^\kappa_{w_1} - E^\kappa_{w_2} + A^\kappa_{w_2}\theta^*_{\rho,w_2} - A^\kappa_{w_1}\theta^*_{\rho,w_1} )\nonumber\\
		&= C^{\kappa-1}_{w_1}(C^{\kappa}_{w_2} - C^{\kappa}_{w_1})C^{\kappa-1}_{w_2}(E^\kappa_{w_1} - A^\kappa_{w_1}\theta^*_{\rho,w_1} )\nonumber\\
		&\quad  + C^{\kappa-1}_{w_2}\left[(E^\kappa_{w_1} - E^\kappa_{w_2}) + A^\kappa_{w_2} (\theta^*_{\rho,w_2} - \theta^*_{\rho,w_1}) + (A^\kappa_{w_2} - A^\kappa_{w_1})\theta^*_{\rho,w_1} \right],\nonumber
	\end{flalign}
	which implies
	\begin{flalign*}
		&\ltwo{\theta^{*}_{q,w_1} - \theta^{*}_{q,w_2}}\nonumber\\
		&\leq \ltwo{C^{\kappa-1}_{w_1}}\ltwo{C^{\kappa}_{w_2} - C^{\kappa}_{w_1}} \ltwo{C^{\kappa-1}_{w_2}}\left( \ltwo{E^\kappa_{w_1}} + \ltwo{A^\kappa_{w_1}}\ltwo{\theta^*_{\rho,w_1}} \right)\nonumber\\
		&\quad  + \ltwo{C^{\kappa-1}_{w_2}} \left[\ltwo{E^\kappa_{w_1} - E^\kappa_{w_2}} + \ltwo{A^\kappa_{w_2}} \ltwo{\theta^*_{\rho,w_2} - \theta^*_{\rho,w_1}} + \ltwo{A^\kappa_{w_2} - A^\kappa_{w_1}}\ltwo{\theta^*_{\rho,w_1}} \right]\nonumber\\
		&\leq \left[\frac{L^\kappa_C(C^\kappa_E+C^\kappa_AR_\rho) + \lambda^{\kappa}_C(L^\kappa_E+ C^\kappa_AL\rho + L^\kappa_A R_\rho)}{(\lambda^{\kappa}_C)^2}\right]\ltwo{w_1-w_2} = L_q\ltwo{w_1-w_2},
	\end{flalign*}
	where $L_q = \frac{L^\kappa_C(C^\kappa_E+C^\kappa_AR_\rho) + \lambda^{\kappa}_C(L^\kappa_E+ C^\kappa_AL\rho + L^\kappa_A R_\rho)}{(\lambda^{\kappa}_C)^2}$.
	
	Finally, we consider the Lipschitz property of $\eta^*$. To bound $\ltwo{\eta^*_{w_1} - \eta^*_{w_2}}$, we proceed as follows
	\begin{flalign}
		& \ltwo{\eta^*_{w_1} - \eta^*_{w_2}} \nonumber\\
		&= \ltwo{D^\kappa_{w_1}\theta^*_{\rho,w_1} - D^\kappa_{w_2}\theta^*_{\rho,w_2}} \nonumber\\
		&= \ltwo{D^\kappa_{w_1}(\theta^*_{\rho,w_1} - \theta^*_{\rho,w_2}) + (D^\kappa_{w_1} - D^\kappa_{w_2})\theta^*_{\rho,w_2}}\nonumber\\
		&\leq \ltwo{D^\kappa_{w_1}(\theta^*_{\rho,w_1} - \theta^*_{\rho,w_2})} + \ltwo{(D^\kappa_{w_1} - D^\kappa_{w_2})\theta^*_{\rho,w_2}}\nonumber\\
		&\leq (C^\kappa_DL_\rho + L^\kappa_DR_\rho) \ltwo{w_1-w_2} = L_\eta\ltwo{w_1-w_2},
	\end{flalign}
	where $L_\eta = C^\kappa_DL_\rho + L^\kappa_DR_\rho$.
\end{proof}

\begin{lemma}\label{lemma4}
	Consider the policies $\pi_{w_1}$ and $\pi_{w_2}$, respectively, with the fixed points $\xi^*_1$ and $\xi^*_2$ defined in \cref{eq: 15}. We have
	\begin{flalign*}
		\ltwo{\xi^*_1-\xi^*_2}\leq L_{\xi}\ltwo{w_1-w_2},
	\end{flalign*}
	where $L_{\xi}=\frac{2C^2_\varphi C^\xi_bL_\pi + \lambda^{\xi}_AC_\varphi (C_{sc} L_{\pi} + L_{sc})}{(\lambda^{\xi}_A)^2}$.
\end{lemma}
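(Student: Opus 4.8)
The plan is to follow the template already established in the proof of \Cref{lemma6}, exploiting the fact that $\xi^*_w$ solves a linear system whose matrix and vector both depend smoothly on $w$. First I would observe that, by \cref{eq: 15}, $\xi^*_w = [\theta^{*\top}_{\psi,w}, 0^\top]^\top$ with $\theta^*_{\psi,w} = (A^\xi_w)^{-1} b^\xi_w$, so that $\ltwo{\xi^*_1 - \xi^*_2} = \ltwo{\theta^*_{\psi,w_1} - \theta^*_{\psi,w_2}}$ and it suffices to bound the latter. Writing the same resolvent identity used in \cref{eq: 25},
\begin{flalign*}
\theta^*_{\psi,w_1} - \theta^*_{\psi,w_2} &= (A^\xi_{w_1})^{-1}(A^\xi_{w_2} - A^\xi_{w_1})(A^\xi_{w_2})^{-1} b^\xi_{w_1} + (A^\xi_{w_2})^{-1}(b^\xi_{w_1} - b^\xi_{w_2}),
\end{flalign*}
and using $\ltwo{(A^\xi_w)^{-1}} \leq 1/\lambda^\xi_A$ together with $\ltwo{b^\xi_w} \leq C^\xi_b$, the problem reduces to establishing Lipschitz bounds for the matrix $A^\xi_w$ and the vector $b^\xi_w$ in $w$.

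For $A^\xi_w = \mE_{\tilde{\mcd}_d\cdot\pi_{w}}[(\varphi-\varphi')\varphi'^\top]$, the only $w$-dependence enters through $a'\sim\pi_w(\cdot|\tilde s')$. Following the computation in \cref{eq: 26} verbatim, I would write $A^\xi_{w_1} - A^\xi_{w_2}$ as an integral against $\pi_{w_1}(da'|\tilde s') - \pi_{w_2}(da'|\tilde s')$, bound the integrand $\ltwo{(\varphi-\varphi')\varphi'^\top}$ by $2C^2_\varphi$ via \Cref{ass2}, and invoke the total-variation Lipschitz bound of \Cref{ass4}(3) to obtain $\ltwo{A^\xi_{w_1} - A^\xi_{w_2}} \leq 2C^2_\varphi L_\pi \ltwo{w_1-w_2}$.

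The vector $b^\xi_w = \mE_{\tilde{\mcd}_d\cdot\pi_{w}}[\varphi'\nabla_w\log\pi_{w}(a'|\tilde s')]$ is the crux, since here $w$ appears twice: once in the sampling measure $\pi_w$ and once in the score $\nabla_w\log\pi_w$. I would split the difference of the integrands additively as $\nabla_w\log\pi_{w_1}\,(\pi_{w_1}-\pi_{w_2}) + (\nabla_w\log\pi_{w_1}-\nabla_w\log\pi_{w_2})\,\pi_{w_2}$. The first piece is controlled by $\ltwo{\varphi'\nabla_w\log\pi_{w_1}} \leq C_\varphi C_{sc}$ (from \Cref{ass2} and \Cref{ass4}(1)) together with the TV bound on the measure change, giving a $C_\varphi C_{sc} L_\pi$ term; the second piece is controlled by the score Lipschitzness $L_{sc}$ of \Cref{ass4}(2) and $\ltwo{\varphi'}\leq C_\varphi$, giving a $C_\varphi L_{sc}$ term. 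Together these yield $\ltwo{b^\xi_{w_1}-b^\xi_{w_2}} \leq C_\varphi(C_{sc}L_\pi + L_{sc})\ltwo{w_1-w_2}$.

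Substituting the two Lipschitz bounds into the resolvent identity then gives
\begin{flalign*}
\ltwo{\theta^*_{\psi,w_1} - \theta^*_{\psi,w_2}} &\leq \frac{C^\xi_b}{(\lambda^\xi_A)^2}\ltwo{A^\xi_{w_1}-A^\xi_{w_2}} + \frac{1}{\lambda^\xi_A}\ltwo{b^\xi_{w_1}-b^\xi_{w_2}} \\
&\leq \frac{2C^2_\varphi C^\xi_b L_\pi + \lambda^\xi_A C_\varphi(C_{sc}L_\pi + L_{sc})}{(\lambda^\xi_A)^2}\ltwo{w_1-w_2},
\end{flalign*}
which is exactly $L_\xi\ltwo{w_1-w_2}$, completing the argument. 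The main obstacle I anticipate is the double $w$-dependence of $b^\xi_w$: unlike the matrix term (and unlike every term handled in \Cref{lemma6}, where the relevant integrands are $w$-independent), here one must cleanly separate the contribution of the changing sampling distribution from that of the changing integrand, which is precisely what forces both $L_\pi$ and $L_{sc}$ to appear in $L_\xi$. A secondary point to verify carefully is that $\tilde{\mcd}_d$ itself does not depend on $w$, so that the hybrid-kernel marginal can be treated as a fixed reference measure throughout the integration.
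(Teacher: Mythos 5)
Your proposal matches the paper's proof essentially step for step: the same reduction $\ltwo{\xi^*_1-\xi^*_2}=\ltwo{\theta^*_{\psi,w_1}-\theta^*_{\psi,w_2}}$, the same resolvent identity inherited from \Cref{lemma6}, the same TV-based bound $\ltwo{A^{\xi}_{w_1}-A^{\xi}_{w_2}}\leq 2C^2_\varphi L_\pi\ltwo{w_1-w_2}$, and the same two-term split of $b^{\xi}_{w_1}-b^{\xi}_{w_2}$ into a measure-change piece (giving $C_\varphi C_{sc}L_\pi$) and a score-change piece (giving $C_\varphi L_{sc}$), yielding exactly the constant $L_\xi$ in the statement. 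Your closing observation about the double $w$-dependence of $b^{\xi}_w$ is also precisely the point where this lemma departs from the purely feature-valued integrands of \Cref{lemma6}, so the argument is correct and complete.
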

\begin{proof}
	Recall that for $k=1$ or $2$, we have $\xi^*_{k}=P^{-1}_kp_k = [\theta^{*\top}_{\psi,w_k}, 0^\top]^\top$, where $\theta^{*\top}_{\psi,w_k}=A^{\xi-1}_{w_k}b^{\xi}_{w_k}$, with $A^{\xi}_{w}=\mE_{\tilde{\mcd}_d\cdot\pi_{w}}[(\varphi-\varphi^\prime)\varphi^{\prime\top}]$ and $b^{\xi}_{w}=\mE_{\tilde{\mcd}_d\cdot\pi_{w}}[\varphi^\prime\nabla_w\log\pi_{w}(a^\prime|{s}^\prime)]$, which implies that
	\begin{flalign*}
		\ltwo{\xi^*_1-\xi^*_2} = \ltwo{\theta^{*\top}_{\psi,w_1} - \theta^{*\top}_{\psi,w_2}}.
	\end{flalign*}
	To bound $\ltwo{\theta^{*\top}_{\psi,w_1} - \theta^{*\top}_{\psi,w_2}}$, following the steps similar to those in \cref{eq: 25} and \cref{eq: 22}, we obtain
	\begin{flalign}
		\ltwo{\theta^{*\top}_{\psi,w_1} - \theta^{*\top}_{\psi,w_2}}\leq \frac{C^\xi_b}{(\lambda^{\xi}_A)^2}\ltwo{A^{\xi}_{w_1} - A^{\xi}_{w_2}} + \frac{1}{\lambda^{\xi}_A}\ltwo{b^{\xi}_{w_1} - b^{\xi}_{w_2}}.\label{eq: 34}
	\end{flalign}
	We first bound the term $\ltwo{A^{\xi}_{w_2} - A^{\xi}_{w_1}}$. Following the steps similar to those in \cref{eq: 26}, we obtain
	\begin{flalign}
		\ltwo{A^{\xi}_{w_2} - A^{\xi}_{w_1} }\leq 2C^2_\varphi L_\pi \ltwo{w_1-w_2}.\label{eq: 23}
	\end{flalign}
	We then bound the term $\ltwo{b^{\xi}_{w_1} - b^{\xi}_{w_2}}$. Based on the definition of $b^{\xi}_{w}$, we have
	\begin{flalign}
		&\ltwo{b^{\xi}_{w_1} - b^{\xi}_{w_2}}\nonumber\\
		&= \ltwo{\mE_{\tilde{\mcd}_d\cdot\pi_{w_1}}[\varphi^\prime\nabla_w\log\pi_{w_1}(a^\prime|{s}^\prime)] - \mE_{\tilde{\mcd}_d\cdot\pi_{w_2}}[\varphi^\prime\nabla_w\log\pi_{w_2}(a^\prime|{s}^\prime)]}\nonumber\\
		&\leq \ltwo{\mE_{\tilde{\mcd}_d\cdot\pi_{w_1}}[\varphi^\prime\nabla_w\log\pi_{w_1}(a^\prime|{s}^\prime)] - \mE_{\tilde{\mcd}_d\cdot\pi_{w_2}}[\varphi^\prime\nabla_w\log\pi_{w_1}(a^\prime|{s}^\prime)]}\nonumber\\
		&\quad + \ltwo{\mE_{\tilde{\mcd}_d\cdot\pi_{w_2}}[\varphi^\prime\nabla_w\log\pi_{w_1}(a^\prime|{s}^\prime)] - \mE_{\tilde{\mcd}_d\cdot\pi_{w_2}}[\varphi^\prime\nabla_w\log\pi_{w_2}(a^\prime|{s}^\prime)]}\nonumber\\
		&=\ltwo{\int \varphi(s^\prime,a^\prime)\nabla_w\log\pi_{w_1}(a^\prime|{s}^\prime) (\pi_{w_1}(da^\prime|s^\prime) - \pi_{w_2}(da^\prime|s^\prime))\tilde{\msP}(ds^\prime|s,a)\mcd(ds,da) }\nonumber\\
		&\quad + \ltwo{\int \varphi(s^\prime,a^\prime)(\nabla_w\log\pi_{w_1}(a^\prime|{s}^\prime) - \nabla_w\log\pi_{w_2}(a^\prime|{s}^\prime)) \pi_{w_2}(da^\prime|s^\prime)\tilde{\msP}(ds^\prime|s,a)\mcd(ds,da) }\nonumber\\
		&=\int \ltwo{\varphi(s^\prime,a^\prime)\nabla_w\log\pi_{w_1}(a^\prime|{s}^\prime)} \lone{\pi_{w_1}(da^\prime|s^\prime) - \pi_{w_2}(da^\prime|s^\prime)} \tilde{\msP}(ds^\prime|s,a)\mcd(ds,da) \nonumber\\
		&\quad + \int \ltwo{\varphi(s^\prime,a^\prime)}\ltwo{\nabla_w\log\pi_{w_1}(a^\prime|{s}^\prime) - \nabla_w\log\pi_{w_2}(a^\prime|{s}^\prime)} \pi_{w_2}(da^\prime|s^\prime)\tilde{\msP}(ds^\prime|s,a)\mcd(ds,da) \nonumber\\
		&\leq C_\varphi C_{sc} \int \lone{\pi_{w_1}(da^\prime|s^\prime) - \pi_{w_2}(da^\prime|s^\prime)} \tilde{\msP}(ds^\prime|s,a)\mcd(ds,da) + C_\varphi L_{sc}\ltwo{w_1-w_2}\nonumber\\
		&\leq C_\varphi (C_{sc} L_{\pi} + L_{sc})\ltwo{w_1-w_2}.\label{eq: 24}
	\end{flalign}
	Substituting \cref{eq: 23} and \cref{eq: 24} into \cref{eq: 22} yields
	\begin{flalign*}
		\ltwo{\theta^{*\top}_{\psi,w_1} - \theta^{*\top}_{\psi,w_2}}\leq \frac{2C^2_\varphi C^\xi_bL_\pi + \lambda^{\xi}_AC_\varphi (C_{sc} L_{\pi} + L_{sc})}{(\lambda^{\xi}_A)^2}\ltwo{w_1-w_2}  = L_\xi\ltwo{w_1-w_2}.
	\end{flalign*}
	Thus, we have $\ltwo{\xi^*_1-\xi^*_2}\leq L_\xi\ltwo{w_1-w_2}$, which completes the proof.
\end{proof}

\begin{lemma}\label{lemma5}
	Consider the policies $\pi_{w_1}$ and $\pi_{w_2}$, respectively, with the fixed points $\zeta^*_1$ and $\zeta^*_2$ defined in \cref{eq: 18}. Then, we have
	\begin{flalign*}
		\ltwo{\zeta^*_1-\zeta^*_2}\leq L_\zeta\ltwo{w_1-w_2},
	\end{flalign*}
	where $L_\zeta = \frac{2C^2_x L_\pi C^\zeta_b + \lambda^{\zeta}_AC_\phi C_x (R_qL_{sc} + L_qC_{sc}) + \lambda^{\zeta}_AC_\phi R_qC_xC_{sc}L_\pi}{(\lambda^{\zeta}_A)^2}$.
\end{lemma}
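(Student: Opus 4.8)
The plan is to mirror the proof of \Cref{lemma4}, since the fixed point here has the same linear-solve form $\theta^{*}_{d_q,w}=A^{d_q\,-1}_wb^{d_q}_w$ as $\theta^{*}_{\psi,w}=A^{\xi\,-1}_wb^{\xi}_w$, and $\ltwo{\zeta^*_1-\zeta^*_2}=\ltwo{\theta^{*}_{d_q,w_1}-\theta^{*}_{d_q,w_2}}$ because the second block of $\zeta^*_w$ is zero. First I would apply the resolvent identity
\begin{flalign*}
\theta^{*}_{d_q,w_1}-\theta^{*}_{d_q,w_2} = A^{d_q\,-1}_{w_1}(A^{d_q}_{w_2}-A^{d_q}_{w_1})A^{d_q\,-1}_{w_2}b^{d_q}_{w_1} + A^{d_q\,-1}_{w_2}(b^{d_q}_{w_1}-b^{d_q}_{w_2}),
\end{flalign*}
and, exactly as in \cref{eq: 34}, use the uniform bounds $\ltwo{A^{d_q\,-1}_w}\leq 1/\lambda^{\zeta}_A$ (from the nonsingularity in \Cref{ass3}) and $\ltwo{b^{d_q}_w}\leq C^{\zeta}_b$ to reduce the task to bounding the two Lipschitz quantities $\ltwo{A^{d_q}_{w_1}-A^{d_q}_{w_2}}$ and $\ltwo{b^{d_q}_{w_1}-b^{d_q}_{w_2}}$.

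For the matrix term, the only dependence of $A^{d_q}_w=\mE_{\mcd_d\cdot\pi_{w}}[(\gamma x^\prime-x)x^{\prime\top}]$ on $w$ is through the action distribution $\pi_w(\cdot|s^\prime)$, so I would repeat the total-variation argument of \cref{eq: 26}/\cref{eq: 23}, bounding $\ltwo{x}\leq C_x$ via \Cref{ass2} and the change of measure by $L_\pi$ via \Cref{ass4}(3), to get $\ltwo{A^{d_q}_{w_1}-A^{d_q}_{w_2}}\leq 2C^2_xL_\pi\ltwo{w_1-w_2}$. Plugging this into the first resolvent term yields the contribution $\tfrac{2C^2_xL_\pi C^{\zeta}_b}{(\lambda^{\zeta}_A)^2}\ltwo{w_1-w_2}$, which is exactly the first summand of $L_\zeta$.

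The main obstacle is the vector term $\ltwo{b^{d_q}_{w_1}-b^{d_q}_{w_2}}$: unlike $b^{\xi}_w$ in \Cref{lemma4}, the integrand $\phi^{\prime\top}\theta^*_{q,w}\,x\,\nabla_w\log\pi_w(a^\prime|s^\prime)$ depends on $w$ through \emph{three} channels rather than two---the sampling policy $\pi_w$, the plugged-in Critic~I parameter $\theta^*_{q,w}$, and the score $\nabla_w\log\pi_w$. I would therefore extend the two-way split of \cref{eq: 24} to a three-way telescoping, swapping one channel at a time and bounding each piece separately: (i) the distribution-mismatch piece by the total-variation bound of \Cref{ass4}(3), using $\ltwo{\phi^{\prime\top}\theta^*_{q,w}}\leq C_\phi R_q$, $\ltwo{x}\leq C_x$, and $\ltwo{\nabla_w\log\pi_w}\leq C_{sc}$, giving $C_\phi R_q C_x C_{sc}L_\pi$; (ii) the critic-parameter piece by the bound $\ltwo{\theta^*_{q,w_1}-\theta^*_{q,w_2}}\leq L_q\ltwo{w_1-w_2}$ from \Cref{lemma6}, giving $C_\phi C_x C_{sc}L_q$; and (iii) the score-smoothness piece by \Cref{ass4}(2), giving $C_\phi R_q C_x L_{sc}$. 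Summing the three pieces gives $\ltwo{b^{d_q}_{w_1}-b^{d_q}_{w_2}}\leq\bigl[C_\phi C_x(R_qL_{sc}+L_qC_{sc})+C_\phi R_q C_x C_{sc}L_\pi\bigr]\ltwo{w_1-w_2}$; dividing by $\lambda^{\zeta}_A$ and adding the matrix contribution assembles $L_\zeta$. Channel (ii) is the genuinely new ingredient relative to \Cref{lemma4}, and it is precisely where the critic-critic coupling (Critic~II plugging in Critic~I's $\theta_q$) enters; invoking \Cref{lemma6} there is what closes the recursion.
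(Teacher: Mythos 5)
Your proposal is correct and follows essentially the same route as the paper's proof: the same reduction to $\ltwo{\theta^{*}_{d_q,w_1}-\theta^{*}_{d_q,w_2}}$, the same resolvent identity with the bounds $\ltwo{A^{\zeta\,-1}_w}\leq 1/\lambda^{\zeta}_A$ and $\ltwo{b^{\zeta}_w}\leq C^{\zeta}_b$, the same total-variation bound $2C_x^2L_\pi$ for the matrix term, and the same three-way telescoping of $b^{\zeta}_{w_1}-b^{\zeta}_{w_2}$ (policy distribution via $L_\pi$, score via $L_{sc}$, and $\theta^*_{q,w}$ via \Cref{lemma6}), yielding exactly the stated $L_\zeta$.
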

	\begin{proof}
		Recall that for $k=1$ or $2$, we have $\zeta^*_k=U^{-1}_ku_k(\theta^*_{q,k}) = [\theta^{*\top}_{d_q,k}, 0^\top]^\top$, where $\theta^{*\top}_{d_q,w_k}=A^{\zeta-1}_{w_k}b^{\zeta}_{w_k}$, with $A^{\zeta}_{w}=\mE_{\mcd_d\cdot\pi_{w}}[(\gamma x(s^\prime,a^\prime)-x(s,a))x(s,a)^{\top}]$ and $b^{\zeta}_{w}=	\mE_{\mcd_d\cdot\pi_{w}}[\phi(s^\prime,a^\prime)^{\top}\theta^*_{q,w}x(s,a)\nabla_w\log\pi_{w}(a^\prime|s^\prime)]$, which implies that
		\begin{flalign*}
			\ltwo{\zeta^*_1-\zeta^*_2} = \ltwo{\theta^{*\top}_{d_q,1} - \theta^{*\top}_{d_q,2}}.
		\end{flalign*}
		To bound $\ltwo{\theta^{*\top}_{d_q,1} - \theta^{*\top}_{d_q,2}}$, following the steps similar to those in \cref{eq: 25} and \cref{eq: 22}, we obtain
		\begin{flalign}
			\ltwo{\theta^{*\top}_{d_q,1} - \theta^{*\top}_{d_q,2}} \leq \frac{C^\zeta_b}{(\lambda^{\zeta}_A)^2}\ltwo{A^{\zeta}_{w_1} - A^{\zeta}_{w_2}} + \frac{1}{\lambda^{\zeta}_A}\ltwo{b^{\zeta}_{w_1} - b^{\zeta}_{w_2}}.\label{eq: 29}
		\end{flalign}
		We first bound the term $\ltwo{A^{\zeta}_{w_1} - A^{\zeta}_{w_2}}$. Following the steps similar to those in \cref{eq: 26}, we obtain
		\begin{flalign}
			\ltwo{A^{\zeta}_{w_1} - A^{\zeta}_{w_2} }\leq 2C^2_x L_\pi \ltwo{w_1-w_2}.\label{eq: 32}
		\end{flalign}
		We then bound the term $\ltwo{b^{\zeta}_{w_1} - b^{\zeta}_{w_2}}$. Based on to the definition of $b^{\zeta}_{w}$, we have
		\begin{flalign}
		&\ltwo{b^{\zeta}_{w_1} - b^{\zeta}_{w_2}}\nonumber\\
		&= \ltwo{\mE_{\mcd_d\cdot\pi_{w_1}}[\phi(s^\prime,a^\prime)^{\top}\theta^*_{q,w_1}x(s,a)\nabla_w\log\pi_{w_1}(a^\prime|s^\prime)] - \mE_{\mcd_d\cdot\pi_{w_2}}[\phi(s^\prime,a^\prime)^{\top}\theta^*_{q,w_2}x(s,a)\nabla_w\log\pi_{w_2}(a^\prime|s^\prime)] }\nonumber\\
		&\leq \ltwo{\mE_{\mcd_d\cdot\pi_{w_1}}[\phi(s^\prime,a^\prime)^{\top}\theta^*_{q,w_1}x(s,a)\nabla_w\log\pi_{w_1}(a^\prime|s^\prime)] - \mE_{\mcd_d\cdot\pi_{w_2}}[\phi(s^\prime,a^\prime)^{\top}\theta^*_{q,w_1}x(s,a)\nabla_w\log\pi_{w_1}(a^\prime|s^\prime)] }\nonumber\\
		&\quad + \ltwo{\mE_{\mcd_d\cdot\pi_{w_2}}[\phi(s^\prime,a^\prime)^{\top}\theta^*_{q,w_1}x(s,a)\nabla_w\log\pi_{w_1}(a^\prime|s^\prime)] - \mE_{\mcd_d\cdot\pi_{w_2}}[\phi(s^\prime,a^\prime)^{\top}\theta^*_{q,w_2}x(s,a)\nabla_w\log\pi_{w_2}(a^\prime|s^\prime)] }\nonumber\\
		&\leq \ltwo{\mE_{\mcd_d\cdot\pi_{w_2}}[\phi(s^\prime,a^\prime)^{\top}\theta^*_{q,w_1}x(s,a)\nabla_w\log\pi_{w_1}(a^\prime|s^\prime)] - \mE_{\mcd_d\cdot\pi_{w_2}}[\phi(s^\prime,a^\prime)^{\top}\theta^*_{q,w_2}x(s,a)\nabla_w\log\pi_{w_2}(a^\prime|s^\prime)] }\nonumber\\
		&\quad + C_\phi R_qC_xC_{sc}L_\pi \ltwo{w_1-w_2}.\label{eq: 27}
		\end{flalign}
		Consider the term
		\begin{flalign*}
			\ltwo{\mE_{\mcd_d\cdot\pi_{w_2}}[\phi(s^\prime,a^\prime)^{\top}\theta^*_{q,w_1}x(s,a)\nabla_w\log\pi_{w_1}(a^\prime|s^\prime)] - \mE_{\mcd_d\cdot\pi_{w_2}}[\phi(s^\prime,a^\prime)^{\top}\theta^*_{q,w_2}x(s,a)\nabla_w\log\pi_{w_2}(a^\prime|s^\prime)] },
		\end{flalign*}
		we have
		\begin{flalign}
			& \ltwo{\mE_{\mcd_d\cdot\pi_{w_2}}[\phi(s^\prime,a^\prime)^{\top}\theta^*_{q,w_1}x(s,a)\nabla_w\log\pi_{w_1}(a^\prime|s^\prime)] - \mE_{\mcd_d\cdot\pi_{w_2}}[\phi(s^\prime,a^\prime)^{\top}\theta^*_{q,w_2}x(s,a)\nabla_w\log\pi_{w_2}(a^\prime|s^\prime)] }\nonumber\\
			&\leq \ltwo{\mE_{\mcd_d\cdot\pi_{w_2}}[\phi(s^\prime,a^\prime)^{\top}\theta^*_{q,w_1}x(s,a) (\nabla_w\log\pi_{w_1}(a^\prime|s^\prime)-\nabla_w\log\pi_{w_2}(a^\prime|s^\prime))] }\nonumber\\
			&\quad + \ltwo{\mE_{\mcd_d\cdot\pi_{w_2}}[\phi(s^\prime,a^\prime)^{\top}(\theta^*_{q,w_1} - \theta^*_{q,w_2})x(s,a)\nabla_w\log\pi_{w_2}(a^\prime|s^\prime)]  }\nonumber\\
			&\overset{(i)}{\leq} C_\phi C_x (R_qL_{sc} + L_qC_{sc})\ltwo{w_1-w_2},\label{eq: 28}
		\end{flalign}
		where $(i)$ follows from the Lipschitz property of $\theta^*_{q,w}$ given in \Cref{lemma6}. Combining \cref{eq: 26}, \cref{eq: 27}, \cref{eq: 28} and \cref{eq: 29} yields
		\begin{flalign*}
			\ltwo{\theta^{*\top}_{d_q,w_1} - \theta^{*\top}_{d_q,w_2}} \leq \frac{2C^2_x L_\pi C^\zeta_b + \lambda^{\zeta}_AC_\phi C_x (R_qL_{sc} + L_qC_{sc}) + \lambda^{\zeta}_AC_\phi R_qC_xC_{sc}L_\pi}{(\lambda^{\zeta}_A)^2} \ltwo{w_1-w_2} = L_\zeta\ltwo{w_1-w_2}.
		\end{flalign*}
		Thus we have $\ltwo{\zeta^*_1-\zeta^*_2}\leq L_\zeta\ltwo{w_1-w_2}$, which completes the proof.
	\end{proof}

\begin{lemma}\label{lemma7}
	Define $\Delta_t = \ltwo{\kappa_t-\kappa_t^*}^2 + \ltwo{\xi_t-\xi^*_t}^2 + \ltwo{\zeta_t-\zeta^*_t}^2$. Then, we have
	\begin{flalign*}
		\mE[\Delta_{t+1}|\mf_t] \leq \left( 1 - \frac{1}{2}\varrho\right)\Delta_t + \frac{2}{\varrho}(L^2_\kappa + L^2_\xi + L^2_\zeta)\mE[\ltwo{w_{t+1}-w_t}^2|\mf_t] + \frac{C_5}{(1-\varrho)N},
	\end{flalign*}
	where $\varrho = \frac{1}{4}\min\{\beta_1\lambda_M, \beta_2\lambda_P, \beta_3\lambda_U\}$ and $C_5=C_1 + C_2 + C_4$, where $C_1$, $C_2$ and $C_4$ are defined in \Cref{lemma1}, \Cref{lemma3}, and \Cref{lemma2}.
\end{lemma}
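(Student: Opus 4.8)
The plan is to bridge the one-step bounds of \Cref{lemma1,lemma2,lemma3}, which control the distance of each critic iterate to its \emph{current} fixed point (e.g.\ $\ltwo{\kappa_{t+1}-\kappa^*_t}$), with the target quantity $\Delta_{t+1}$, which measures the distance to the \emph{updated} fixed point $\kappa^*_{t+1}$ (and likewise $\xi^*_{t+1},\zeta^*_{t+1}$). The gap between $\kappa^*_t$ and $\kappa^*_{t+1}$ is a drift caused by the actor moving $w_t\to w_{t+1}$, and this is exactly what the Lipschitz results \Cref{lemma6,lemma4,lemma5} control, since they give $\ltwo{\kappa^*_{t+1}-\kappa^*_t}\leq L_\kappa\ltwo{w_{t+1}-w_t}$ and analogously for $\xi,\zeta$.

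First I would treat each block separately. For the $\kappa$ block, write $\kappa_{t+1}-\kappa^*_{t+1}=(\kappa_{t+1}-\kappa^*_t)+(\kappa^*_t-\kappa^*_{t+1})$ and apply Young's inequality $\ltwo{a+b}^2\leq (1+\lambda)\ltwo{a}^2+(1+\lambda^{-1})\ltwo{b}^2$ for a parameter $\lambda>0$ to be chosen. Taking $\mE[\cdot\,|\mf_t]$, invoking \Cref{lemma1} for the first term and \Cref{lemma6} for the drift term yields
\begin{flalign*}
\mE[\ltwo{\kappa_{t+1}-\kappa^*_{t+1}}^2|\mf_t] &\leq (1+\lambda)\Big(1-\tfrac{1}{2}\beta_1\lambda_M\Big)\ltwo{\kappa_t-\kappa^*_t}^2 \\
&\quad + (1+\lambda)\frac{C_1}{N} + (1+\lambda^{-1})L^2_\kappa\,\mE[\ltwo{w_{t+1}-w_t}^2|\mf_t].
\end{flalign*}
The same manipulation applied to the $\xi$ block (via \Cref{lemma2,lemma4}) and the $\zeta$ block (via \Cref{lemma3,lemma5}) gives analogous inequalities, with the single complication that \Cref{lemma3} carries the extra coupling term $C_3\beta_3\ltwo{\theta_{q,t}-\theta^*_{q,t}}^2$.

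The observation that removes this coupling is that $\theta_{q,t}$ is a sub-block of $\kappa_t$, so $\ltwo{\theta_{q,t}-\theta^*_{q,t}}^2\leq \ltwo{\kappa_t-\kappa^*_t}^2\leq \Delta_t$; after summing the three block inequalities it only inflates the coefficient of $\ltwo{\kappa_t-\kappa^*_t}^2$ by $(1+\lambda)C_3\beta_3$. Since $\varrho=\tfrac{1}{4}\min\{\beta_1\lambda_M,\beta_2\lambda_P,\beta_3\lambda_U\}$ forces $\tfrac{1}{2}\beta_1\lambda_M,\tfrac{1}{2}\beta_2\lambda_P\geq 2\varrho$ and $\tfrac{1}{4}\beta_3\lambda_U\geq\varrho$, every block contracts at a factor at most $1-\varrho$ before the Young inflation; provided the stepsizes are balanced so that $C_3\beta_3\leq\varrho$, the $\kappa$-coefficient remains at most $(1+\lambda)(1-\varrho)$. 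Choosing $\lambda=\tfrac{\varrho}{2(1-\varrho)}$ then makes $(1+\lambda)(1-\varrho)=1-\tfrac{1}{2}\varrho$, while $(1+\lambda)\leq\tfrac{1}{1-\varrho}$ and $(1+\lambda^{-1})\leq\tfrac{2}{\varrho}$. Collecting the sampling-noise terms into $\tfrac{C_1+C_2+C_4}{(1-\varrho)N}=\tfrac{C_5}{(1-\varrho)N}$ and the drift terms into $\tfrac{2}{\varrho}(L^2_\kappa+L^2_\xi+L^2_\zeta)\mE[\ltwo{w_{t+1}-w_t}^2|\mf_t]$ produces exactly the claimed bound.

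The main obstacle is the joint treatment of the moving fixed point and the cross-critic coupling: the single-timescale scheme shifts $\kappa^*_t,\xi^*_t,\zeta^*_t$ every iteration, and Critic II's fixed point additionally depends on Critic I's parameter $\theta_q$. A clean contraction therefore hinges on (i) the Lipschitz continuity of all three fixed points in $w$ (\Cref{lemma4,lemma5,lemma6}), which converts the drift into a $\ltwo{w_{t+1}-w_t}^2$ term that the actor-side analysis can later render summable, and (ii) the nesting $\theta_{q,t}\subset\kappa_t$, which lets the coupling term be reabsorbed into $\Delta_t$ rather than surviving as a separate quantity. The remaining work is the bookkeeping of checking that a single $\lambda$ and a single rate $\varrho$ simultaneously deliver the stated constants $\tfrac{2}{\varrho}$ and $\tfrac{1}{1-\varrho}$ across the three heterogeneous blocks.
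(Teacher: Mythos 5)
Your proposal is correct and follows essentially the same route as the paper's proof: the paper likewise sums the three one-step contraction bounds (absorbing Critic II's coupling term via $\ltwo{\theta_{q,t}-\theta^*_{q,t}}^2\leq\ltwo{\kappa_t-\kappa^*_t}^2$ under a stepsize balance condition $\beta_3\leq\beta_1\lambda_M/(4C_3)$, analogous to your $C_3\beta_3\leq\varrho$), then applies Young's inequality with weights $\tfrac{2-\varrho}{2-2\varrho}$ and $\tfrac{2-\varrho}{\varrho}$ — exactly your choice $\lambda=\tfrac{\varrho}{2(1-\varrho)}$ — and controls the fixed-point drift by \Cref{lemma6,lemma4,lemma5}. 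The only cosmetic difference is that you apply Young's inequality block-by-block before summing, while the paper sums first and applies it once.
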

\begin{proof}
	Following from the iteration property developed in \Cref{lemma1}, \Cref{lemma3}, and \Cref{lemma2}, we have
	\begin{flalign}
		\mE\left[\ltwo{\kappa_{t+1} - \kappa^*_t}^2|\mf_t\right] &\leq \left(1-\frac{1}{2}\beta_1\lambda_M \right)\ltwo{\kappa_t - \kappa^*_t}^2 + \frac{C_1}{N},\label{eq: 35}\\
		\mE\left[\ltwo{\xi_{t+1} - \xi^*_t}^2|\mf_t\right] &\leq \left(1-\frac{1}{2}\beta_2\lambda_P \right)\ltwo{\xi_t - \xi^*_t}^2 + \frac{C_2}{N},\label{eq: 36}\\
		\mE\left[\ltwo{\zeta_{t+1} - \zeta^*_t}^2|\mf_t\right] &\leq \left(1 - \frac{1}{4}\beta_3\lambda_U\right)\ltwo{\zeta_t-\zeta^*_t}^2 + C_3\beta_3\ltwo{\theta_{q,t} - \theta^*_{q,t}}^2 + \frac{C_4}{N}\nonumber\\
		&\leq \left(1 - \frac{1}{4}\beta_3\lambda_U\right)\ltwo{\zeta_t-\zeta^*_t}^2 + C_3\beta_3\ltwo{\kappa_t - \kappa^*_t}^2 + \frac{C_4}{N}.\label{eq: 37}
	\end{flalign}
	Summarizing \cref{eq: 35}, \cref{eq: 36} and \cref{eq: 37}, we obtain
	\begin{flalign}
		&\mE\left[\ltwo{\kappa_{t+1}- \kappa^*_t}^2|\mf_t\right] + \mE\left[\ltwo{\xi_{t+1} - \xi^*_t}^2|\mf_t\right] + \mE\left[\ltwo{\zeta_{t+1} - \zeta^*_t}^2|\mf_t\right]\nonumber\\
		&\leq \left(1-\frac{1}{2}\beta_1\lambda_M + C_3\beta_3 \right)\ltwo{\kappa_t - \kappa^*_t}^2 + \left(1-\frac{1}{2}\beta_2\lambda_P \right)\ltwo{\xi_t - \xi^*_t}^2 + \left(1 - \frac{1}{4}\beta_3\lambda_U\right)\ltwo{\zeta_t-\zeta^*_t}^2 + \frac{C_1+C_2+C_4}{N}.\nonumber
	\end{flalign}
	Let $\beta_3\leq \frac{\beta_1\lambda_M}{4C_3}$ and define $\varrho = \frac{1}{4}\min\{\beta_1\lambda_M, \beta_2\lambda_P, \beta_3\lambda_U\}$, $C_5=C_1 + C_2 + C_4$. Then, we have
    \begin{flalign}
    	&\mE\left[\ltwo{\kappa_{t+1}- \kappa^*_t}^2|\mf_t\right] + \mE\left[\ltwo{\xi_{t+1} - \xi^*_t}^2|\mf_t\right] + \mE\left[\ltwo{\zeta_{t+1} - \zeta^*_t}^2|\mf_t\right]\nonumber\\
    	&\leq (1-\varrho)\left(\ltwo{\kappa_t - \kappa^*_t}^2 + \ltwo{\xi_t - \xi^*_t}^2 + \ltwo{\zeta_t-\zeta^*_t}^2\right) + \frac{C_5}{N}.\label{eq: 38}
    \end{flalign}
    We then proceed to investigate the iteration of $\Delta_t$. By Young's inequality, we have
    \begin{flalign}
    	\Delta_{t+1} &\leq \left(\frac{2-\varrho}{2-2\varrho}\right)\left( \mE\left[\ltwo{\kappa_{t+1}- \kappa^*_t}^2|\mf_t\right] + \mE\left[\ltwo{\xi_{t+1} - \xi^*_t}^2|\mf_t\right] + \mE\left[\ltwo{\zeta_{t+1} - \zeta^*_t}^2|\mf_t\right] \right) \nonumber\\
    	&\quad + \left(\frac{2-\varrho}{\varrho}\right)\left( \mE[\ltwo{\kappa^*_{t+1} - \kappa^*_t}^2|\mf_t] + \mE[\ltwo{\xi^*_{t+1} - \xi^*_t}^2|\mf_t] + \mE[\ltwo{\zeta^*_{t+1} - \zeta^*_t}^2|\mf_t] \right)\nonumber\\
    	&\overset{(i)}{\leq} \left( 1 - \frac{1}{2}\varrho\right)\Delta_t + \frac{2}{\varrho}\left( \mE[\ltwo{\kappa^*_{t+1} - \kappa^*_t}^2|\mf_t] + \mE[\ltwo{\xi^*_{t+1} - \xi^*_t}^2|\mf_t] + \mE[\ltwo{\zeta^*_{t+1} - \zeta^*_t}^2|\mf_t] \right) + \frac{C_5}{(1-\varrho)N}\nonumber\\
    	&\leq \left( 1 - \frac{1}{2}\varrho\right)\Delta_t + \frac{2}{\varrho}(L^2_\kappa + L^2_\xi + L^2_\zeta)\mE[\ltwo{w_{t+1}-w_t}^2|\mf_t] + \frac{C_5}{(1-\varrho)N}.
    \end{flalign}
    where $(i)$ follows from \cref{eq: 38}.
\end{proof}

\section{Proof of \Cref{thm2}}
In order to prove the convergence for the DR-Off-PAC algorithm, we first introduce the following Lipschitz property of $J(w)$, which was established in \cite{xu2020improving}.
\begin{proposition}\label{prop: lip}
	Suppose Assumptions \ref{ass2} and \ref{ass3} hold. For any $w,w^\prime\in \mR^d$, we have 
	$\ltwo{\nabla_wJ(w)-\nabla_wJ(w^\prime)}\leq L_J\ltwo{w-w^\prime}, \quad \text{for all } \; w,w^\prime\in \mR^d$, where $L_J=\Theta(1/(1-\gamma))$.
\end{proposition}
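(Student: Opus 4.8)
The plan is to work directly from the policy gradient identity in \cref{pg}, namely $\nabla_w J(w)=\mE_{\nu_{\pi_w}}[Q_{\pi_w}(s,a)\nabla_w\log\pi_w(a|s)]$, and to bound $\nabla_w J(w)-\nabla_w J(w')$ by inserting intermediate terms so that each resulting piece perturbs only one object at a time. Concretely, I would write
\begin{flalign*}
\nabla_w J(w) - \nabla_w J(w')
&= \mE_{\nu_{\pi_w}}\big[(Q_{\pi_w}(s,a)-Q_{\pi_{w'}}(s,a))\nabla_w\log\pi_w(a|s)\big] \\
&\quad + \mE_{\nu_{\pi_w}}\big[Q_{\pi_{w'}}(s,a)(\nabla_w\log\pi_w(a|s)-\nabla_w\log\pi_{w'}(a|s))\big] \\
&\quad + \big(\mE_{\nu_{\pi_w}}-\mE_{\nu_{\pi_{w'}}}\big)\big[Q_{\pi_{w'}}(s,a)\nabla_w\log\pi_{w'}(a|s)\big],
\end{flalign*}
which isolates the sensitivity of, respectively, (i) the action-value function, (ii) the score function, and (iii) the visitation distribution. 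The required regularity of the policy class---boundedness $\ltwo{\nabla_w\log\pi_w(a|s)}\le C_{sc}$, Lipschitzness $\ltwo{\nabla_w\log\pi_w(a|s)-\nabla_w\log\pi_{w'}(a|s)}\le L_{sc}\ltwo{w-w'}$, and total-variation Lipschitzness $\lTV{\pi_w(\cdot|s)-\pi_{w'}(\cdot|s)}\le L_\pi\ltwo{w-w'}$---is supplied by \Cref{ass4}, and $\linf{Q_{\pi_w}}$ is bounded by $r_{\max}/(1-\gamma)$ via the Bellman recursion \cref{eq: bellman_q}.

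For term (ii) the bound is immediate: $L_{sc}$-Lipschitzness of the score together with $\linf{Q_{\pi_{w'}}}\le r_{\max}/(1-\gamma)$ gives a contribution of order $\frac{L_{sc}r_{\max}}{1-\gamma}\ltwo{w-w'}$. For term (i) I would first prove an auxiliary bound $\linf{Q_{\pi_w}-Q_{\pi_{w'}}}\le L_Q\ltwo{w-w'}$ by subtracting the two copies of \cref{eq: bellman_q}: the difference satisfies a perturbed Bellman recursion whose driving term is controlled by $\lTV{\pi_w(\cdot|s)-\pi_{w'}(\cdot|s)}$ and $\linf{Q}$, and summing the resulting geometric series in $\gamma$ yields the factor $1/(1-\gamma)$; combined with $\ltwo{\nabla_w\log\pi_w}\le C_{sc}$ this controls term (i). For term (iii) I would establish $\lTV{\nu_{\pi_w}-\nu_{\pi_{w'}}}\le L_\nu\ltwo{w-w'}$ and pair it with the uniform bound $\frac{C_{sc}r_{\max}}{1-\gamma}$ on the integrand $Q_{\pi_{w'}}\nabla_w\log\pi_{w'}$.

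The main obstacle is term (iii)---the genuinely off-policy difficulty of controlling how the visitation distribution $\nu_{\pi_w}$ moves with $w$. I would handle it through the hybrid-kernel stationarity identity \cref{eq: bellman_vis2}, $\nu_{\pi_w}(\tilde s',a')=\int \pi_w(a'|\tilde s')\tilde{\msP}(\tilde s'|s,a)\nu_{\pi_w}(s,a)\,ds\,da$: subtracting the analogous identity for $w'$ and using that the induced Markov operator acts as a $\gamma$-contraction on signed measures gives $\lTV{\nu_{\pi_w}-\nu_{\pi_{w'}}}\le \frac{1}{1-\gamma}\cdot(\text{one-step discrepancy})$, where the one-step discrepancy is bounded by $L_\pi\ltwo{w-w'}$ through the TV-Lipschitzness of $\pi_w$. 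Collecting the three bounds and absorbing the reward and feature constants into $L_J$ then yields $L$-smoothness. The delicate point is the exponent of $1/(1-\gamma)$: naive bookkeeping produces higher powers, and matching the stated $L_J=\Theta(1/(1-\gamma))$ relies on the explicit $(1-\gamma)$ normalization of $J$ cancelling the extra factors introduced by the $Q$-bound and the visitation sensitivity, which is the step requiring the most care.
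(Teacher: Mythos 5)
First, a structural point: the paper never proves this proposition --- it is imported from \cite{xu2020improving} (``which was established in...''), so there is no internal proof to compare against, and your self-contained argument is a different route by construction. Your architecture is the standard, essentially sound one: the three-term decomposition of $\nabla_w J(w)-\nabla_w J(w^\prime)$, the sup-norm Lipschitz bound on $Q_{\pi_w}$ from the perturbed Bellman recursion, and the visitation-sensitivity bound $\lTV{\nu_{\pi_w}-\nu_{\pi_{w^\prime}}}\le \frac{L_\pi}{1-\gamma}\ltwo{w-w^\prime}$, which you correctly extract from the reset structure of $\tilde{\msP}=(1-\gamma)\mu_0+\gamma\msP$: the $(1-\gamma)\mu_0$ component annihilates mean-zero signed measures, so the induced operator is a genuine $\gamma$-contraction in total variation and the stationarity identity \cref{eq: bellman_vis2} closes the argument. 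You are also right that the actual hypothesis needed is \Cref{ass4} (bounded and Lipschitz score, TV-Lipschitz policy) together with bounded rewards, not Assumptions \ref{ass2} and \ref{ass3} listed in the statement, which concern feature norms and matrix nonsingularity and are irrelevant here; that mismatch is the paper's, not yours.

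The genuine gap is your final step, where you assert that the stated scaling $L_J=\Theta(1/(1-\gamma))$ is recovered because ``the explicit $(1-\gamma)$ normalization of $J$ cancels the extra factors.'' It cannot: that normalization is already consumed in your starting identity $\nabla_w J(w)=\mE_{\nu_{\pi_w}}[Q_{\pi_w}(s,a)\nabla_w\log\pi_w(a|s)]$, precisely because $\nu_{\pi_w}$ is a \emph{probability} measure (this is where the $(1-\gamma)$ went), so no further factor remains to harvest. Concretely, your term (i) costs $C_{sc}\linf{Q_{\pi_w}-Q_{\pi_{w^\prime}}}$, and the perturbed Bellman recursion gives $\linf{Q_{\pi_w}-Q_{\pi_{w^\prime}}}=O\bigl(r_{\max}L_\pi/(1-\gamma)^2\bigr)\ltwo{w-w^\prime}$ --- one factor $1/(1-\gamma)$ from $\linf{Q_{\pi_{w^\prime}}}$ in the driving term and another from the geometric series; your term (iii) costs $\frac{C_{sc}r_{\max}}{1-\gamma}\cdot\frac{L_\pi}{1-\gamma}\ltwo{w-w^\prime}$. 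Both are $\Theta(1/(1-\gamma)^2)$ and neither improves within this decomposition. So your argument proves $L_J$-smoothness with $L_J=O(1/(1-\gamma)^2)$, which is all the paper's downstream analysis actually uses (the proof of \Cref{thm2} needs only some finite smoothness constant and never exploits the $1/(1-\gamma)$ scaling), but it does not establish the proposition as literally stated; indeed, the constant in the cited source is also of order $1/(1-\gamma)^2$ once normalization conventions are matched, so the stated $\Theta(1/(1-\gamma))$ appears to be an overclaim of the paper rather than something your bookkeeping should be expected to achieve.
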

The Lipschitz property established in \Cref{prop: lip} is important to establish the local convergence of the gradient-based algorithm.

To proceed the proof of \Cref{thm2}, consider the update in \Cref{algorithm_drpg}. For brevity, we define $G_{\text{DR}}(w_t,\mcm_t)=\frac{1}{N}\sum_{i}G^i_{\text{DR}}(w_t)$, where $\mcm_t$ represents the sample set $\mcb_t\cup \mcb_{t,0}$. Following the $L_J$-Lipschitz property of objective $J(w)$, we have
\begin{flalign}
J(w_{t+1})&\geq J(w_t)+\langle \nabla_w J(w_t), w_{t+1}-w_t \rangle -\frac{L_J}{2}\ltwo{w_{t+1}-w_{t}}^2\nonumber\\
&= J(w_t)+ \alpha \langle \nabla_w J(w_t), G_{\text{DR}}(w_t,\mcm_t) - \nabla_w J(w_t) + \nabla_w J(w_t) \rangle -\frac{L_J\alpha^2}{2}\ltwo{G_{\text{DR}}(w_t,\mcm_t)}^2\nonumber\\
&= J(w_t) + \alpha \ltwo{\nabla_w J(w_t)}^2 + \alpha \langle \nabla_w J(w_t), G_{\text{DR}}(w_t,\mcm_t) - \nabla_w J(w_t) \rangle\nonumber\\
&\quad  - \frac{L_J\alpha^2}{2}\ltwo{G_{\text{DR}}(w_t,\mcm_t)-\nabla_w J(w_t)+\nabla_w J(w_t)}^2\nonumber\\
&\overset{(i)}{\geq} J(w_t) + \Big(\frac{1}{2}\alpha - L_J\alpha^2\Big) \ltwo{\nabla_w J(w_t)}^2 - \Big(\frac{1}{2}\alpha+ L_J\alpha^2\Big) \ltwo{G_{\text{DR}}(w_t,\mcm_t) - \nabla_w J(w_t)}^2, \label{eq: 39}
\end{flalign}
where $(i)$ follows because
\begin{flalign*}
\langle \nabla_w J(w_t), G_{\text{DR}}(w_t,\mcm_t) - \nabla_w J(w_t) \rangle \geq -\frac{1}{2}\ltwo{\nabla_w J(w_t)}^2 - \frac{1}{2}\ltwo{G_{\text{DR}}(w_t,\mcm_t) - \nabla_w J(w_t)}^2,
\end{flalign*}
and
\begin{flalign*}
\ltwo{G_{\text{DR}}(w_t,\mcm_t)-\nabla_w J(w_t)+\nabla_w J(w_t)}^2\leq 2\ltwo{G_{\text{DR}}(w_t,\mcm_t)-\nabla_w J(w_t)}^2 + 2\ltwo{\nabla_w J(w_t)}^2.
\end{flalign*}
Taking the expectation on both sides of \cref{eq: 39} conditioned on $\mf_t$  and rearranging \cref{eq: 39} yield
\begin{flalign}
\Big(\frac{1}{2}\alpha& - L_J\alpha^2\Big) \mE[\ltwo{\nabla_w J(w_t)}^2|\mf_t] \nonumber\\
&\leq \mE[J(w_{t+1})|\mf_t] - J(w_t) + \Big(\frac{1}{2}\alpha+ L_J\alpha^2\Big) \mE[\ltwo{G_{\text{DR}}(w_t,\mcm_t) - \nabla_w J(w_t)}^2|\mf_t].\label{eq: 40}
\end{flalign}
Then, we upper-bound the term $\mE[\ltwo{G_{\text{DR}}(w_t,\mcm_t) - \nabla_w J(w_t)}^2|\mf_t]$ as follows. By definition, we have
\begin{flalign}
	&\ltwo{G_{\text{DR}}(w_t,\mcm_t) - \nabla_w J(w_t)}^2\nonumber\\
	&= \ltwo{G_{\text{DR}}(w_t,\theta_{q,t}, \theta_{\rho,t}, \theta_{\psi,t}, \theta_{d_q,t},\mcm_t) - \nabla_w J(w_t)}^2\nonumber\\
	&\leq 6\ltwo{G_{\text{DR}}(w_t,\theta_{q,t}, \theta_{\rho,t}, \theta_{\psi,t}, \theta_{d_q,t},\mcm_t) - G_{\text{DR}}(w_t,\theta_{q,t}, \theta_{\rho,t}, \theta_{\psi,t}, \theta^*_{d_q,t},\mcm_t)}^2\nonumber\\
	&\quad + 6\ltwo{G_{\text{DR}}(w_t,\theta_{q,t}, \theta_{\rho,t}, \theta_{\psi,t}, \theta^*_{d_q,t},\mcm_t) - G_{\text{DR}}(w_t,\theta_{q,t}, \theta_{\rho,t}, \theta^*_{\psi,t}, \theta^*_{d_q,t},\mcm_t)}^2\nonumber\\
	&\quad + 6\ltwo{G_{\text{DR}}(w_t,\theta_{q,t}, \theta_{\rho,t}, \theta^*_{\psi,t}, \theta^*_{d_q,t},\mcm_t) - G_{\text{DR}}(w_t,\theta^*_{q,t}, \theta_{\rho,t}, \theta^*_{\psi,t}, \theta^*_{d_q,t},\mcm_t)}^2\nonumber\\
	&\quad + 6\ltwo{G_{\text{DR}}(w_t,\theta^*_{q,t}, \theta_{\rho,t}, \theta^*_{\psi,t}, \theta^*_{d_q,t},\mcm_t) - G_{\text{DR}}(w_t,\theta^*_{q,t}, \theta^*_{\rho,t}, \theta^*_{\psi,t}, \theta^*_{d_q,t},\mcm_t)}^2\nonumber\\
	&\quad + 6\ltwo{G_{\text{DR}}(w_t,\theta^*_{q,t}, \theta^*_{\rho,t}, \theta^*_{\psi,t}, \theta^*_{d_q,t},\mcm_t) - \mE[G^i_{\text{DR}}(w_t,\theta^*_{q,t}, \theta^*_{\rho,t}, \theta^*_{\psi,t}, \theta^*_{d_q,t})]}^2\nonumber\\
	&\quad + 6\ltwo{\mE[G^i_{\text{DR}}(w_t,\theta^*_{q,t}, \theta^*_{\rho,t}, \theta^*_{\psi,t}, \theta^*_{d_q,t})] - \nabla_w J(w_t)}^2\nonumber\\
	&\leq \frac{6}{N}\sum_i\ltwo{G^i_{\text{DR}}(w_t,\theta_{q,t}, \theta_{\rho,t}, \theta_{\psi,t}, \theta_{d_q,t}) - G^i_{\text{DR}}(w_t,\theta_{q,t}, \theta_{\rho,t}, \theta_{\psi,t}, \theta^*_{d_q,t})}^2\nonumber\\
	&\quad + \frac{6}{N}\sum_i\ltwo{G^i_{\text{DR}}(w_t,\theta_{q,t}, \theta_{\rho,t}, \theta_{\psi,t}, \theta^*_{d_q,t}) - G^i_{\text{DR}}(w_t,\theta_{q,t}, \theta_{\rho,t}, \theta^*_{\psi,t}, \theta^*_{d_q,t}}^2\nonumber\\
	&\quad + \frac{6}{N}\sum_i\ltwo{G^i_{\text{DR}}(w_t,\theta_{q,t}, \theta_{\rho,t}, \theta^*_{\psi,t}, \theta^*_{d_q,t}) - G^i_{\text{DR}}(w_t,\theta^*_{q,t}, \theta_{\rho,t}, \theta^*_{\psi,t}, \theta^*_{d_q,t})}^2\nonumber\\
	&\quad + \frac{6}{N}\sum_i\ltwo{G^i_{\text{DR}}(w_t,\theta^*_{q,t}, \theta_{\rho,t}, \theta^*_{\psi,t}, \theta^*_{d_q,t}) - G^i_{\text{DR}}(w_t,\theta^*_{q,t}, \theta^*_{\rho,t}, \theta^*_{\psi,t}, \theta^*_{d_q,t})}^2\nonumber\\
	&\quad + 6\ltwo{G_{\text{DR}}(w_t,\theta^*_{q,t}, \theta^*_{\rho,t}, \theta^*_{\psi,t}, \theta^*_{d_q,t},\mcm_t) - \mE[G^i_{\text{DR}}(w_t,\theta^*_{q,t}, \theta^*_{\rho,t}, \theta^*_{\psi,t}, \theta^*_{d_q,t})]}^2\nonumber\\
	&\quad + 6\ltwo{\mE[G^i_{\text{DR}}(w_t,\theta^*_{q,t}, \theta^*_{\rho,t}, \theta^*_{\psi,t}, \theta^*_{d_q,t})] - \nabla_w J(w_t)}^2.\label{eq: 41}
\end{flalign}
We next bound each term in \cref{eq: 41}. For $\ltwo{G^i_{\text{DR}}(w_t,\theta_{q,t}, \theta_{\rho,t}, \theta_{\psi,t}, \theta_{d_q,t}) - G^i_{\text{DR}}(w_t,\theta_{q,t}, \theta_{\rho,t}, \theta_{\psi,t}, \theta^*_{d_q,t})}$, we proceed as follows:
\begin{flalign}
	&\ltwo{G^i_{\text{DR}}(w_t,\theta_{q,t}, \theta_{\rho,t}, \theta_{\psi,t}, \theta_{d_q,t}) - G^i_{\text{DR}}(w_t,\theta_{q,t}, \theta_{\rho,t}, \theta_{\psi,t}, \theta^*_{d_q,t})}\nonumber\\
	&\leq (1-\gamma)\ltwo{\mE_{\pi_{w_t}}\left[ x(s_{0,i}, a_{0,i}) \right]^\top (\theta_{d_q,t}-\theta^*_{d_q,t})} + \ltwo{\hat{\rho}_{\pi_{w_t}}(s_i,a_i)x(s_i,a_i)(\theta^*_{d_q,t}-\theta_{d_q,t})} \nonumber\\
	&\quad + \gamma\ltwo{\hat{\rho}_{\pi_{w_t}}(s_i,a_i)\mE_{\pi_{w_t}}[x(s^\prime_i,a^\prime_i)]^\top(\theta_{d_q,t}-\theta^*_{d_q,t})}\nonumber\\
	&\leq \left(1+ 2 R_\rho C_\phi  \right)C_x\ltwo{\theta^*_{d_q,t}-\theta_{d_q,t}} \nonumber\\
	&= C_6\ltwo{\theta^*_{d_q,t}-\theta_{d_q,t}},\label{eq: 42}
\end{flalign}
where $C_6=\left(1+ 2 R_\rho C_\phi  \right)$.

For $\ltwo{G^i_{\text{DR}}(w_t,\theta_{q,t}, \theta_{\rho,t}, \theta_{\psi,t}, \theta^*_{d_q,t}) - G^i_{\text{DR}}(w_t,\theta_{q,t}, \theta_{\rho,t}, \theta^*_{\psi,t}, \theta^*_{d_q,t}}$, we proceed as follows
\begin{flalign}
	&\ltwo{G^i_{\text{DR}}(w_t,\theta_{q,t}, \theta_{\rho,t}, \theta_{\psi,t}, \theta^*_{d_q,t}) - G^i_{\text{DR}}(w_t,\theta_{q,t}, \theta_{\rho,t}, \theta^*_{\psi,t}, \theta^*_{d_q,t}}\nonumber\\
	&\leq \ltwo{\hat{\rho}_{\pi_{w_t}}(s_i,a_i)\varphi(s_i,a_i)^\top(\theta_{\psi,t} - \theta^*_{\psi,t})\left(r(s_i,a_i,s^\prime_i) + \hat{Q}_{\pi_{w_t}}(s_i,a_i) - \gamma\mE_{\pi_{w_t}}[\hat{Q}_{\pi_{w_t}}(s^\prime_i,a^\prime_i)]  \right)  }\nonumber\\
	&\leq R_\rho C_\varphi(r_{\max} + 2R_qC_\phi)\ltwo{\theta_{\psi,t} - \theta^*_{\psi,t}} \nonumber\\
	&= C_7\ltwo{\theta_{\psi,t} - \theta^*_{\psi,t}},\label{eq: 43}
\end{flalign}
where $C_7=R_\rho C_\varphi(r_{\max} + 2R_qC_\phi)$.

For $\ltwo{G^i_{\text{DR}}(w_t,\theta_{q,t}, \theta_{\rho,t}, \theta^*_{\psi,t}, \theta^*_{d_q,t}) - G^i_{\text{DR}}(w_t,\theta^*_{q,t}, \theta_{\rho,t}, \theta^*_{\psi,t}, \theta^*_{d_q,t})}$, we proceed as follows
\begin{flalign}
	&\ltwo{G^i_{\text{DR}}(w_t,\theta_{q,t}, \theta_{\rho,t}, \theta^*_{\psi,t}, \theta^*_{d_q,t}) - G^i_{\text{DR}}(w_t,\theta^*_{q,t}, \theta_{\rho,t}, \theta^*_{\psi,t}, \theta^*_{d_q,t})}\nonumber\\
	&\leq (1-\gamma)\ltwo{\mE_{\pi_{w_t}}\left[ \phi(s_{0,i},a_{0,i})^\top(\theta_{q,t} - \theta^*_{q,t})\nabla_w\log\pi_{w_t}(s_{0,i},a_{0,i}) \right]} \nonumber\\
	&\quad + \ltwo{\hat{\rho}_{\pi_{w_t}}(s_i,a_i)\varphi(s_i,a_i)^\top\theta^*_{\psi,t}\phi(s_i,a_i)^\top(\theta_{q,t} - \theta^*_{q,t})}\nonumber\\
	&\quad + \gamma\ltwo{\hat{\rho}_{\pi_{w_t}}(s_i,a_i)\varphi(s_i,a_i)^\top\theta^*_{\psi,t}\mE_{\pi_{w_t}}[\phi(s^\prime_i,a^\prime_i)]^\top(\theta_{q,t} - \theta^*_{q,t})}\nonumber\\
	&\leq \left[ (1-\gamma)C_\phi C_{sc} + (1+\gamma)R_\rho C^2_\phi C_\varphi R_\psi \right] \ltwo{\theta_{q,t} - \theta^*_{q,t}} \nonumber\\
	&= C_8\ltwo{\theta_{q,t} - \theta^*_{q,t}},\label{eq: 44}
\end{flalign}
where $C_8 =  (1-\gamma)C_\phi C_{sc} + (1+\gamma)R_\rho C^2_\phi C_\varphi R_\psi $.

For $\ltwo{G^i_{\text{DR}}(w_t,\theta^*_{q,t}, \theta_{\rho,t}, \theta^*_{\psi,t}, \theta^*_{d_q,t}) - G^i_{\text{DR}}(w_t,\theta^*_{q,t}, \theta^*_{\rho,t}, \theta^*_{\psi,t}, \theta^*_{d_q,t})}$, we proceed as follows
\begin{flalign}
	&\ltwo{G^i_{\text{DR}}(w_t,\theta^*_{q,t}, \theta_{\rho,t}, \theta^*_{\psi,t}, \theta^*_{d_q,t}) - G^i_{\text{DR}}(w_t,\theta^*_{q,t}, \theta^*_{\rho,t}, \theta^*_{\psi,t}, \theta^*_{d_q,t})}\nonumber\\
	&\leq \ltwo{\phi(s_i,a_i)^\top(\theta_{\rho,t} - \theta^*_{\rho,t}) \varphi(s_i,a_i)^\top\theta^*_{\psi,t} \left(r(s_i,a_i,s^\prime_i) - \phi(s_i,a_i)^\top\theta^*_{q,t} + \gamma\mE[\phi(s^\prime_i,a^\prime_i)]^\top\theta^*_{q,t} \right) }\nonumber\\
	&\quad + \ltwo{\phi(s_i,a_i)^\top(\theta_{\rho,t} - \theta^*_{\rho,t})\left(-x(s_i,a_i)^\top\theta^*_{d_q,t} + \gamma\mE_{\pi_{w_t}}[\phi(s^\prime_i,a^\prime_i)^\top\theta^*_{q,t}\nabla_w\log\pi_{w_t}(s^\prime_i,a^\prime_i) + x(s^\prime_i,a^\prime_i)^\top\theta^*_{d_q,t}] \right)}\nonumber\\
	&\leq \left[ C_\phi C_\varphi R_\psi(r_{\max} + (1+\gamma)C_\phi R_q) + C_\phi(C_xR_{d_q} + \gamma (C_\phi R_q C_{sc} + C_xR_{d_q}) ) \right]\ltwo{\theta_{\rho,t} - \theta^*_{\rho,t}} \nonumber\\
	&= C_9\ltwo{\theta_{\rho,t} - \theta^*_{\rho,t}},\label{eq: 45}
\end{flalign}
where $C_9 = C_\phi C_\varphi R_\psi(r_{\max} + (1+\gamma)C_\phi R_q) + C_\phi(C_xR_{d_q} + \gamma (C_\phi R_q C_{sc} + C_xR_{d_q}) )$.

For $\ltwo{G_{\text{DR}}(w_t,\theta^*_{q,t}, \theta^*_{\rho,t}, \theta^*_{\psi,t}, \theta^*_{d_q,t},\mcm_t) - \mE[G^i_{\text{DR}}(w_t,\theta^*_{q,t}, \theta^*_{\rho,t}, \theta^*_{\psi,t}, \theta^*_{d_q,t})]}^2$, note that for all $i$, we have
\begin{flalign}
	&\ltwo{G^i_{\text{DR}}(w_t,\theta^*_{q,t}, \theta^*_{\rho,t}, \theta^*_{\psi,t}, \theta^*_{d_q,t})}\nonumber\\
	& \leq (1-\gamma)\ltwo{\mE_{\pi_w}[\phi_{0,i}^\top\theta^*_{q,t}\nabla_w\log\pi_w(s_{0,i},a_{0,i}) + x_{0,i}^\top\theta^*_{d_q,t}]} + \ltwo{\psi_{i}^\top\theta^*_{\psi,t}\left(r(s_i,a_i,s^\prime_i) - \phi_{i}^\top\theta^*_{q,t} + \gamma \mE_{\pi_{w_t}} [\phi^{\prime\top}_{i}\theta^*_{q,t}] \right)} \nonumber\\
	& \quad + \ltwo{\phi_{i}^\top\theta^*_{\rho,t}(- x_{i}^\top\theta^*_{d_q,t} + \gamma \mE_{\pi_w}[\phi_{i}^\top\theta^*_{q,t}\nabla_w\log\pi_w(s_{t,i},a_{t,i}) + x_{i}^\top\theta^*_{d_q,t}])}\nonumber\\
	&\leq (1-\gamma)(C_\phi R_q C_{sc} + C_xR_{d_q}) + C_\psi R_\psi(r_{\max} + (1+\gamma) C_\phi R_q) + C_\phi R_\rho (C_xR_{d_q} + \gamma C_\phi R_q C_{sc} + \gamma C_x R_{d-q} ).\nonumber
\end{flalign}
Let $C_{10} = (1-\gamma)(C_\phi R_q C_{sc} + C_xR_{d_q}) + C_\psi R_\psi(r_{\max} + (1+\gamma) C_\phi R_q) + C_\phi R_\rho (C_xR_{d_q} + \gamma C_\phi R_q C_{sc} + \gamma C_x R_{d-q} )$. Then following the steps similar to those in \cref{eq: 11} and \cref{eq: 12}, we obtain
\begin{flalign}
	\mE\left[\ltwo{G_{\text{DR}}(w_t,\theta^*_{q,t}, \theta^*_{\rho,t}, \theta^*_{\psi,t}, \theta^*_{d_q,t},\mcm_t) - \mE[G^i_{\text{DR}}(w_t,\theta^*_{q,t}, \theta^*_{\rho,t}, \theta^*_{\psi,t}, \theta^*_{d_q,t})]}^2\Big|\mf_t\right] \leq \frac{4 C^2_{10}}{N}.\label{eq: 46}
\end{flalign}
Finally, consider the term $\ltwo{\mE[G^i_{\text{DR}}(w_t,\theta^*_{q,t}, \theta^*_{\rho,t}, \theta^*_{\psi,t}, \theta^*_{d_q,t})] - \nabla_w J(w_t)}^2$. Following the steps similar to those in proving \Cref{thm1}, we obtain
\begin{flalign}
	&\mE[G^i_{\text{DR}}(w_t,\theta^*_{q,t}, \theta^*_{\rho,t}, \theta^*_{\psi,t}, \theta^*_{d_q,t})] - \nabla_w J(w_t)\nonumber\\
	& = \mE_{\mcd}[(\hat{\rho}_{\pi_{w_t}}(s,a, \theta^*_{\rho,t}) - {\rho}_{\pi_{w_t}}(s,a))(-\hat{d}^q_{\pi_{w_t}}(s,a,\theta^*_{d_q,t}) + {d}^q_{\pi_{w_t}}(s,a))] \nonumber\\
	&\quad + \mE_{\mcd}[(-\hat{d}^\rho_{\pi_{w_t}}(s,a,\theta^*_{\rho,t}, \theta^*_{\psi,t}) + {d}^\rho_{\pi_{w_t}}(s,a))(-\hat{Q}_{\pi_{w_t}}(s,a,\theta^*_{q,t}) + {Q}_{\pi_{w_t}}(s,a))]\nonumber\\
	&\quad + \gamma\mE_{\mcd}[(\hat{\rho}_{\pi_{w_t}}(s,a,\theta^*_{\rho,t}) - {\rho}_{\pi_{w_t}}(s,a)) \mE_{\pi_{w_t}}[\hat{d}^q_{\pi_{w_t}}(s^\prime,a^\prime,\theta^*_{d_q,t}) - {d}^q_{\pi_{w_t}}(s^\prime,a^\prime)] ] \nonumber\\
	&\quad + \gamma\mE_{\mcd}[(\hat{\rho}_{\pi_{w_t}}(s,a,\theta^*_{\rho,t}) - {\rho}_{\pi_{w_t}}(s,a)) \mE_{\pi_{w_t}}[(\hat{Q}_{\pi_{w_t}}(s^\prime,a^\prime,\theta^*_{q,t}) - {Q}_{\pi_{w_t}}(s^\prime, a^\prime))\nabla_w\log\pi_{w_t}(s^\prime,a^\prime)] ] \nonumber\\
	&\quad + \gamma\mE_{\mcd}[(\hat{d}^\rho_{\pi_{w_t}}(s,a,\theta^*_{\rho,t},\theta^*_{\psi,t}) - {d}^\rho_{\pi_{w_t}}(s,a))\mE_{\pi_{w_t}}[\hat{Q}_{\pi_{w_t}}(s^\prime,a^\prime,\theta^*_{q,t}) - {Q}_{\pi_{w_t}}(s^\prime,a^\prime)]]\nonumber\\
	&\leq \sqrt{ \mE_{\mcd}[(\hat{\rho}_{\pi_{w_t}}(s,a, \theta^*_{\rho,t}) - {\rho}_{\pi_{w_t}}(s,a))^2]} \sqrt{\mE_\mcd[(\hat{d}^q_{\pi_{w_t}}(s,a,\theta^*_{d_q,t}) - {d}^q_{\pi_{w_t}}(s,a))^2]}\nonumber\\
	&\quad + \sqrt{\mE_{\mcd}[(\hat{d}^\rho_{\pi_{w_t}}(s,a,\theta^*_{\rho,t}, \theta^*_{\psi,t}) - {d}^\rho_{\pi_{w_t}}(s,a))^2]}\sqrt{\mE_\mcd[(\hat{Q}_{\pi_{w_t}}(s,a,\theta^*_{q,t}) - {Q}_{\pi_{w_t}}(s,a))^2]}\nonumber\\
	&\quad + \sqrt{ \mE_{\mcd}[(\hat{\rho}_{\pi_{w_t}}(s,a, \theta^*_{q,t}) - {\rho}_{\pi_{w_t}}(s,a))^2]}\sqrt{\mE_{\mcd_d\cdot\pi_{w_t}}[(\hat{d}^q_{\pi_{w_t}}(s^\prime,a^\prime,\theta^*_{d_q,t}) - {d}^q_{\pi_{w_t}}(s^\prime,a^\prime))^2]}\nonumber\\
	&\quad + C_{sc}\sqrt{ \mE_{\mcd}[(\hat{\rho}_{\pi_{w_t}}(s,a, \theta^*_{q,t}) - {\rho}_{\pi_{w_t}}(s,a))^2]}\sqrt{\mE_{\mcd_d\cdot\pi_{w_t}}[(\hat{Q}_{\pi_{w_t}}(s^\prime,a^\prime,\theta^*_{q,t}) - {Q}_{\pi_{w_t}}(s^\prime,a^\prime))^2]}\nonumber\\
	&\quad + \sqrt{\mE_{\mcd}[(\hat{d}^\rho_{\pi_{w_t}}(s,a,\theta^*_{\rho,t}, \theta^*_{\psi,t}) - {d}^\rho_{\pi_{w_t}}(s,a))^2]}\sqrt{\mE_{\mcd_d\cdot\pi_{w_t}}[(\hat{Q}_{\pi_{w_t}}(s^\prime,a^\prime,\theta^*_{q,t}) - {Q}_{\pi_{w_t}}(s^\prime,a^\prime))^2]}\nonumber\\
	&\leq 2\epsilon_\rho \epsilon_{d_q} + 2\epsilon_{d_\rho}\epsilon_q + C_{sc}\epsilon_\rho \epsilon_q.\label{eq: 47}
\end{flalign}
Recall that we define
\begin{flalign*}
	\epsilon_\rho &= \max_w\sqrt{ \mE_{\mcd}[(\hat{\rho}_{\pi_{w}}(s,a, \theta^*_{\rho,w}) - {\rho}_{\pi_{w}}(s,a))^2]}\nonumber\\
	\epsilon_{d_\rho} &=  \max_w\sqrt{ \mE_{\mcd}[(\hat{d}^\rho_{\pi_{w}}(s,a,\theta^*_{\rho,w}, \theta^*_{\psi,w}) - {d}^\rho_{\pi_{w}}(s,a))^2]}\nonumber\\
	\epsilon_q & = \max\left\{ \max_w\sqrt{\mE_{\mcd}[(\hat{Q}_{\pi_{w}}(s,a,\theta^*_{q,w}) - {Q}_{\pi_{w}}(s,a))^2]}, \max_w\sqrt{\mE_{\mcd_d\cdot\pi_{w}}[(\hat{Q}_{\pi_{w}}(s^\prime,a^\prime,\theta^*_{q,w}) - {Q}_{\pi_{w}}(s^\prime,a^\prime))^2]} \right\}\nonumber\\
	\epsilon_{d_q} & = \max\left\{ \max_w\sqrt{\mE_{\mcd}[(\hat{d}^q_{\pi_{w}}(s,a,\theta^*_{d_q,w}) - {d}^q_{\pi_{w}}(s,a))^2]}, \max_w\sqrt{\mE_{\mcd_d\cdot\pi_{w}}[(\hat{d}^q_{\pi_{w}}(s^\prime,a^\prime,\theta^*_{d_q,w}) - {d}^q_{\pi_{w}}(s^\prime,a^\prime))^2]} \right\},
\end{flalign*}
where 
\begin{flalign*}
	\hat{\rho}_{\pi_{w}}(s,a, \theta^*_{q,w})&=\phi(s,a)^\top\theta^*_{\rho,w},\nonumber\\
	\hat{d}^\rho_{\pi_{w}}(s,a,\theta^*_{\rho,w}, \theta^*_{\psi,w}) &= \phi(s,a)^\top\theta^*_{\rho,w} \varphi(s,a)^\top\theta^*_{\psi,w},\nonumber\\
	\hat{Q}_{\pi_{w}}(s,a,\theta^*_{q,w})&=\phi(s,a)^\top\theta^*_{q,w},\nonumber\\
	\hat{d}^q_{\pi_{w}}(s^\prime,a^\prime,\theta^*_{d_q,w})&=x(s,a)^\top\theta^*_{d_q,w}.
\end{flalign*}
Then, \cref{eq: 47} implies that 
\begin{flalign}
	\ltwo{\mE[G^i_{\text{DR}}(w_t,\theta^*_{q,t}, \theta^*_{\rho,t}, \theta^*_{\psi,t}, \theta^*_{d_q,t})] - \nabla_w J(w_t)}^2\leq 6\epsilon^2_\rho \epsilon^2_{d_q} + 6\epsilon^2_{d_\rho}\epsilon^2_q + 3C_{sc}\epsilon^2_\rho \epsilon^2_q.\label{eq: 48}
\end{flalign}
Substituting \cref{eq: 42}, \cref{eq: 43}, \cref{eq: 44}, \cref{eq: 45}, \cref{eq: 46} and \cref{eq: 48} into \cref{eq: 41} yields
\begin{flalign}
	&\mE\left[\ltwo{G_{\text{DR}}(w_t,\mcm_t) - \nabla_w J(w_t)}^2|\mf_t\right]\nonumber\\
	&\leq 6C^2_6\ltwo{\theta^*_{d_q,t}-\theta_{d_q,t}}^2 + 6C^2_7\ltwo{\theta_{\psi,t} - \theta^*_{\psi,t}}^2 + 6C^2_8\ltwo{\theta_{q,t} - \theta^*_{q,t}} + 6C^2_9\ltwo{\theta_{\rho,t} - \theta^*_{\rho,t}}^2 + \frac{24 C^2_{10}}{N} \nonumber\\
	&\quad + 36\epsilon^2_\rho \epsilon^2_{d_q} + 36\epsilon^2_{d_\rho}\epsilon^2_q + 18C_{sc}\epsilon^2_\rho \epsilon^2_q\nonumber\\
	&\leq C_{11} \left( \ltwo{\theta^*_{d_q,t}-\theta_{d_q,t}}^2 + \ltwo{\theta_{\psi,t} - \theta^*_{\psi,t}}^2 + \ltwo{\theta_{q,t} - \theta^*_{q,t}} + \ltwo{\theta_{\rho,t} - \theta^*_{\rho,t}}^2 \right) + \frac{24 C^2_{10}}{N}\nonumber\\
	&\quad + C_{12}(\epsilon^2_\rho \epsilon^2_{d_q} + \epsilon^2_{d_\rho}\epsilon^2_q + \epsilon^2_\rho \epsilon^2_q)\nonumber\\
	&\leq C_{11} \Delta_t + \frac{24 C^2_{10}}{N} + C_{12}(\epsilon^2_\rho \epsilon^2_{d_q} + \epsilon^2_{d_\rho}\epsilon^2_q + \epsilon^2_\rho \epsilon^2_q),\label{eq: 49}
\end{flalign}
where $C_{11} = 6\max\{C^2_6, C^2_7, C^2_8, C^2_9\}$, and $C_{12}=\max\{36, 18C_{sc}\}$. Substituting \cref{eq: 49} into \cref{eq: 40} yields
\begin{flalign}
\Big(\frac{1}{2}\alpha& - L_J\alpha^2\Big)\mE[\ltwo{\nabla_w J(w_t)}^2|\mf_t] \nonumber\\
&\leq \mE[J(w_{t+1})|\mf_t] - J(w_t) + \Big(\frac{1}{2}\alpha+ L_J\alpha^2\Big) C_{11}\Delta_t + \Big(\frac{1}{2}\alpha+ L_J\alpha^2\Big)\frac{24 C^2_{10}}{N} \nonumber\\
&\quad + \Big(\frac{1}{2}\alpha+ L_J\alpha^2\Big)C_{12}(\epsilon^2_\rho \epsilon^2_{d_q} + \epsilon^2_{d_\rho}\epsilon^2_q + \epsilon^2_\rho \epsilon^2_q).\label{eq: 50}
\end{flalign}
Taking the expectation on both sides of \cref{eq: 50} and taking the summation over $t=0\cdots T-1$ yield
\begin{flalign}
	\Big(\frac{1}{2}\alpha& - L_J\alpha^2\Big)\sum_{t=0}^{T-1}\mE[\ltwo{\nabla_w J(w_t)}^2] \nonumber\\
	&\leq \mE[J(w_{T})] - J(w_0) + \Big(\frac{1}{2}\alpha+ L_J\alpha^2\Big) C_{11} \sum_{t=0}^{T-1}\mE[\Delta_t] + \Big(\frac{1}{2}\alpha+ L_J\alpha^2\Big)\frac{24 C^2_{10} T}{N} \nonumber\\
	&\quad + \Big(\frac{1}{2}\alpha+ L_J\alpha^2\Big)C_{12}(\epsilon^2_\rho \epsilon^2_{d_q} + \epsilon^2_{d_\rho}\epsilon^2_q + \epsilon^2_\rho \epsilon^2_q)T.\label{eq: 51}
\end{flalign}
We then proceed to bound the term $\sum_{t=0}^{T-1}\mE[\Delta_t]$. \Cref{lemma7} implies that
\begin{flalign}
\mE&[\Delta_{t+1}|\mf_t] \nonumber\\
&\leq \left( 1 - \frac{1}{2}\varrho\right)\Delta_t + \frac{2L^2_{fix}}{\varrho}\mE[\ltwo{w_{t+1}-w_t}^2|\mf_t] + \frac{C_5}{(1-\varrho)N}\nonumber\\
&= \left( 1 - \frac{1}{2}\varrho\right)\Delta_t + \frac{2L^2_{fix}}{\varrho}\alpha^2\mE[\ltwo{G_{\text{DR}}(w_t,\mcm_t)}^2|\mf_t] + \frac{C_5}{(1-\varrho)N}\nonumber\\
&\leq \left( 1 - \frac{1}{2}\varrho\right) \Delta_t + \frac{4L^2_{fix}}{\varrho}\alpha^2\mE[\ltwo{G_{\text{DR}}(w_t,\mcm_t) - \nabla_w J(w_t)}^2|\mf_t] + \frac{4L^2_{fix}}{\varrho}\alpha^2\mE[\ltwo{\nabla_w J(w_t)}^2|\mf_t] + \frac{C_5}{(1-\varrho)N}\nonumber\\
&\overset{(i)}{\leq} \left( 1 - \frac{1}{2}\varrho + \frac{4C_{11}L^2_{fix}\alpha^2}{\varrho}\right) \Delta_t + \frac{4L^2_{fix}}{\varrho}\alpha^2\mE[\ltwo{\nabla_w J(w_t)}^2|\mf_t] + \left[\frac{96L^2_{fix} C^2_{10}\alpha^2}{\varrho} + \frac{C_5}{1-\varrho}\right]\frac{1}{N} \nonumber\\
&\quad + \frac{4C_{12}L^2_{fix}\alpha^2}{\varrho}(\epsilon^2_\rho \epsilon^2_{d_q} + \epsilon^2_{d_\rho}\epsilon^2_q + \epsilon^2_\rho \epsilon^2_q)\nonumber\\
&\overset{(ii)}{\leq} \left( 1 - \frac{1}{4}\varrho \right) \Delta_t + C_{12}\alpha^2\mE[\ltwo{\nabla_w J(w_t)}^2|\mf_t] + \frac{C_{13}}{N} + C_{14}(\epsilon^2_\rho \epsilon^2_{d_q} + \epsilon^2_{d_\rho}\epsilon^2_q + \epsilon^2_\rho \epsilon^2_q),\label{eq: 52}
\end{flalign}
where $L^2_{fix} = (L^2_\kappa + L^2_\xi + L^2_\zeta)$, $(i)$ follows from \cref{eq: 49}, $(ii)$ follows from the fact that $C_{12} = \frac{4L^2_{fix}}{\varrho}$, $C_{13} = \frac{96L^2_{fix} C^2_{10}\alpha^2}{\varrho} + \frac{C_5}{1-\varrho}$ and $C_{14} = \frac{4C_{12}L^2_{fix}\alpha^2}{\varrho}$, and for small enough $\alpha$, we have $\alpha\leq \frac{\varrho}{4\sqrt{C_{11}}L_{fix}}$. Taking the expectation on both sides of \cref{eq: 52} and applying it iteratively yield
\begin{flalign}
	\mE[\Delta_{t}] &\leq  \left( 1 - \frac{1}{4}\varrho \right)^{t}\Delta_0 + C_{12}\alpha^2\sum_{i=0}^{t-1}\left( 1 - \frac{1}{4}\varrho \right)^{t-1-i}\mE[\ltwo{\nabla_w J(w_i)}^2] + \frac{C_{13}}{N}\sum_{i=0}^{t-1}\left( 1 - \frac{1}{4}\varrho \right)^{t-1-i}\nonumber\\
	&\quad + C_{14}(\epsilon^2_\rho \epsilon^2_{d_q} + \epsilon^2_{d_\rho}\epsilon^2_q + \epsilon^2_\rho \epsilon^2_q)\sum_{i=0}^{t-1}\left( 1 - \frac{1}{4}\varrho \right)^{t-1-i}.\label{eq: 53}
\end{flalign}
Taking the summation on \cref{eq: 53} over $t=0,\cdots,T-1$ yields
\begin{flalign}
	\sum_{t=0}^{T-1}\mE[\Delta_t]&\leq \Delta_0\sum_{t=0}^{T-1}\left( 1 - \frac{1}{4}\varrho \right)^{t} + C_{12}\alpha^2\sum_{t=0}^{T-1}\sum_{i=0}^{t-1}\left( 1 - \frac{1}{4}\varrho \right)^{t-1-i}\mE[\ltwo{\nabla_w J(w_i)}^2] + \frac{C_{13}}{N}\sum_{t=0}^{T-1}\sum_{i=0}^{t-1}\left( 1 - \frac{1}{4}\varrho \right)^{t-1-i}\nonumber\\
	&\quad + C_{14}(\epsilon^2_\rho \epsilon^2_{d_q} + \epsilon^2_{d_\rho}\epsilon^2_q + \epsilon^2_\rho \epsilon^2_q)\sum_{t=0}^{T-1}\sum_{i=0}^{t-1}\left( 1 - \frac{1}{4}\varrho \right)^{t-1-i}\nonumber\\
	&\leq \frac{4}{\varrho}\Delta_0 + \frac{4C_{12}\alpha^2}{\varrho}\sum_{t=0}^{T-1}\mE[\ltwo{\nabla_w J(w_t)}^2] + \frac{4C_{13}T}{\varrho N} + \frac{4C_{14}T}{\varrho}(\epsilon^2_\rho \epsilon^2_{d_q} + \epsilon^2_{d_\rho}\epsilon^2_q + \epsilon^2_\rho \epsilon^2_q).\label{eq: 54}
\end{flalign}
Substituting \cref{eq: 54} into \cref{eq: 51} yields
\begin{flalign}
\Big(\frac{1}{2}\alpha& - L_J\alpha^2\Big)\sum_{t=0}^{T-1}\mE[\ltwo{\nabla_w J(w_t)}^2] \nonumber\\
&\leq \mE[J(w_{T})] - J(w_0) + \Big(\frac{1}{2}\alpha+ L_J\alpha^2\Big) \frac{4C_{11}}{\varrho}\Delta_0 + \frac{4C_{12}\alpha^2}{\varrho}\Big(\frac{1}{2}\alpha+ L_J\alpha^2\Big)\sum_{t=0}^{T-1}\mE[\ltwo{\nabla_w J(w_i)}^2]  \nonumber\\
&\quad + \Big(\frac{1}{2}\alpha+ L_J\alpha^2\Big)\frac{24 \varrho C^2_{10} T + 4C_{13}T}{\varrho N} + \Big(\frac{1}{2}\alpha+ L_J\alpha^2\Big)\left(C_{12} + \frac{4C_{14}T}{\varrho}\right)(\epsilon^2_\rho \epsilon^2_{d_q} + \epsilon^2_{d_\rho}\epsilon^2_q + \epsilon^2_\rho \epsilon^2_q)T,
\end{flalign}
which implies
\begin{flalign}
\Big(\frac{1}{2}\alpha& - L_J\alpha^2 - \frac{4C_{12}\alpha^3}{\varrho}\Big(\frac{1}{2}+ L_J\alpha\Big)\Big)\sum_{t=0}^{T-1}\mE[\ltwo{\nabla_w J(w_t)}^2] \nonumber\\
&\leq \mE[J(w_{T})] - J(w_0) + \Big(\frac{1}{2}\alpha+ L_J\alpha^2\Big) \frac{4C_{11}}{\varrho}\Delta_0 + \Big(\frac{1}{2}\alpha+ L_J\alpha^2\Big)\frac{24 \varrho C^2_{10} T + 4C_{13}T}{\varrho N} \nonumber\\
&\quad + \Big(\frac{1}{2}\alpha+ L_J\alpha^2\Big)\left(C_{12} + \frac{4C_{14}T}{\varrho}\right)(\epsilon^2_\rho \epsilon^2_{d_q} + \epsilon^2_{d_\rho}\epsilon^2_q + \epsilon^2_\rho \epsilon^2_q)T.\label{eq: 55}
\end{flalign}
For small enough $\alpha$, we can guarantee that $\frac{1}{2}\alpha - L_J\alpha^2 - \frac{4C_{12}\alpha^3}{\varrho}\Big(\frac{1}{2}+ L_J\alpha\Big)>0$. Dividing both sides of \cref{eq: 55} by $T\Big(\frac{1}{2}\alpha - L_J\alpha^2 - \frac{4C_{12}\alpha^3}{\varrho}\Big(\frac{1}{2}+ L_J\alpha\Big)\Big)$, we obtain
\begin{flalign*}
	\mE[\ltwo{\nabla_w J(w_{\hat{T}})}^2]=\frac{1}{T}\sum_{t=0}^{T-1}\mE[\ltwo{\nabla_w J(w_t)}^2]\leq \Theta\left(\frac{1}{T}\right) + \Theta\left(\frac{1}{N}\right) + \Theta(\epsilon^2_\rho \epsilon^2_{d_q} + \epsilon^2_{d_\rho}\epsilon^2_q + \epsilon^2_\rho \epsilon^2_q).
\end{flalign*}
Note that $	\mE[\ltwo{\nabla_w J(w_{\hat{T}})}] \leq\sqrt{\mE[\ltwo{\nabla_w J(w_{\hat{T}})}^2]}$ and $\sqrt{\sum_i a_i}\leq \sum_i \sqrt{a_i}$ for $a_i \geq 0$. We obtain
\begin{flalign*}
	\mE[\ltwo{\nabla_w J(w_{\hat{T}})}] \leq  \Theta\left(\frac{1}{\sqrt{T}}\right) + \Theta\left(\frac{1}{\sqrt{N}}\right) + \Theta(\epsilon_\rho \epsilon_{d_q} + \epsilon_{d_\rho}\epsilon_q + \epsilon_\rho \epsilon_q),
\end{flalign*}
which completes the proof.

\section{Proof of \Cref{thm3}}
The following proposition can be directly obtained from Corollary 6.10. in \cite{agarwal2019optimality}.
\begin{proposition}\label{prop1}
	Consider the DR-Off-PAC update given in \Cref{algorithm_drpg}. Suppose \Cref{ass: fisher} holds. Let $w^*_t = F^{-1}(w_t)\nabla_w J(w_t)$ be the exact NPG update direction at $w_t$. Then, we have
	\begin{flalign}
		&J(\pi^*) - J(w_{\hat{T}})\nonumber\\
		&\leq \frac{{\epsilon_{approx}}}{1-\gamma} + \frac{1}{\alpha T}\mE_{\nu_{\pi^*}}[\text{KL}(\pi^*(\cdot|s)||\pi_{w_0}(\cdot|s))] + \frac{L_{sc}\alpha}{2T}\sum_{t=0}^{T-1}\ltwo{G_{\text{DR}}(w_t,\mcm_t)}^2 + \frac{C_{sc}}{T}\sum_{t=0}^{T-1}\ltwo{G_{\text{DR}}(w_t,\mcm_t) - w^*_t}.\label{eq: 56}
	\end{flalign}
\end{proposition}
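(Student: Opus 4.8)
The plan is to follow the now-standard natural policy gradient (NPG) analysis built on the performance difference lemma together with a KL-divergence potential function, adapting the exact-NPG argument behind Corollary 6.10 of \cite{agarwal2019optimality} to the \emph{inexact} ascent direction $\hat{w}_t := G_{\text{DR}}(w_t,\mcm_t)$ that \Cref{algorithm_drpg} actually uses. First I would invoke the performance difference lemma to write, for each iterate,
\[
J(\pi^*) - J(w_t) = \frac{1}{1-\gamma}\mE_{(s,a)\sim\nu_{\pi^*}}\big[A_{\pi_{w_t}}(s,a)\big],
\]
which converts the global optimality gap into an expectation of the advantage function under the optimal visitation distribution $\nu_{\pi^*}$.

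Next I would replace the advantage by its compatible linear surrogate $(1-\gamma)\langle\nabla_w\log\pi_{w_t}(a|s),w^*_t\rangle$, where $w^*_t = F(w_t)^{-1}\nabla_w J(w_t) = \chi^*_{\pi_{w_t}}$. By Cauchy--Schwarz (or Jensen) and the very definition of $\epsilon_{compat}$, the residual is controlled uniformly over iterates, giving
\[
\mE_{\nu_{\pi^*}}\big[A_{\pi_{w_t}}(s,a)\big] \le (1-\gamma)\,\mE_{\nu_{\pi^*}}\big[\langle\nabla_w\log\pi_{w_t}(a|s), w^*_t\rangle\big] + \epsilon_{compat},
\]
which after dividing by $1-\gamma$ is the source of the $\tfrac{\epsilon_{approx}}{1-\gamma}$ term in the statement (with $\epsilon_{approx}=\epsilon_{compat}$).

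Then I would exploit that \Cref{ass4}(2) makes $\log\pi_w$ an $L_{sc}$-smooth function of $w$, so the step $w_{t+1}=w_t+\alpha\hat{w}_t$ yields the three-point inequality
\[
\log\frac{\pi_{w_{t+1}}(a|s)}{\pi_{w_t}(a|s)} \ge \alpha\langle\nabla_w\log\pi_{w_t}(a|s),\hat{w}_t\rangle - \frac{L_{sc}\alpha^2}{2}\ltwo{\hat{w}_t}^2.
\]
Taking $\mE_{a\sim\pi^*(\cdot|s)}$ turns the left-hand side into $\mathrm{KL}(\pi^*(\cdot|s)\|\pi_{w_t}(\cdot|s)) - \mathrm{KL}(\pi^*(\cdot|s)\|\pi_{w_{t+1}}(\cdot|s))$. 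Inserting the splitting $\langle\nabla_w\log\pi_{w_t},w^*_t\rangle = \langle\nabla_w\log\pi_{w_t},\hat{w}_t\rangle - \langle\nabla_w\log\pi_{w_t},\hat{w}_t-w^*_t\rangle$ and bounding the last inner product by $C_{sc}\ltwo{\hat{w}_t-w^*_t}$ via Cauchy--Schwarz and \Cref{ass4}(1), I obtain a per-step upper bound on $J(\pi^*)-J(w_t)$. Telescoping the KL terms over $t=0,\dots,T-1$, dividing by $T$, and using that $\hat{T}$ is uniform on $\{0,\dots,T-1\}$ reproduces exactly \cref{eq: 56}.

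The main obstacle I anticipate is not any single inequality but the bookkeeping needed to keep the error terms in the precise form claimed. In particular, the compatible-approximation step must deploy the \emph{uniform} (over $w$) quantity $\epsilon_{compat}$ rather than an iterate-dependent surrogate, and the inexact-direction contribution must be isolated cleanly as $C_{sc}\ltwo{\hat{w}_t-w^*_t}$ and the smoothness remainder as $\tfrac{L_{sc}\alpha}{2}\ltwo{\hat{w}_t}^2$, so that the convergence analysis of \Cref{thm2} can later bound these two pieces separately. Because $\hat{w}_t$ is the doubly robust gradient estimate rather than the exact NPG direction, the residual $\hat{w}_t-w^*_t$ must be shown to absorb both the critics' estimation error and the doubly robust bias; tracking this decomposition faithfully is exactly what later lets the doubly robust optimality gap of \Cref{thm3} emerge from this proposition.
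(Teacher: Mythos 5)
Your argument is correct, but note that the paper never actually proves this proposition: it is obtained in one line by citing Corollary 6.10 of \cite{agarwal2019optimality}, and what you have written out is essentially the proof of that cited result (the NPG ``regret lemma'') specialized to the present setting --- performance difference lemma under $\nu_{\pi^*}$, replacement of the advantage by its compatible linear surrogate $(1-\gamma)\langle \nabla_w\log\pi_{w_t}(a|s), w^*_t\rangle$ with residual bounded by the transfer error $\epsilon_{compat}$, the descent-lemma lower bound on $\log\pi_{w_{t+1}}-\log\pi_{w_t}$ from the $L_{sc}$-smoothness in \Cref{ass4}(2), KL telescoping, and Cauchy--Schwarz with $\ltwo{\nabla_w\log\pi_w(a|s)}\le C_{sc}$ from \Cref{ass4}(1) to isolate $C_{sc}\ltwo{G_{\text{DR}}(w_t,\mcm_t)-w^*_t}$. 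So mathematically you take the same route as the source the paper points to; what your version buys is a verification that the imported statement really holds under this paper's specific conventions --- in particular that the paper's $\epsilon_{compat}$, defined with the $(1-\gamma)$ scaling of $\chi^{*\top}_{\pi_w}\nabla_w\log\pi_w$ and with expectation under $\nu_{\pi^*}$, is exactly the quantity the Cauchy--Schwarz step needs, uniformly over iterates --- which the paper leaves implicit. One bookkeeping item to fix: with the paper's normalization $J(\pi)=(1-\gamma)\mE[\sum_t\gamma^t r_t]$, the performance difference lemma reads $J(\pi^*)-J(w_t)=\mE_{\nu_{\pi^*}}[A_{\pi_{w_t}}(s,a)]$ \emph{without} the $1/(1-\gamma)$ prefactor you wrote; your unnormalized convention is what makes the constants land exactly on \cref{eq: 56}, while the paper's convention yields the slightly stronger bound with $\epsilon_{compat}$ in place of $\epsilon_{compat}/(1-\gamma)$ and a factor $(1-\gamma)$ multiplying the last three terms, which implies \cref{eq: 56} since $1-\gamma<1$. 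Either way the claimed inequality holds; just state which convention you are using so the $1/(1-\gamma)$ factors are accounted for consistently.
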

\Cref{prop1} indicates that, as long as the DR-Off-PAC update is close enough to the exact NPG update, then DP-Off-PAC is guaranteed to converge to a neighbourhood of the global optimal $J(\pi^*)$ with a $\left(  \frac{{\epsilon_{approx}}}{1-\gamma}\right)$-level gap. We then proceed to prove \Cref{thm3}.
\begin{proof}
	We start with \cref{eq: 56} as follows:
	\begin{flalign}
		&J(\pi^*) - \mE[J(w_{\hat{T}})]\nonumber\\
		&\leq \frac{{\epsilon_{approx}}}{1-\gamma} + \frac{1}{\alpha T}\mE_{\nu_{\pi^*}}[\text{KL}(\pi^*(\cdot|s)||\pi_{w_0}(\cdot|s))] + \frac{L_{sc}\alpha}{2T}\sum_{t=0}^{T-1}\mE[\ltwo{G_{\text{DR}}(w_t,\mcm_t)}^2] + \frac{C_{sc}}{T}\sum_{t=0}^{T-1}\mE\left[\ltwo{G_{\text{DR}}(w_t,\mcm_t) - w^*_t}\right]\nonumber\\
		&\leq \frac{{\epsilon_{approx}}}{1-\gamma} + \frac{1}{\alpha T}\mE_{\nu_{\pi^*}}[\text{KL}(\pi^*(\cdot|s)||\pi_{w_0}(\cdot|s))] + \frac{L_{sc}\alpha}{T}\sum_{t=0}^{T-1}\mE[\ltwo{G_{\text{DR}}(w_t,\mcm_t) - \nabla_w J(w_t)}^2] + \frac{L_{sc}\alpha}{T}\sum_{t=0}^{T-1}\mE\left[\ltwo{\nabla_w J(w_t)}^2\right] \nonumber\\
		&\quad + \frac{C_{sc}}{T}\sum_{t=0}^{T-1}\mE[\ltwo{G_{\text{DR}}(w_t,\mcm_t) - \nabla_w J(w_t)}] + \frac{C_{sc}}{T}\sum_{t=0}^{T-1}\mE[\ltwo{\nabla_w J(w_t) - w^*_t}].\label{eq: 57}
	\end{flalign}
	We then bound the error terms on the right-hand side of \cref{eq: 57} separately.
	
	First consider the term $\frac{1}{T}\sum_{t=0}^{T-1}\mE\left[\ltwo{\nabla_w J(w_t)}^2\right]$. We have
	\begin{flalign}
		\frac{1}{T}\sum_{t=0}^{T-1}&\mE\left[\ltwo{\nabla_w J(w_t)}^2\right] = \mE\left[\ltwo{\nabla_w J(w_{\hat{T}})}^2\right]\nonumber\\
		&\overset{(i)}{\leq}  \Theta\left(\frac{1}{T}\right) + \Theta\left(\frac{1}{N}\right) + \Theta(\epsilon^2_\rho \epsilon^2_{d_q} + \epsilon^2_{d_\rho}\epsilon^2_q + \epsilon^2_\rho \epsilon^2_q),\label{eq: 64}
	\end{flalign}
	where $(i)$ follows from \Cref{thm2}.
	
	Then we consider the term $\frac{1}{T}\sum_{t=0}^{T-1}\ltwo{\nabla_w J(w_t) - w^*_t}$. We proceed as follows.
	\begin{flalign}
		\frac{1}{T}\sum_{t=0}^{T-1}&\mE\left[\ltwo{\nabla_w J(w_t) - w^*_t}\right]\nonumber\\
		&=\frac{1}{T}\sum_{t=0}^{T-1}\mE\left[\ltwo{(I- F^{-1}(w_t))\nabla_w J(w_t)}\right]\leq \left( 1+\frac{1}{\lambda_F} \right)\frac{1}{T}\sum_{t=0}^{T-1}\mE\left[\ltwo{\nabla_w J(w_t)}\right]\nonumber\\
		&=  \left( 1+\frac{1}{\lambda_F} \right)\mE\left[\ltwo{\nabla_w J(w_{\hat{T}})}\right] = \left( 1+\frac{1}{\lambda_F} \right)\sqrt{\mE\left[\ltwo{\nabla_w J(w_{\hat{T}})}^2\right]}\nonumber\\
		&\overset{(i)}{\leq} \Theta\left(\frac{1}{\sqrt{T}}\right) + \Theta\left(\frac{1}{\sqrt{N}}\right) + \Theta(\epsilon_\rho \epsilon_{d_q} + \epsilon_{d_\rho}\epsilon_q + \epsilon_\rho \epsilon_q),\label{eq: 58}
	\end{flalign}
	where $(i)$ follows from \cref{eq: 64} and the fact that $\sqrt{\sum_i a_i}\leq\sum_i\sqrt{a_i}$ for $a_i\geq 0$. 
	
	We then consider the term $\frac{1}{T}\sum_{t=0}^{T-1}\mE[\ltwo{G_{\text{DR}}(w_t,\mcm_t) - \nabla_w J(w_t)}] $. Recalling \cref{eq: 49}, we have
	\begin{flalign}
		\mE\left[\ltwo{G_{\text{DR}}(w_t,\mcm_t) - \nabla_w J(w_t)}^2|\mf_t\right]\leq C_{11} \Delta_t + \frac{24 C^2_{10}}{N} + C_{12}(\epsilon^2_\rho \epsilon^2_{d_q} + \epsilon^2_{d_\rho}\epsilon^2_q + \epsilon^2_\rho \epsilon^2_q),\nonumber
	\end{flalign}
	which implies
	\begin{flalign}
		\mE[\ltwo{G_{\text{DR}}(w_t,\mcm_t) - \nabla_w J(w_t)}|\mf_t]&\leq \sqrt{\mE\left[\ltwo{G_{\text{DR}}(w_t,\mcm_t) - \nabla_w J(w_t)}^2|\mf_t\right]}\nonumber\\
		&\leq \sqrt{C_{11}}\sqrt{\Delta_t} + \frac{5 C_{10}}{\sqrt{N}} + \sqrt{C_{12}}(\epsilon_\rho \epsilon_{d_q} + \epsilon_{d_\rho}\epsilon_q + \epsilon_\rho \epsilon_q).\label{eq: 59}
	\end{flalign}
	Taking the expectation on both sides of \cref{eq: 59} and taking the summation over $t=1\cdots T-1$ yield
	\begin{flalign}
		\frac{1}{T}\sum_{t=0}^{T-1}\mE[\ltwo{G_{\text{DR}}(w_t,\mcm_t) - \nabla_w J(w_t)}]\leq \frac{\sqrt{C_{11}}}{T}\sum_{t=0}^{T-1}\mE\left[\sqrt{\Delta_t}\right] + \frac{5 C_{10}}{\sqrt{N}} + \sqrt{C_{12}}(\epsilon_\rho \epsilon_{d_q} + \epsilon_{d_\rho}\epsilon_q + \epsilon_\rho \epsilon_q).\label{eq: 60}
	\end{flalign}
	We then bound the term $\sum_{t=0}^{T-1}\mE\left[\sqrt{\Delta_t}\right]$. Note that 
	\begin{flalign}
		\frac{1}{T}\sum_{t=0}^{T-1}\mE[\sqrt{\Delta_t}] = \mE[\sqrt{\Delta_{\hat{T}}}]\leq \sqrt{\mE[\Delta_{\hat{T}}]} = \sqrt{\frac{1}{T}\sum_{t=0}^{T-1}\mE[\Delta_t]}.\label{eq: 61}
	\end{flalign}
	Recalling \cref{eq: 54}, we have
	\begin{flalign}
		\sum_{t=0}^{T-1}\mE[\Delta_t]\leq \frac{4}{\varrho}\Delta_0 + \frac{4C_{12}\alpha^2}{\varrho}\sum_{t=0}^{T-1}\mE[\ltwo{\nabla_w J(w_t)}^2] + \frac{4C_{13}T}{\varrho N} + \frac{4C_{14}T}{\varrho}(\epsilon^2_\rho \epsilon^2_{d_q} + \epsilon^2_{d_\rho}\epsilon^2_q + \epsilon^2_\rho \epsilon^2_q),\nonumber
	\end{flalign}
	which implies
	\begin{flalign}
		\frac{1}{T}\sum_{t=0}^{T-1}\mE[\sqrt{\Delta_t}]&\leq\sqrt{\frac{1}{T}\sum_{t=0}^{T-1}\mE[\Delta_t]} \nonumber\\
		&\leq \sqrt{\frac{4\Delta_0}{\varrho T}} + \frac{2\sqrt{C_{12}}\alpha}{\sqrt{\varrho}}\sqrt{\mE[\ltwo{\nabla_w J(w_{\hat{T}})}^2]} + \sqrt{\frac{4C_{13}}{\varrho N}} + \sqrt{\frac{4C_{14}}{\varrho}}(\epsilon_\rho \epsilon_{d_q} + \epsilon_{d_\rho}\epsilon_q + \epsilon_\rho \epsilon_q)\nonumber\\
		&\overset{(i)}{\leq} \Theta\left(\frac{1}{\sqrt{T}}\right) + \Theta\left(\frac{1}{\sqrt{N}}\right) + \Theta(\epsilon_\rho \epsilon_{d_q} + \epsilon_{d_\rho}\epsilon_q + \epsilon_\rho \epsilon_q),\label{eq: 62}
	\end{flalign}
	where $(i)$ follows from \cref{eq: 64}. Substituting \cref{eq: 62} into \cref{eq: 60} yields
	\begin{flalign}
		\frac{1}{T}\sum_{t=0}^{T-1}\mE[\ltwo{G_{\text{DR}}(w_t,\mcm_t) - \nabla_w J(w_t)}]\leq \Theta\left(\frac{1}{\sqrt{T}}\right) + \Theta\left(\frac{1}{\sqrt{N}}\right) + \Theta(\epsilon_\rho \epsilon_{d_q} + \epsilon_{d_\rho}\epsilon_q + \epsilon_\rho \epsilon_q).\label{eq: 63}
	\end{flalign}
	
	Finally, we consider the term $\frac{1}{T}\sum_{t=0}^{T-1}\mE[\ltwo{G_{\text{DR}}(w_t,\mcm_t) - \nabla_w J(w_t)}^2] $. Taking the expectation on both sides of \cref{eq: 49} and taking the summation over $t=0,\cdots,T-1$ yield
	\begin{flalign}
		&\quad \frac{1}{T}\sum_{t=0}^{T-1}\mE\left[\ltwo{G_{\text{DR}}(w_t,\mcm_t) - \nabla_w J(w_t)}^2\right]\leq \frac{C_{11}}{T} \sum_{t=0}^{T-1}\mE[\Delta_t] + \frac{24 C^2_{10}}{N} + C_{12}(\epsilon^2_\rho \epsilon^2_{d_q} + \epsilon^2_{d_\rho}\epsilon^2_q + \epsilon^2_\rho \epsilon^2_q)\nonumber\\
		&\overset{(i)}{\leq} \frac{4C_{11}}{\varrho T}\Delta_0 + \frac{4C_{11}C_{12}\alpha^2}{\varrho T}\sum_{t=0}^{T-1}\mE[\ltwo{\nabla_w J(w_t)}^2] + \frac{4C_{11}C_{13}}{\varrho N} + \frac{4C_{11}C_{14}}{\varrho}(\epsilon^2_\rho \epsilon^2_{d_q} + \epsilon^2_{d_\rho}\epsilon^2_q + \epsilon^2_\rho \epsilon^2_q)\nonumber\\
		&\quad + \frac{24 C^2_{10}}{N} + C_{12}(\epsilon^2_\rho \epsilon^2_{d_q} + \epsilon^2_{d_\rho}\epsilon^2_q + \epsilon^2_\rho \epsilon^2_q)\nonumber\\
		&\overset{(ii)}{\leq} \Theta\left(\frac{1}{T}\right) + \Theta\left(\frac{1}{N}\right) + \Theta(\epsilon^2_\rho \epsilon^2_{d_q} + \epsilon^2_{d_\rho}\epsilon^2_q + \epsilon^2_\rho \epsilon^2_q),\label{eq: 65}
	\end{flalign}
	where $(i)$ follows from \cref{eq: 54}, and $(ii)$ follows from \Cref{thm2}.
	
	Substituting \cref{eq: 64}, \cref{eq: 58}, \cref{eq: 63} and \cref{eq: 65} into \cref{eq: 57} yields
	\begin{flalign*}
		J(\pi^*) - J(w_{\hat{T}})\leq \frac{{\epsilon_{approx}}}{1-\gamma} + \Theta\left(\frac{1}{\sqrt{T}}\right) + \Theta\left(\frac{1}{\sqrt{N}}\right) + \Theta(\epsilon_\rho \epsilon_{d_q} + \epsilon_{d_\rho}\epsilon_q + \epsilon_\rho \epsilon_q),
	\end{flalign*}
	which completes the proof.

\end{proof}

%

\end{document}